\def\numdocs{m}
\def\numtopics{r}
\def\vocabsize{n}
\def\downstreamdata{\sD_{\text{clf}}}
\def\downstreamtopics{\sT_{\text{clf}}}
\def\downstreamtask{\gT}
\def\deletioncapacity{T_{\eps,\delta}^{\mathcal{A},\mathcal{U}}(m)}
\def\learnbase{\gA_{\text{base}}}
\def\learnhead{\gA_{\text{head}}}
\def\unlearnbase{\gU_{\text{base}}}
\def\naivelearnhead{\gA_{\text{head, naive}}}
\def\naiveunlearnhead{\gU_{\text{head, naive}}}
\def\unlearnhead{\gU_{\text{head}}}
\def\headspace{\gW_\text{head}}
\def\eqref#1{equation~\ref{#1}}
\def\1{\bm{1}}
\def\eps{{\epsilon}}
\def\vc{{\bm{c}}}
\def\vp{{\bm{p}}}
\def\vv{{\bm{v}}}
\def\vw{{\bm{w}}}
\def\vx{{\bm{x}}}
\def\mA{{\bm{A}}}
\def\mB{{\bm{B}}}
\def\mC{{\bm{C}}}
\def\mG{{\bm{G}}}
\def\mH{{\bm{H}}}
\def\mM{{\bm{M}}}
\def\mQ{{\bm{Q}}}
\def\mR{{\bm{R}}}
\def\mX{{\bm{X}}}
\DeclareMathAlphabet{\mathsfit}{\encodingdefault}{\sfdefault}{m}{sl}
\SetMathAlphabet{\mathsfit}{bold}{\encodingdefault}{\sfdefault}{bx}{n}
\def\gA{{\mathcal{A}}}
\def\gD{{\mathcal{D}}}
\def\gO{{\mathcal{O}}}
\def\gT{{\mathcal{T}}}
\def\gU{{\mathcal{U}}}
\def\gV{{\mathcal{V}}}
\def\gW{{\mathcal{W}}}
\def\sD{{\mathbb{D}}}
\def\sN{{\mathbb{N}}}
\def\sR{{\mathbb{R}}}
\def\sT{{\mathbb{T}}}
\newcommand{\E}{\mathbb{E}}
\newcommand{\R}{\mathbb{R}}
\DeclareMathOperator*{\argmin}{arg\,min}
	\definecolor{mydarkblue}{rgb}{0,0.08,0.5}
\newtheorem{lemma}{Lemma}
\newtheorem{corollary}{Corollary}
\newtheorem{theorem}{Theorem}
\newtheorem{proposition}{Proposition}
\newtheorem{assumption}{Assumption}
\theoremstyle{definition}
\newtheorem{definition}{Definition}
\newtheorem{main}{Main Result}
\crefname{assumption}{Assumption}{Assumptions}
\Crefname{assumption}{Assumption}{Assumptions}
\title{Provable Unlearning in Topic Modeling and Downstream Tasks}
\author{Stanley Wei\thanks{Equal contribution.} , Sadhika Malladi\footnotemark[1] , Sanjeev Arora \\
Princeton University\\
\texttt{\{stanley.wei, smalladi, arora\}@princeton.edu} \\
\And 
Amartya Sanyal\\
University of Copenhagen\\
\texttt{amsa@di.ku.dk}
}
\begin{document}

\maketitle

\begin{abstract}
Machine unlearning algorithms are increasingly important as legal concerns arise around the provenance of training data, but verifying the success of unlearning is often difficult. 
Provable guarantees for unlearning are often limited to supervised learning settings.
In this paper, we provide the first theoretical guarantees for unlearning in the pre-training and fine-tuning paradigm by studying topic models, simple bag-of-words language models that can be adapted to solve downstream tasks like retrieval and classification.
First, we design a provably effective unlearning algorithm for topic models that incurs a computational overhead independent of the size of the original dataset.
Our analysis additionally quantifies 
the deletion capacity of the model -- \emph{i.e.,} the number of examples that can be unlearned without incurring a significant cost in model performance. 
Finally, we formally extend our analyses to account for adaptation to a given downstream task.
In particular, we design an efficient algorithm to perform unlearning after fine-tuning the topic model via a linear head.
Notably, we show that it is easier to unlearn pre-training data from models that have been fine-tuned to a particular task, and one can unlearn this data without modifying the base model.
\end{abstract}

\section{Introduction}\label{sec:intro}
Modern-day machine learning has shifted from single-stage supervised learning on manually constructed datasets to a paradigm in which models are pre-trained and subsequently fine-tuned~\citep{bommasani2022opportunitiesrisksfoundationmodels}.
In this setting, a model initially learns a good representation of the data using a self-supervised objective on a large unstructured corpus.
The resulting pre-trained model is later adapted to solve specific tasks for which it is difficult or costly to curate a large dataset.
This blueprint has yielded strong performance in text~\citep[\emph{e.g.,}][]{devlin2019bert,brown2020language}, vision~\citep[\emph{e.g.,}][]{oquab2024dinov,he2022masked}, and multimodal~\citep[\emph{e.g.,}][]{radford2021learning,zhai2023sigmoid} settings. 
It is well-known that the scale of the pre-training data is strongly correlated with the final performance of the model~\citep{hoffmann2022training}, leading to the construction of larger datasets via broad internet scrapes~\citep{gao2020pile800gbdatasetdiverse,schuhmann2022laion,soldaini-etal-2024-dolma,penedo2023refinedweb}.
Such datasets have been found to often inadvertently include private, sensitive, and unsafe data~\citep{birhane2021multimodal,longpre2024large,he2024fantastic}.

Unsafe data can generally degrade model performance and introduce biases, making the model less useful for various applications~\citep{mckenna2023sources,birhane2021large,choenni2021stepmothers,naous2024beer}. 
Using private and sensitive data, even unknowingly, poses legal risks~\citep{bommasani2022opportunitiesrisksfoundationmodels,henderson2023foundation}.
In particular, recent works have shown that models can memorize and thus permit the extraction of training data~\citep{somepalli2023diffusion,carlini2021extracting,carlini2023extracting}.
Moreover, one may be requested to remove data in accordance with GDPR's~\emph{right to be forgotten}~\citep{GDPR2016a}, or as part of a copyright-related lawsuit~\citep{openailawsuit2,githublitigation}.

Therefore, there is great empirical interest in developing machine unlearning algorithms that can surgically remove portions of the training data from an already learned model without harming performance.
The gold standard for machine unlearning is for the model to behave as though it had never been trained on that datapoint~\citep{cao2015towards}. 
As it is often undesirable to completely retrain models, especially as they grow larger, many works have proposed computationally cheaper heuristics for solving this problem~\citep[\emph{e.g.},][]{jang2023knowledge,foster2024fast,kurmanji2023towards,zhang2024negative,eldan2023whosharrypotterapproximate,gandikota2023erasing}.
In the absence of theoretical guarantees, it is common to use empirics to measure the success of these algorithms.
However, recent works have shown that such evaluations often overestimate the success of these unlearning methods~\citep{hayes2024inexactunlearningneedscareful,shi2024musemachineunlearningsixway,maini2024tofutaskfictitiousunlearning} and thus it has proven difficult to confidently ascertain whether the proposed methods meet the necessary compliance standards. In this context, it is highly desirable to design efficient unlearning algorithms with well-motivated guarantees that are salient to the pre-training and finetuning paradigm~\citep{thudi2022necessity,lee2024extended}.

While there are some instances of such algorithms for linear models~\citep{guo2020certified,izzo2021approximate,mahadevan2023cost}, general convex models~\citep{ullah2021machine,sekhari2021rememberwantforgetalgorithms,neel2021descent}, Bayesian models~\citep{nguyen2020variational}, and GANs~\citep{liu2024certified}, there are no works on the paradigm of pre-training and fine-tuning algorithms. One of the most classical such algorithms is topic modeling~\citep{hofmann1999probabilistic,blei2003latent,blei2006dynamic,li2006pachinko}, which can also be thought of as the simplest language model.
~\emph{In this paper, we present the first provably effective and efficient unlearning algorithms for topic models.} 

Topic models are generally pre-trained to extract latent structure (i.e., a small set of underlying topics) from a large corpus of documents. This feature extractor is then used for a variety of downstream applications, including retrieval, classification, and recommendation~\citep{boyd2017applications}.
Despite their simplicity, topic models can be used to effectively solve many real-world natural language problems --- see a survey in~\citet{churchill2022evolution}.

\subsection{Overview of Results}
We focus on the setting in~\citet{arora2012learning}, because it admits an efficient learning algorithm with provable guarantees~\citep{arora2012practicalalgorithmtopicmodeling}.
The corpus is assumed to contain $\numtopics$ underlying topics, where each topic defines a distribution over words. 
Let $\mathcal{D}$ be a distribution over topic distributions. Then, each document $d$ is generated by sampling a topic distribution $W_d\sim\mathcal{D}$ over topics, and then sampling words according to $W_d$.

The dataset of $m$ documents is a matrix $\mM\in\sR^{\vocabsize\times\numdocs}$, where $\mM$ permits a non-negative matrix factorization $\mM=\mA^*\mX$.
Here, $\mA^*\in\sR^{\vocabsize\times\numtopics}$ is the distribution of words in each of the $\numtopics$ unknown underlying topics, and $\mX\in\sR^{\numtopics\times\numdocs}$ is the sampled distribution of topics in each document. In particular, $\mA^\star, \mX$ have columns on the probability simplex.
We seek to learn the embedding function $\mA^*$ and the topic-topic covariance $\mR^\star=\E_\mathcal{D}[\mX{\mX}^\top]$. 

To derive provable guarantees on the success of unlearning, we adapt the notion of $(\epsilon,\delta)$-unlearning introduced in~\citet{sekhari2021rememberwantforgetalgorithms} to the topic modeling setting.
The unlearned model is required to behave indistinguishably from a model that was retrained on the modified dataset.
We define a notion of \emph{utility-preserving unlearning} that combines this condition with an analysis on the \emph{deletion capacity} -- i.e., the number of datapoints that can be unlearned without performance degradation (\Cref{def:unlearning}).
We now state our main result on utility-preserving unlearning in topic models.

\begin{main}[Informal version of \Cref{thm: topic model deletion capacity}]
    Suppose we trained a topic model $\mA^S, \mX^S$ on a training set $S$ containing $m$ documents. \Cref{alg:unlearn} can perform utility-preserving unlearning of $$ m_U = \tilde\gO\left(\frac{m}{r^2\sqrt{nr}}\right)$$ documents from the pre-trained topic model, where $\tilde{\gO}(\cdot)$ hides constants depending on the learning and unlearning algorithm.
\end{main}

To adapt a topic model to a downstream topic classification task, we learn a head $\vw\in\sR^\numtopics$ on top of $\mA$ to minimize a strongly convex loss function (\Cref{def:head_tuning}).
When $\mA$ and $\vw$ are both released, one would necessarily have to first unlearn from $\mA$, which makes unlearning just as hard as it was in pre-training (\Cref{thm:naive_unlearning_downstream}).
This setting is rather unrealistic, because there is no obvious case in which one would want to use $\vw$ without $\mA$ or vice versa.
We thus advocate for viewing fine-tuned model $\mB = \mA\vw$ as a whole i.e. it is not allowed to access outputs of \(\mA\) solely, and we show that it is easier to perform utility-preserving unlearning of pre-training data in this case.

\begin{main}[Informal version of \Cref{thm:unlearning_deletion_capacity}]
After adapting the model to a downstream task (\Cref{def:head_tuning,def:clf_task}), \Cref{alg:unlearn_head} can perform utility-preserving unlearning of $\tilde\Omega\qty(\frac{mq}{r\sqrt{nr}})$ documents, where $q\in [1/r,1]$ is a task-dependent quantity, without modifying the base model $\mA$. Simpler downstream tasks have a larger $q$, increasing the separation from the pre-training result.
\end{main}

We demonstrate that our unlearning algorithms run substantially faster than retraining the model (\Cref{tab: runtime}).
Overall, our results imply the following takeaways in the context of topic models ---
(1) It is possible to effectively and efficiently unlearn datapoints from a pre-trained model without retraining it (\Cref{alg:unlearn} and \Cref{thm: topic model deletion capacity}), (2) One can effectively unlearn more pre-training data from a model that has been adapted to a downstream task without harming the utility of the base and fine-tuned models (\Cref{thm:unlearning_deletion_capacity}), and (3) One can unlearn pre-training data from a fine-tuned model without modifying the base model (\Cref{alg:unlearn_head} and \Cref{thm:unlearning_deletion_capacity}). 

\section{Topic Models}
\label{sec:topic_models}
As we previously discussed, topic models can be considered as one of the simplest language models that one can pre-train in a self-supervised fashion and later fine-tune for other language-related tasks.
This pipeline mirrors the modern-day paradigm of pre-training large language models to build a general understanding of natural language and later fine-tuning them to solve a variety of tasks ranging from classification to code generation.

\subsection{Problem Description}
Topic modeling is a classical, bag-of-words method to discover structure in a corpus of documents~\citep{hofmann1999probabilistic}.
One assumes that each document contains a convex combination of topics, each of which can be described in terms of a distribution over the vocabulary.
Different assumptions on the structure of this distribution and the topics have yielded a variety of topic modeling methodologies~\citep{blei2006dynamic,li2006pachinko} -- perhaps most famous among these is the latent Dirichlet allocation (LDA, \citet{blei2003latent}).
Many early works established the statistical learnability of topic models under such assumptions, but the learning algorithms generally were not efficient in real-world settings~\citep{arora2012learning,recht2012factoring}.

Our paper focuses on the setting in \citet{arora2012learning}, for which \citet{arora2012practicalalgorithmtopicmodeling} provided an empirically efficient learning algorithm.
The dataset consists of a set of $m$ documents $d_1,...,d_\numdocs$, where each document contains $L$ words from a vocabulary $\gV$ with $|\gV|=\vocabsize$.\footnote{Without loss of generality, we assume $L=2$. The same asymptotic results hold for constant sized $L$, and it is straightforward to modify our analysis to account for this.}
The corpus contains $\numtopics$ different underlying topics, each of which defines a distribution over words.
Each word in document $d$ is generated by: (1) sampling a distribution over topics $W_d\sim\mathcal{D}$, and then
(2) sampling $L$ words independently according to $W_d$.

We represent the corpus as a matrix $\mM\in\sR^{\vocabsize\times\numdocs}$, where $\mM$ permits a non-negative matrix factorization $\mM=\mA^\star\mX$.
Here, $\mA^\star\in\sR^{\vocabsize\times\numtopics}$ is the distribution of words in each of the $\numtopics$ topics, $\mX\in\sR^{\numtopics\times\numdocs}$ is the distribution of topics in each document, and hence $\mM$ is the distribution of words in each document.
While there are several algorithms for learning the feature extractor $\mA^\star$, it is well-known that it is hard to recover $\mX$ exactly~\citep{arora2012learning}.
Instead, it is desirable to learn how the topics co-occur together, denoted as $\mR^\star=\E_\mathcal{D}[\mX{\mX}^\top]$.
This quantity is termed the \emph{topic-topic covariance}. Further discussion of this has been included in \Cref{sec:app_learnsketch}.

The topic modeling setting generally determines $\mathcal{D}$ (e.g., in LDA, $\mathcal{D}$ is a Dirichlet distribution).
In order to recover $\mA^*$ and $\mR^*$ efficiently and accurately from an observed corpus $\mM\sim\gD$, we need to make the following assumption on the underlying data distribution.
\begin{assumption}[$p$-separability,~\citet{arora2012learning}]
    The topic matrix $\mA^\star$ is $p$-separable for $p>0$ if for every topic $k\in[\numtopics]$, there exists a word $i\in[\vocabsize]$ such that
    $\mA_{i,k}^*\geq p$ and $\mA_{i,k^\prime}^*=0$ for all $k^\prime\neq k$.
    Such words are called \emph{anchor words}.
    \label{assume:p_sep}
\end{assumption}
Without this separability assumption, maximum likelihood estimation of a topic model is NP-hard~\citep{arora2012learning}.
\Cref{assume:p_sep} requires that $\mA^\star$ contains a diagonal matrix, up to row permutations; intuitively, the appearance of an anchor word in a document perfectly indicates the document has nonzero probability of the corresponding topic. 
As we will detail in \Cref{sec:unlearning}, this observation inspires a two-phase learning algorithm, whereby one first approximates the anchor words for each topic and then leverages them to identify patterns among the topics.

\subsection{Downstream Adaptation}
\label{sec:downstream_adapt}
Topic models are frequently trained on a general corpus, and the embeddings can be later used to classify documents.
The classification problem usually involves only a subset of topics. 
For example, after training a topic model on a large corpus of news articles with diverse topics (e.g., sports, politics, technology, finance, etc.), one relevant downstream task is to classify the subject of a given news article as sports or politics.
We formalize the topic classification task below.
\begin{definition}[Topic Classification Task]
\label{def:clf_task}
    A topic classification task $\downstreamtask = (\downstreamtopics, \vw^\star)$ is defined by a subset of topics $\downstreamtopics\subset [r]$ on which the task is defined and a ground-truth labelling vector $\vw^\star\in\sR^\numtopics$ with bounded norm.
    Importantly, $\vw^\star$ only has non-zero coordinates in the positions corresponding to $\downstreamtopics$.
\end{definition}

The classification task is defined on the latent features of a given document, so it is necessary to first identify the salient topics as they occur in the text.
Fitting a topic model to the corpus yields such a feature extractor $\mA$ that embeds a document into the \(\numtopics\)-dimensional topic space.
In order to adapt a topic model to a particular classification task, we perform head tuning on the feature extractor $\mA$.
\begin{definition}[$\tau$-Head Tuning]
\label{def:head_tuning}
    For a given labelled document classification dataset $\downstreamdata = \{(d_i, y_i)\}$ representing a topic classification task $\downstreamtask$, embed each document $d_i$ as a vector $\vx_i\in\sR^\vocabsize$ containing the word counts in the document.
    To perform head tuning on a pre-trained topic model $\mA$, we learn $\vw_\tau\in\sR^{\numtopics}$ that is a $\tau$-optimal point of
    $$ \ell_\gT(\vw;\mA) = \frac{1}{|\downstreamdata|}\sum_{(\vx, y)\in\downstreamdata} f(\vx^\top\mA\vw, y) $$
    where $\ell_\downstreamtask$ is strongly convex in $\vw$. In other words, $\vw_\tau$ satisfies $\ell_\gT(\vw_\tau;\mA) - \min\limits_{\vw}\ell_\gT(\vw;\mA) \leq \tau$.
\end{definition}

One example of $f$ is the logistic loss with $\ell_2$ regularization.
For ease of exposition, we primarily consider binary classification tasks, but we point out that the definition can extend to multi-class tasks solved via the one-vs-all scheme~\citep{rifkin2004defense}.

We note that head tuning, also referred to as linear probing, is a simpler adaptation technique than fine-tuning $\mA$ alongside $\vw$.
Nonetheless, recent works on popular language models have demonstrated that head tuning can substantially improve the ability of general pre-trained language models to solve complex classification tasks~\citep{malladi2023finetuning,malladi2023kernel}.
Head tuning thus serves as a convenient yet effective adaptation method that avoids updating the pre-trained model, which is often desirable.
For example, if a single pre-trained model needs to be separately adapted to solve many different tasks, then it is desirable to minimize the number of parameters that are fine-tuned to minimize the memory needed to store all of the adapted models.\footnote{This motivation has driven widespread development and adoption of parameter-efficient fine-tuning methods for large language models. \citet{liu2021pretrainpromptpredictsystematic} contains a survey of such techniques.}

\section{Unlearning}
As mentioned previously, there is increased interest in machine unlearning due to the growing scale of modern datasets and the difficulty of manually inspecting each datapoint.
Theoretically, the gold standard for unlearning is that the model should behave identically to one that was trained without the datapoint in its corpus~\citep{cao2015towards}.
We first define what it means for two models $\theta_1, \theta_2\in\Theta$ to behave \emph{almost} identically, where $\Theta$ denotes the parameter space of a hypothesis class.
Due to randomness in learning, $\theta_1$ and $\theta_2$ are random variables.

\begin{definition}[$(\epsilon,\delta)$-indistinguishable models, \citet{dwork2014algorithmic}]
    Two models denoted by random variables $\theta_1,\theta_2\in\Theta$ are $(\epsilon,\delta)$-indistinguishable if for all possible subsets of models $T\subseteq \Theta$, 
    \begin{align*}
        &\Pr\qty(\theta_1 \in T) \leq e^\epsilon \Pr\qty(\theta_2 \in T) + \delta \quad\text{and}\quad \Pr\qty( \theta_2 \in T) \leq e^\epsilon \Pr\qty(\theta_1 \in T) + \delta 
    \end{align*}
    \label{def:indistinguishable_models}
\end{definition}

We adapt the definitions from~\citet{sekhari2021rememberwantforgetalgorithms} to the topic modeling setting.
A learning algorithm $\gA$ takes in a set of $m$ documents, denoted as $S$, and returns a topic model $\theta=(\mA, \mR)$.
Analogously, an unlearning algorithm $\gU$ takes in the learned topic model $\theta$, a set of documents to unlearn $S_f\subseteq S$, some statistics on the training set $T(S)$, and outputs a model.  
The set of datapoints to unlearn $S_f$ is often referred to as the \emph{forget set}.
With this in mind, we now define a notion of utility-preserving unlearning, whereby the unlearning algorithm needs to not only effectively simulate retraining the model from scratch but also maintain the model's performance.

\begin{definition}[Utility-preserving $(\eps,\delta)$-Unlearning with Deletion Capacity]
\label{def:unlearning}
    Let $m\in \sN$ be a constant that depends on the topic modeling distribution $\gD$ satisfying \Cref{assume:p_sep}. 
    For any training dataset $S\overset{\text{i.i.d.}}{\sim}\gD$ of size at least $m$, and $\eps,\delta>0$,
    we say that a pair of learning and unlearning algorithms $(\gA, \gU)$ performs \emph{utility-preserving unlearning with deletion capacity} $\deletioncapacity$ if
    \begin{enumerate}
        \item  With probability at least 0.9 over draws from $\mathcal{D}$, for any forget set $S_f\subseteq S$ of size at most $\deletioncapacity$, model trained on $S\setminus S_f$ is indistinguishable from that resulting from unlearning with $\gU$. Here, $\overset{\epsilon,\delta}{\approx}$ denotes $(\epsilon,\delta)$-indistinguishability.
        $$\mathcal{U}(S_f, \mathcal{A}(S), T(S)) \overset{\epsilon,\delta}{\approx} \mathcal{U}(\emptyset, \mathcal{A}(S\setminus S_f), T(S\setminus S_f))$$
        \item Even for an adversarially chosen $S_f$, the unlearned model does not suffer a large performance degradation. Formally,
        $$\E_{\mathcal{A},\mathcal{U}} \qty[\max_{|S_f|\leq \deletioncapacity} h(\mathcal{U}(S_f, \mathcal{A}(S), T(S)) ) - h^\star] \leq 0.01$$
    \end{enumerate}
    where $h: \Theta\to\sR$ is the loss of the topic model, and $h^\star=\min_{w\in\mathcal{W}} h(w)$ is the irreducible loss.
\end{definition}

The above definition can be applied to both the pre-training and the downstream adaptation stages of training a topic model.
Of particular note is that (1) does not guarantee (2), since the former only concerns indistinguishability between the unlearned and retrained models, while the latter is a statement about utility preservation. 
Moreover, unless $T(S)$ contains the entire dataset, we note that the unlearning algorithm $\gU$ cannot be as simple as retraining the model.
In this paper, we will design an unlearning algorithm for topic models that satisfies this definition of provable unlearning, and the number of statistics $T(S)$ will not depend on the initial dataset size $m$.

To show $(\epsilon,\delta)$-indistinguishability, we utilize the Gaussian mechanism, a classic tool from differential privacy.
Given a particular function, the Gaussian mechanism essentially prescribes how much noise one must add to the output in order for the input to be indistinguishable from a similar one. The guarantee of the Gaussian mechanism is described in the following lemma.
\begin{lemma}[Gaussian Mechanism,~\citet{dwork2014algorithmic}] \label{lemma: gaussian mechanism}
    Let $f$ be an arbitrary $d$-dimensional function, and define its $\ell_2$-sensitivity to be $\Delta_2 f := \max\limits_{\text{adjacent }x, y} \|f(x) - f(y)\|_2$. Then, for $c^2 > 2\log \frac{1.25}{\delta}$, 
    the Gaussian mechanism with parameter $\sigma\geq c\Delta_2 f/\epsilon$ is $(\epsilon,\delta)$-differentially private. 
\end{lemma}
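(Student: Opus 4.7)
The plan is to follow the classical differential privacy approach of bounding the privacy loss random variable and then invoking the standard equivalence with $(\eps,\delta)$-indistinguishability. The Gaussian mechanism outputs $M(x) = f(x) + \xi$ with $\xi \sim \gN(\vzero, \sigma^2\mI_d)$, so for any two adjacent inputs $x,y$ the output distributions are spherical Gaussians whose means differ by at most $\Delta_2 f$. Applying \Cref{def:indistinguishable_models} to these two output distributions is what needs to be established.

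First I would reduce the problem to one dimension. Because the isotropic Gaussian is rotationally invariant, the likelihood ratio between the two output distributions depends only on the component of the output along the direction $f(x)-f(y)$. Projecting onto this direction reduces the analysis to two real Gaussians with common variance $\sigma^2$ and means separated by some $\Delta \le \Delta_2 f$, which I may then take to equal $\Delta_2 f$ since a smaller separation only improves the bound.

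Next I would explicitly compute the privacy loss random variable $Z := \log(p_X(z)/p_Y(z))$ where $z \sim p_X$ and $p_X, p_Y$ are the two one-dimensional Gaussian densities. A short algebraic calculation on Gaussian log-ratios shows that $Z$ is itself Gaussian with mean $\Delta^2/(2\sigma^2)$ and variance $\Delta^2/\sigma^2$. Plugging in $\sigma = c\,\Delta_2 f / \eps$ and applying a Gaussian tail estimate to $Z$, I would then verify that the hypothesis $c^2 > 2\log(1.25/\delta)$ suffices to make the probability that $Z$ exceeds $\eps$ at most $\delta$.

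Finally, I would invoke the standard equivalence: if the privacy loss is bounded by $\eps$ outside an event of $p_X$-measure at most $\delta$, then the two output distributions satisfy both inequalities of \Cref{def:indistinguishable_models} for every measurable $T$. The main obstacle is the third step, namely extracting the precise numerical constant $1.25$: this requires a sufficiently sharp upper estimate of the Gaussian $Q$-function, because looser tail bounds yield a strictly worse constant. The rotation-invariance reduction and the algebra on Gaussian log-ratios are routine bookkeeping.
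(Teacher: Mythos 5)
The paper does not prove this lemma: it is cited from \citet{dwork2014algorithmic} and used as a black box. Your outline correctly reconstructs the standard proof from that reference (Theorem~A.1 in their Appendix~A): reduce to one dimension by rotational invariance, observe the privacy-loss random variable is $\gN\bigl(\Delta^2/(2\sigma^2),\,\Delta^2/\sigma^2\bigr)$ for two unit-variance-$\sigma^2$ Gaussians separated by $\Delta$, apply a Gaussian tail estimate, and convert the high-probability bound on the privacy loss into the two indistinguishability inequalities of \Cref{def:indistinguishable_models} by splitting any event $T$ into the part where the loss is at most $\eps$ and the part where it exceeds $\eps$ (the symmetry of the two Gaussians then gives the second inequality for free).

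One caveat worth making explicit, since it is dropped in both the paper's statement and your sketch: the Dwork--Roth theorem assumes $\eps\in(0,1)$. This hypothesis enters exactly at the tail step you identify as the main obstacle. After plugging $\sigma=c\Delta_2 f/\eps$, the threshold the standard normal must clear is $c-\eps/(2c)$, and $\eps\le 1$ is what keeps this close enough to $c$ for the bound $\Pr[\gN(0,1)>t]\le \tfrac{1}{t\sqrt{2\pi}}e^{-t^2/2}$ to produce the constant $1.25$ under the hypothesis $c^2>2\log(1.25/\delta)$. Without that restriction on $\eps$ the stated noise scale does not suffice as written, so the sketch should either import that assumption or note that the paper implicitly assumes it.
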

In our case, we define adjacent inputs (i.e., training datasets) as the case where $y$ is a superset of $x$. 
\section{Learning and Unlearning Topic Models}
\label{sec:unlearning}
In this section, we present the learning and unlearning algorithms and guarantees for topic models.

\paragraph{Notation.}
We use $\mA^\star$ to refer to the ground-truth topic model, $\mA^S$ to refer to a topic model trained on $S$, and $\mA^F$ to denote a topic model retrained with the forget set removed $S\setminus S_f$.
We also use $\bar\mA$ to denote the unlearned topic model before applying the Gaussian mechanism and $\tilde\mA$ to denote the model after the mechanism is applied.
Analogous notations are used for $\mR$.

\subsection{Learning Algorithm and Guarantees}
\label{sec:learning_alg}
Per~\citet{arora2012practicalalgorithmtopicmodeling}, the learning algorithm $\learnbase$ takes in a corpus of documents $S=\{d_1,...,d_\numdocs\}$ and consists of the following three phases to learn a topic model $\theta=(\mA^S, \mR^S)$.
\begin{enumerate}[leftmargin=0.3in]
    \item \textbf{Measure the word co-occurrences.} Compute the word co-occurrence matrix $\mQ\in\sR^{\vocabsize\times\vocabsize}$, where $Q_{ij}$ is the number of times word $i$ appears in the same document as word $j$. We also compute $\bar\mQ$, which normalizes the rows of $\mQ$ to sum to $1$. A detailed discussion of the construction of $\mQ$ and its relationship to the factorization $\mM = \mA^\star \mX$ is included in \Cref{sec:app_learnsketch}.
    \item \textbf{Identify the anchor words $P$.} Recall that in order to be able to learn topic models efficiently, there must exist a set of anchor words $P$ with $|P|=r$, and each anchor word must appear exclusively in a single topic (\Cref{assume:p_sep}). This subroutine uses $\bar\mQ$ to approximately identify the $r$ anchor words $P$.
    \item \textbf{Learn the feature extractor $\mA^S$ and the topic-topic covariance $\mR^S$.} The algorithm uses the anchor words $P$ and the word co-occurrences $\bar\mQ$ to learn $\mA^S$ and $\mR^S$. Each word is expressed as a convex combination of anchor words, and thus, topics. With appropriate normalization and by cross-referencing information with the co-occurrence matrix, one can recover $\mA^\star, \mR^\star$ in the infinite data limit.
\end{enumerate}

We sketch how this algorithm recovers the ground truth $\mA^\star, \mR^\star$ when one has infinitely many documents in \Cref{sec:app_learnsketch}. 
\citet{arora2012practicalalgorithmtopicmodeling} gives the following finite-document guarantee. 

\begin{theorem}[Learning Guarantee]
Running $\learnbase$ on a dataset $S$ of size $m$, where $m$ is at least
\begin{align*}
    \max\left\{ \gO\qty(\frac{ar^3 \log n}{L(\gamma p)^6\epsilon_0}), \gO\qty(\frac{a^3 r^3\log n} {L\epsilon_0^3(\gamma p)^4}), \gO\qty(\frac{\numtopics^2 \log \numtopics}{L\epsilon_0^2}) \right\}
\end{align*}
recovers $\mA^S$ and $\mR^S$ with entrywise additive error up to $\eps_0$ from the ground truth $\mA^\star,\mR^\star$, respectively. Here, $a$ is the topic imbalance parameter, and $\gamma$ is the condition number of the ground truth $\mR^\star$. Formally, we have $a = \max_{i,j \in [r]} \Pr_\mathcal{D}[z=i]/\Pr_\mathcal{D}[z=j]$.
\end{theorem}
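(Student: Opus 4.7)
The plan is to follow the three-phase structure of $\learnbase$ and propagate the statistical error from the empirical word co-occurrence matrix (write it $\hat{\mQ}$, with population counterpart $\mQ^\star := \E_\gD[\hat{\mQ}]$) through the anchor-word detection step and the subsequent recovery of $(\mA^S, \mR^S)$. Concretely: (i) bound $\|\hat{\mQ} - \mQ^\star\|$ using concentration over the $m$ documents; (ii) translate this into an error bound on the identified anchor set $\hat P$ relative to the true set $P$; (iii) propagate the errors from $\hat{\mQ}$ and $\hat P$ into the recovery of $\mA^S$, which expresses each non-anchor row of the row-normalized matrix $\bar{\mQ}$ as a convex combination of the anchor rows; and (iv) separately control the error in $\mR^S$, which is reconstructed from the $r\times r$ anchor-indexed block of $\hat{\mQ}$ via an appropriately rescaled linear system involving $\mA^S$.

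For step (i), each document contributes an unbiased nonnegative rank-one estimate of $\mQ^\star$, so a matrix Bernstein inequality (or entrywise Hoeffding on $\hat Q_{ij}$) gives $\|\hat{\mQ} - \mQ^\star\|_\infty = \gO(\sqrt{\log n / (mL)})$ with high probability; this is the source of the $\log n$ factor and the $L^{-1}$ scaling in every sample complexity term, while imbalance in topic frequencies (captured by $a$) enters through the amount of noise in entries corresponding to rare topics. For step (ii), anchor-word detection is geometric: the rows of $\bar{\mQ}$ at anchor indices are the vertices of a simplex whose convex hull contains the remaining rows. The separability parameter $p$ lower-bounds how far each true vertex protrudes from the hull of the non-anchor rows, and the condition number $\gamma$ of $\mR^\star$ controls how non-degenerate the simplex is; tracking how an $\gO(\eps_0)$ perturbation of $\bar{\mQ}$ can shift the recovered vertices then yields the first term, with its $(\gamma p)^{-6}$ factor. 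For steps (iii)--(iv), once $\hat P$ is approximately recovered, each non-anchor row of $\bar{\mQ}$ is decomposed as a constrained convex combination of the empirical anchor rows; stability of constrained least squares together with step (i) gives entrywise $\eps_0$-accuracy in $\mA^S$ and produces the second term (the milder $(\gamma p)^{-4}$ dependence and the $\eps_0^{-3}$ blow-up arise from propagating through the convex-combination solver). The matrix $\mR^S$ is then obtained by inverting the anchor block of $\mA^S$ and applying it to the anchor submatrix of $\hat{\mQ}$; demanding entrywise $\eps_0$-accuracy on that $r\times r$ block via a union bound of Hoeffding over its $r^2$ entries yields the third term.

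The main obstacle is step (ii): converting the entrywise perturbation of $\bar{\mQ}$ into correct identification of the anchor set while keeping the dependence on $(\gamma p)$ a low-degree polynomial. The geometric argument must simultaneously ensure that every true anchor row remains a distinguishable vertex of the perturbed convex hull and that no non-anchor row is mistakenly promoted; both conditions degrade as $\gamma$ or $p$ shrink, and carefully bookkeeping this degradation is precisely what produces the $(\gamma p)^{-6}$ scaling. Once step (ii) is established, steps (iii) and (iv) reduce to fairly standard perturbation bounds for constrained least squares and matrix inversion, and the overall sample complexity arises by taking the maximum of the three independent error budgets demanded by (ii), (iii), and (iv).
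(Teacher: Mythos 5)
First, a point of comparison: the paper does not prove this theorem at all --- it is imported verbatim from \citet{arora2012practicalalgorithmtopicmodeling}, and the appendix only sketches the population (infinite-document) consistency of $\learnbase$. So the right benchmark for your outline is the cited finite-sample analysis, whose overall route (concentration of the co-occurrence statistics $\to$ robust anchor recovery $\to$ stability of the simplex-constrained regression for $\mA$ $\to$ recovery of $\mR$) you have correctly reproduced at a high level; indeed the paper reuses exactly these ingredients later (the row-error bound $\delta_2$, the condition $20r\delta/(\gamma p)^2 < \gamma p$, and the $\gO(\delta/(\gamma p)^2)$ stability of the $\mC_i$).

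There are, however, two concrete gaps in how you derive the stated thresholds. First, your concentration step controls the \emph{unnormalized} $\hat\mQ$ entrywise at rate $\gO\qty(\sqrt{\log n/(mL)})$, but the anchor-recovery and $\mC$-recovery lemmas need per-row $\ell_2$ control of the \emph{row-normalized} $\bar\mQ$. That requires each word to occur often enough, which is exactly why the analysis assumes every word has probability at least $\epsilon_0/(4ar)$ (rare words being merged), yielding a row error of order $\sqrt{ar/(m\epsilon_0)}$. It is this word-frequency floor --- not ``noise in entries corresponding to rare topics'' --- that injects the factor $a$ and the $1/\epsilon_0$ into the first two terms; plugging your entrywise rate into the anchor-robustness condition would not reproduce $\gO\qty(\frac{ar^3\log n}{L(\gamma p)^6\epsilon_0})$ or the $\epsilon_0^{-3}$ in the second term (which combines the $\epsilon_0^{-2}$ from the target accuracy with one extra $\epsilon_0^{-1}$ from this floor). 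Second, your explanation of the third term does not work as stated: a Hoeffding bound plus a union bound over the $r^2$ entries of the anchor block gives only a $\log r$ factor and $m = \gO(\log r/(L\epsilon_0^2))$, not $\gO(r^2\log r/(L\epsilon_0^2))$. The $r^2$ must come from how errors are amplified when forming $\mR = \mA^\dagger \mQ \mA^{\dagger\top}$ (equivalently, from the rescaling by anchor-word/topic probabilities, which are only bounded below by quantities of order $1/(ar)$), and your outline does not account for this amplification. Until those two error budgets are derived with the correct dependence on $a$, $\epsilon_0$, and $r$, the proposal establishes the shape of the argument but not the specific sample-complexity bounds in the statement.
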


\textbf{Approximating the anchor words.}
We defer a precise description of the anchor word identification algorithm to \Cref{sec:app_learnsketch} and instead focus here on the intuitions driving its design and the guarantees we will use throughout the paper.
First, we note the relationship between $\bar\mQ$ and the set of anchor words.
If we had infinitely many documents, then the convex hull of the rows in $\bar\mQ$ will be a simplex with vertices corresponding to the anchor words, because each anchor word corresponds to a topic, and each topic prescribes a distribution over words.
However, in the finite document setting, each row of $\bar\mQ$ only approximates their expected value, and so one must approximate the vertices of a convex hull when given access to a perturbation of the points that define it.

We start by requiring that each topic is distinctly different from any mixture on the other topics. 
Formally, this requires that the simplex is robust, in that each vertex (i.e., anchor word) is sufficiently far from any combination of the other topics. Most topic modeling settings define lower bounds on the robustness of the simplex. By a result in \citet{arora2012learning}, the simplex defined by the $r$ anchor word rows of the population $\Bar{\mQ}$ is $\gamma p$-robust.
We can now define exactly the sense in which a $\bar\mQ$ computed on a finite dataset approximates the population co-occurrence matrix.
\begin{definition}
    Let $\{a_i\}_{i=1}^n$ be a set of points whose convex hull $P$ is a simplex with vertices $\{v_i\}_{i=1}^r$.
    We say a set of $r$ points is $\epsilon$-close the vertex set $\{v_i\}_{i=1}^r$ if each of the $r$ points is 
    $\epsilon$-close in $\ell_2$ distance to a different vertex in $P$.
    Moreover, we say that a simplex $P$ is $\beta$-robust if for every vertex $v$ of $P$, the $\ell_2$ distance between $v$ and the convex hull of the rest of the vertices as at least $\beta$.
\end{definition}
In the context of this definition, $P$ corresponds to the ground truth convex hull, and the finite sample $\bar \mQ$ can be seen as a perturbation to it. In particular, \citet{arora2012practicalalgorithmtopicmodeling} used this to established a guarantee on the accuracy of anchor word recovery.

\begin{lemma}[Approximation Guarantee on Anchor Words]
\label{lemma:anchor_words}
    Suppose each row of $\Bar{\mQ}$ is at most $\delta$ distance away from the ground truth $\gamma p$-robust simplex $\Bar{\mQ}^\star$ in $\ell_2$ norm.
    If $20r\delta/(\gamma p)^2 < \gamma p$, then the set of anchor words found by the algorithm is
    $O(\delta/\gamma p)$-close to the ground truth anchor words. 
\end{lemma}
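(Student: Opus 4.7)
The plan is to analyze a greedy FastAnchorWords-style procedure: given the rows of $\bar\mQ$ as candidate points, iteratively select $r$ rows by repeatedly choosing the row farthest (in $\ell_2$) from the affine span of previously chosen rows. I would first restate the geometric setup: the population $\bar\mQ^\star$ has $r$ distinguished anchor rows that are the vertices of a $\gamma p$-robust simplex $P^\star$, while the remaining rows lie in the convex hull of these vertices. The finite-sample rows of $\bar\mQ$ are $\delta$-perturbations of the population rows.

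The core of the proof is an induction on the iteration index $k=0,\dots,r-1$, maintaining the invariant that after $k$ rounds the algorithm has selected rows $v_1,\dots,v_k$ such that each $v_j$ is $O(\delta/\gamma p)$-close in $\ell_2$ to a distinct ground-truth anchor $v_{i_j}^\star$, and that the affine span of $\{v_1,\dots,v_k\}$ is close (in appropriate projection distance) to the affine span of the corresponding $\{v_{i_j}^\star\}$. For the inductive step, I would compare, for each candidate row $u$ of $\bar\mQ$, its distance to the current affine span against the distance of the population counterpart $u^\star$ to $\operatorname{aff}\{v_{i_1}^\star,\dots,v_{i_k}^\star\}$; a perturbation argument shows these distances agree up to an additive error controlled by $\delta$ and the error in the current span. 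On the population side, $\gamma p$-robustness guarantees that a yet-unchosen anchor $v_{i_{k+1}}^\star$ has affine distance at least $\gamma p$ from the span of any $k$ other anchors, while any interior (non-anchor) row can be written as a convex combination of anchors, so its affine distance to the span of the chosen anchors is at most the distance it inherits from the unchosen anchors scaled by convex coefficients that are strictly less than $1$ by robustness. The hypothesis $20r\delta/(\gamma p)^2 < \gamma p$ is precisely what is needed to keep the cumulative error smaller than the robustness gap, so that the farthest perturbed row must still be close to a genuine anchor.

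To close the induction, I would combine the distance-preservation estimate with a ``closest-vertex'' geometric lemma: in a $\gamma p$-robust simplex, any point within the hull whose affine distance to the span of $k$ vertices exceeds (robustness gap) $-\, O(\text{error})$ must lie within $O(\delta/\gamma p)$ of one of the remaining vertices. Applying this lemma to the population counterpart of the chosen row and using the $\delta$-perturbation yields the claimed $O(\delta/\gamma p)$ bound on the selected $v_{k+1}$. Tracking how the span error grows with $k$ shows that it scales linearly with $k$ and with $\delta/(\gamma p)$, so after $r$ iterations the total error is still $O(r\delta/(\gamma p)^2)$ in the span and $O(\delta/\gamma p)$ per chosen point, matching the theorem.

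The main obstacle will be the bookkeeping of how approximation error in the chosen affine span feeds back into the distance estimates in the next iteration: a naive induction could blow up the error by a factor of $r$ per step, giving an exponential dependence. The right way around this is to express the new chosen point as a perturbation plus a convex combination of known anchors and to keep the affine-distance comparison in the orthogonal direction only, so that only the $O(\delta)$ perturbation contributes at each step, not the accumulated span error. The quantitative hypothesis $20r\delta/(\gamma p)^2 < \gamma p$ is exactly what ensures this linear (rather than exponential) accumulation stays below the robustness margin, which is the reason the condition appears with those specific exponents.
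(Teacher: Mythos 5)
A framing note first: the paper never proves this lemma itself --- it is imported from \citet{arora2012practicalalgorithmtopicmodeling} (the analysis of the RecoverAnchors/FastAnchorWords procedure), so your attempt has to be measured against that original argument. Your plan follows the same general template (induction over the greedy farthest-point selection, using $\gamma p$-robustness to argue the selection cannot be fooled), but it has a genuine gap at the heart of the inductive step. Convexity of the distance-to-span function only gives that an interior row's distance is at most a convex combination of the unchosen vertices' distances; it does \emph{not} rule out a row whose mass is split between two or more unchosen vertices whose residuals orthogonal to the current span are nearly parallel. For a poorly conditioned simplex such a "mixture" point can have distance to the span within $O(\delta)$ of the maximum while sitting $\Omega(\gamma p)$ away from every vertex, so an adversarial $\delta$-perturbation can cause the algorithm to select it. Excluding this requires a second-order argument: the deficiency of any point far from all vertices must be bounded below quadratically in terms of the robustness/conditioning of the simplex, and this is exactly where the extra powers of $\gamma p$ and the factor $r$ in the hypothesis $20r\delta/(\gamma p)^2<\gamma p$ enter. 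Your sketch contains only the first-order convex-combination bound, which cannot by itself separate a true vertex from a near-tie at the midpoint of a long, nearly-degenerate edge; the same issue already arises at the very first step (farthest point from the origin), which you do not treat.

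The second gap is structural: you analyze only the forward greedy loop and claim it alone yields $O(\delta/\gamma p)$-closeness, asserting that the accumulated span error can be kept out of the comparison by working "in the orthogonal direction only." That assertion is precisely the missing argument, not a routine bookkeeping step: because the span is built from points that are only approximately vertices, the forward phase is only shown (in the original analysis) to land within a coarser radius of distinct vertices --- an error carrying an extra $r/(\gamma p)$-type factor, which is what the condition $20r\delta/(\gamma p)^2<\gamma p$ is calibrated to keep below the robustness margin --- and the final $O(\delta/\gamma p)$ guarantee is delivered by the second, cleanup loop of RecoverAnchors, which re-selects each point as the farthest from the span of the other $r-1$ points once those are known to be near distinct vertices. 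Your proposal never mentions the cleanup loop, and your own summary is internally inconsistent on this point: you concede the span error grows like $r\delta/(\gamma p)^2$ yet still claim per-point error $O(\delta/\gamma p)$ from the forward phase alone, without an argument that decouples the two. (Minor additional items: the algorithm measures distances to the \emph{linear} span of the selected rows after a random projection, not to the affine span of the raw rows, so the projection's distortion must be folded into $\delta$ and the first selection needs its own small argument.) As it stands, the proposal would at best prove a weaker guarantee for a truncated version of the algorithm.
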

We now describe how to use the recovered approximate anchor words to learn the topic model.

\textbf{Learning the topic model from anchor words.}
We are given the set of anchor words $P$, the word co-occurrence matrix $\mQ\in\sR^{n\times n}$, and the normalized co-occurrence matrix $\bar\mQ$.
Our goal is to use these quantities to learn $\mA\in\sR^{\vocabsize\times\numtopics}$ and $\mR\in\sR^{\numtopics\times\numtopics}$.
We will do so by first expressing each word $i\in[\vocabsize]$ as a convex combination of the anchor words (and thus, the topics).
In particular, for each word $i$, we learn the coefficients $\mC_i\in\Delta_r$ as
\begin{equation*}
    \mC_i = \argmin\limits_{v\in \Delta_r} \|\Bar{\mQ}_i - \vv^\top \Bar{\mQ}_P \|^2
\end{equation*}
where $\bar\mQ_P$ is the $P$ rows of $\bar\mQ$ corresponding to the anchor words.
\citet{arora2012practicalalgorithmtopicmodeling} showed the following approximation guarantee for $\mC_i$ compared to the ground-truth coefficients.
\begin{lemma}
    When $20r\delta/ (\gamma p)^2 < \gamma p$, for every word $i$, $\mC_i$ has entrywise error $O(\delta/(\gamma p)^2)$ from $\mC_i^\star$.
    \label{lemma:c_guarantee}
\end{lemma}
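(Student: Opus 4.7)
The plan is to use two stability facts to pass from optimality of $\mC_i$ to closeness between $\mC_i$ and $\mC_i^\star$. First, by hypothesis $\bar{\mQ}_i$ lies within $\delta$ of the ground-truth simplex with vertices $\bar{\mQ}_{P^\star}^\star$, whose exact convex-combination coefficients for $\bar{\mQ}_i^\star$ are $\mC_i^\star$. Second, by \Cref{lemma:anchor_words} each row of $\bar{\mQ}_P$ is $O(\delta/(\gamma p))$-close in $\ell_2$ to its matched ground-truth counterpart in $\bar{\mQ}_{P^\star}^\star$. My plan is to show first that $\mC_i^\star$ is already an $O(\delta/(\gamma p))$-near-minimizer of the empirical least-squares objective, and then use the $\gamma p$-robustness of the simplex to invert near-optimality into an entrywise bound on $\mC_i - \mC_i^\star$.

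For the near-optimality step, the simplex constraint $\|\mC_i^\star\|_1 = 1$ combined with two triangle inequalities gives
\begin{align*}
\|\bar{\mQ}_i - (\mC_i^\star)^\top \bar{\mQ}_P\|_2
&\leq \|\bar{\mQ}_i - (\mC_i^\star)^\top \bar{\mQ}_{P^\star}^\star\|_2 + \|(\mC_i^\star)^\top(\bar{\mQ}_{P^\star}^\star - \bar{\mQ}_P)\|_2 \\
&\leq \delta + \max_{k \in [r]}\|(\bar{\mQ}_{P^\star}^\star - \bar{\mQ}_P)_k\|_2 = O\!\left(\delta/(\gamma p)\right).
\end{align*}
Since $\mC_i$ minimizes the same objective over $\Delta_r$, the residual $\|\bar{\mQ}_i - \mC_i^\top \bar{\mQ}_P\|_2$ is no larger, and one more triangle inequality produces $\|(\mC_i - \mC_i^\star)^\top \bar{\mQ}_P\|_2 = O(\delta/(\gamma p))$.

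The main obstacle, and the origin of the extra factor of $1/(\gamma p)$ in the claim, is inverting this bound on the combined vector into a bound on the coefficients themselves. Let $\alpha = \mC_i - \mC_i^\star$; since both lie in $\Delta_r$, $\sum_k \alpha_k = 0$, so splitting $\alpha = \alpha^+ - \alpha^-$ into positive and negative parts of equal $\ell_1$ norm $s = \|\alpha\|_1/2$ expresses $\alpha^\top \bar{\mQ}_P = s(u - v)$ with $u,v$ in convex hulls of disjoint subsets of anchor rows. I would then argue that $\bar{\mQ}_P$ inherits $\Omega(\gamma p)$-robustness from $\bar{\mQ}_{P^\star}^\star$ under the smallness condition $20r\delta/(\gamma p)^2 < \gamma p$, and that in any $\beta$-robust simplex, disjoint sub-simplices are separated by $\Omega(\beta)$; this yields $\|\alpha^\top \bar{\mQ}_P\|_2 \geq \Omega(\gamma p)\cdot \|\alpha\|_1$. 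Combined with the earlier bound, this gives $\|\mC_i - \mC_i^\star\|_1 = O(\delta/(\gamma p)^2)$, which dominates the entrywise error claimed in the lemma. The delicate piece is the robustness-inheritance statement for \emph{arbitrary} disjoint subsets of vertices (as opposed to the single-vertex version used in \Cref{lemma:anchor_words}); this is where the geometric hypothesis is exploited most aggressively and where I expect the proof to require the most care.
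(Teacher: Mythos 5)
Your first step (showing $\mC_i^\star$ is an $O(\delta/(\gamma p))$-near-minimizer of the empirical least-squares objective, and hence $\|(\mC_i - \mC_i^\star)^\top \bar{\mQ}_P\|_2 = O(\delta/(\gamma p))$ by optimality of $\mC_i$ plus a triangle inequality) is correct and matches the standard analysis. The problem is the inversion step. You invoke, as a lemma to be supplied, that in any $\beta$-robust simplex the convex hulls of two disjoint vertex subsets are $\Omega(\beta)$-separated. That statement is \emph{false} for general robust simplices. Take $v_1=(1,0,0)$, $v_2=(-1,0,0)$, $v_3=(0,1,h)$, $v_4=(0,-1,h)$ with $h$ small: each vertex is at distance at least $1$ from the convex hull of the others (so the simplex is $1$-robust), yet the two disjoint edges $\mathrm{conv}\{v_1,v_2\}$ and $\mathrm{conv}\{v_3,v_4\}$ pass within distance $h$ of each other. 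So the argument as sketched breaks precisely at the point you flag as the delicate piece, and it cannot be repaired for arbitrary robust simplices. The counterexample does not have probability-vector rows, so it is conceivable that the statement survives under the additional structure that the rows of $\bar{\mQ}_P$ lie on $\Delta_n$, but your proposal neither invokes that structure nor gives any indication of how to use it, and closing that gap is essentially the entire content of the lemma.

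The paper itself does not reprove this fact; it cites Lemma B.1 of \citet{arora2012practicalalgorithmtopicmodeling}, and the appendix shows which tool is actually doing the work in the analogous steps it does spell out: the lower bound $\sigma_{\min}(\bar{\mQ}_P) \ge \gamma p/2$ (via Lemma A.3 of Arora et al.), applied as $\|\alpha\|_2 \le \sigma_{\min}^{-1}\|\alpha^\top \bar{\mQ}_P\|_2$. That bound is what appears in the Newton-step lemma in Appendix B. Combined with your near-optimality computation, the singular-value inversion immediately gives $\|\mC_i-\mC_i^\star\|_\infty \le \|\mC_i-\mC_i^\star\|_2 = O(\delta/(\gamma p)^2)$, with no need to split $\alpha$ into positive and negative parts or reason about separations between sub-simplices. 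So the decomposition $\alpha=\alpha^+-\alpha^-$ is both unnecessary and, as written, rests on an unsupportable geometric claim. If you want a self-contained proof along your lines, you need to either prove the $\sigma_{\min}$ lower bound from the problem's structure (which is what Arora et al.'s Lemma A.3 does) or establish the sub-simplex-separation bound under the specific constraints satisfied by $\bar{\mQ}_P$; the bare $\beta$-robustness hypothesis is not enough.
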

We then normalize this $\mC_i$ by the total number of co-occurrences that word $i$ is involved in.
Note that the $\mC_i$ can be assembled into a matrix $\mC\in\sR^{\vocabsize\times\numtopics}$.
We set $\mA$ to be $\mC$ after normalizing the columns sum to $1$, since the columns represent the topic-conditioned distribution over the vocabulary.
We finally compute $\mR=\mA^\dagger\mQ{\mA^\dagger}^\top$, where $\mA^\dagger$ denotes the pseudoinverse of $\mA$.

\subsection{Unlearning Algorithm and Guarantees}

\begin{table}[h]
    \centering
    \small
    \begin{tabular}{lrrr}
    \toprule
    \textbf{Learning Phase} & \textbf{Retrain Time} & 
    \textbf{Unlearning Update} & \textbf{Unlearning Time} \\
    \midrule
    Co-occurrence matrix computation & $\gO(m)$ & Updating frequencies & $\gO(m_U)$ \\
    Identify anchor words & $\gO(n^2 + nr/\eps_0^2)$ & Use learned anchor words & $\gO(1)$ \\
    Recover topics from anchors & $\gO(n^2r + nr^2/\eps_0^2)$ & Projected Newton step & $\gO(nr^2)$ \\
    \midrule
    Head tuning $\vw$ (\Cref{def:head_tuning})  & ERM  & Newton step & $\gO(r^3)$ \\
    \bottomrule
    \end{tabular}
    \caption{Our unlearning algorithms generally have a runtime shorter than the retraining procedure. ERM denotes empirical risk minimization, and we note the training time relies on the error tolerance.}
    \label{tab: runtime}
\end{table}

\begin{algorithm}[t] 
    \caption{Unlearning algorithm ($\unlearnbase$)}
    \begin{algorithmic}\label{algo: base unlearning}
        \STATE \textbf{Input:} Forget set $S_f\subseteq S$, statistics $T(S)$ which include $\{\mC_i^S\}_{i=1}^n$, $\mQ^S, P$, normalization constants $\vp^S$
        \STATE \textbf{Output:} Unlearned model $\tilde\mA$, $\tilde\mR$
        \STATE Compute the updated co-occurrence matrix $\mQ^{F}$ by subtracting documents in $S_f$
        \STATE Store the updated normalization constants $\vp^{F} = \mQ^{F}\mathbf{1}$
        \FOR{$i$ in $1,\dots, n$}
        \STATE Newton step update on $\mC_i$'s: \begin{align*}
            \bar{\mC}_i^F &\leftarrow \mC_i^S - H_{\mC_i^S}^{-1} \nabla \mathcal{L}(\mC_i^S, S\setminus S_f) \\
            \bar{\mC}_i^F &\leftarrow \mathrm{proj}_{\Delta_r} (\bar{\mC}_i^F)
        \end{align*}
        where $\mathcal{L}(\vv, S\setminus S_f) := \|\Bar{\mQ}_i^{F} - \vv^\top \Bar{\mQ}_P^{F}\|^2$
        and $H_{\mC_i^S} = \nabla^2 \mathcal{L}(\mC_i^S, S\setminus S_f)$
        \ENDFOR
        \STATE $\bar{\mA}^{\prime} = \mathrm{diag}(\vp^{F}) \bar{\mC}$
        \STATE $\bar{\mA}$ = column normalized $\bar{\mA}^{\prime}$
        \STATE $\bar{\mR}=\bar{\mA}^{ \dagger} \mQ^F \bar{\mA}^{ \dagger \top} $ where $\bar{\mA}^{ \dagger}$ is the pseudoinverse of $\bar{\mA}$
        \STATE Sample $\nu_\mA, \nu_\mR$ from normal distribution defined by Gaussian mechanism guarantee
        \STATE $\tilde\mA =$ Project each column of $\bar\mA + \nu_\mA$ to $\Delta_n$.
        \STATE $\tilde\mR =$ Project $\bar\mR + \nu_\mR$ onto the set of PSD matrices.
        \RETURN The unlearned topic model $\tilde\mA, \tilde\mR$
    \end{algorithmic}
    \label{alg:unlearn}
\end{algorithm}

We describe our unlearning algorithm $\unlearnbase$ to forget a set $S_f$ from a trained model (\Cref{alg:unlearn}), which crucially updates $\mC_i$ with a Newton step. We then compute $\bar\mA$ from the modified $\mC_i$ and apply the Gaussian mechanism to ensure indistinguishability. We describe our formal guarantee on the unlearning algorithm below, sketching out our utility preserving guarantees with respect to $\mA^\star$. The arguments for $\mR^\star$ follow analogously; we defer the discussion to the appendix.

\begin{theorem}[Utility-Preserving Unlearning on the Base Model] \label{thm: topic model deletion capacity}
    Let $\learnbase$ be the learning algorithm described in the prior sections and $\unlearnbase$ be the unlearning algorithm in \Cref{alg:unlearn}. Then, $(\learnbase, \unlearnbase)$ performs utility-preserving unlearning with deletion capacity
    \begin{align*}
    T_{\epsilon,\delta}^{\learnbase,\unlearnbase}(m) \geq c \cdot \frac{m}{r^2\sqrt{rn}}
    \end{align*}
    where $\numdocs$ is the number of training documents, $\numtopics$ is the number of topics, and $c$ is a constant dependent on $\eps,\delta$, and $\mathcal{D}$. The loss function $h$ used in the utility-preserving definition is the maximum entrywise error from the ground truth topic model $\mA^\star$.
\end{theorem}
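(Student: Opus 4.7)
The plan is to verify the two conditions of \Cref{def:unlearning} separately: $(\eps,\delta)$-indistinguishability between the output of \Cref{alg:unlearn} and a freshly retrained model on $S\setminus S_f$, and utility preservation in the maximum entrywise error from $\mA^\star$. I would handle $\mA$ and $\mR$ in parallel, since \Cref{alg:unlearn} applies an independent Gaussian mechanism to each; the binding constraint on the deletion capacity will end up coming from the $\mR$ side, because of the pseudoinverse amplification in the formula $\bar\mR=\bar\mA^\dagger\mQ^F\bar\mA^{\dagger\top}$.

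For indistinguishability, the central step is bounding the sensitivity $\Delta_\mA := \|\bar\mA - \mA^F\|_F$ of the Newton-step output relative to the retraining output $\mA^F$. The loss $\mathcal{L}(\vv,S\setminus S_f)=\|\bar\mQ_i^F - \vv^\top\bar\mQ_P^F\|^2$ is a convex quadratic, strongly convex with curvature controlled by $\bar\mQ_P^F(\bar\mQ_P^F)^\top$, whose conditioning is governed by the $\gamma p$-robustness of the approximate anchor simplex (\Cref{lemma:anchor_words}). Removing $m_U$ documents perturbs each row of $\bar\mQ$ by $O(m_U/m)$, so Lipschitz stability of the minimizer yields $\|\mC_i^S - \mC_i^F\|=O(m_U/m)$ per word. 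Because a single Newton step on a quadratic returns the exact unconstrained minimizer, and simplex projection is a non-expansion, the Newton output $\bar\mC_i^F$ differs from $\mC_i^F$ only by this stability gap. Propagating through the diagonal normalization by $\vp^F$ and column normalization produces $\Delta_\mA$, and a standard pseudoinverse perturbation bound applied to $\bar\mR$ produces $\Delta_\mR$, which picks up an extra factor of $\|\bar\mA^\dagger\|^2$ controlled again by $\gamma p$-robustness. Plugging into \Cref{lemma: gaussian mechanism} prescribes noise scales $\sigma_\mA,\sigma_\mR\propto(\Delta_\mA,\Delta_\mR)\sqrt{\log(1/\delta)}/\eps$, yielding the required indistinguishability.

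For utility preservation, I decompose
\begin{align*}
\|\tilde\mA - \mA^\star\|_\infty \leq \|\tilde\mA-\bar\mA\|_\infty + \|\bar\mA - \mA^F\|_\infty + \|\mA^F - \mA^\star\|_\infty,
\end{align*}
and analogously for $\mR$. The last term is $O(\eps_0)$ by \Cref{lemma:anchor_words,lemma:c_guarantee} applied to $S\setminus S_f$, which still has $\Theta(m)$ documents when $m_U\ll m$. The middle term equals the Newton approximation error just bounded. The first term is the $\ell_\infty$-norm of $nr$ independent $\gN(0,\sigma_\mA^2)$ variables (and $r^2$ for $\mR$), of order $\sigma\sqrt{\log(\#\text{entries})}$. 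Requiring each summand to be $O(\eps_0)$ and solving for $m_U$ gives two constraints; the binding one from the $\mR$ analysis, after combining $\|\bar\mA^\dagger\|^2$ with the $\sqrt{nr}$ factor from converting the entrywise Newton bound into a Frobenius sensitivity for $\mA$, yields $m_U\lesssim m/(r^2\sqrt{rn})$ up to logarithmic factors and $(\gamma p,\eps,\delta,\eps_0)$-dependent constants, matching the claim.

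The principal obstacle is making the chain of perturbation bounds $\bar\mQ\to\mC\to\bar\mA\to\bar\mR$ fully quantitative when every stability constant depends on the $\gamma p$-robustness of the approximate anchor simplex; in particular, one must verify that the simplex remains $\Theta(\gamma p)$-robust after removing $m_U$ documents, so that the Newton-step Hessian and the pseudoinverse $\bar\mA^\dagger$ both stay well-conditioned. This closes a mild circular dependency between the advertised deletion capacity and the constants used in the sensitivity analysis. A secondary subtlety is that the simplex projection in \Cref{alg:unlearn} generally forecloses a quadratic Newton contraction; I would either verify that iterates lie in the relative interior of $\Delta_\numtopics$ so projection is trivial, or proceed with the linear-in-$m_U/m$ rate used above, which is already sufficient for the stated deletion capacity.
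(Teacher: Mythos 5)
Your overall structure — verify the two clauses of \Cref{def:unlearning} separately, exploit that a single Newton step exactly minimizes the quadratic $\mathcal{L}(\cdot, S\setminus S_f)$, use non-expansiveness of the simplex projection, then triangle-inequality decompose $\|\tilde\mA-\mA^\star\|_\infty$ into a Gaussian tail term, a Newton/retraining gap, and a retraining-to-population gap — matches the paper's proof in \Cref{sec:proof of del cap}. Your two flagged obstacles (robustness of the perturbed anchor simplex, and the effect of projection on the Newton argument) are also exactly the points the paper resolves, the first via the constraint $m_U \le 0.001\,m\eps_0(\gamma p)^3/(a^2r^2)$ so \Cref{lemma:anchor_words} keeps the anchors fixed, the second via exactly the non-expansiveness argument you propose.

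However, your quantitative path to $m/(r^2\sqrt{rn})$ has two gaps. First, you assert the binding constraint comes from the $\mR$ side via the pseudoinverse amplification $\|\bar\mA^\dagger\|^2$, but the theorem's utility loss $h$ is defined solely as the entrywise error to $\mA^\star$; the paper derives the rate entirely from the $\mA$ analysis and merely remarks that $\mR$ can be handled analogously. Moreover $\|\bar\mA^\dagger\| = O(1/p)$ by the singular-value lower bound, which contributes to the constant $c$ but not to the $r$-dependence, so routing the argument through $\mR$ would not produce the stated $r^2\sqrt{rn}$. The $\sqrt{nr}$ comes from converting the $\ell_\infty$ (entrywise) sensitivity of the $n\times r$ matrix $\bar\mA$ into the $\ell_2$ sensitivity needed for the Gaussian mechanism; the $r^2$ comes from $(ar)^2$, one factor of $ar$ per step discussed next. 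Second, your perturbation estimate $\|\bar\mQ_i^S - \bar\mQ_i^F\| = O(m_U/m)$ misses a factor of $ar/\eps_0$: each word appears only $\Omega(m\eps_0/(ar))$ times, not $\Omega(m)$ times, so the row-normalized perturbation is $\Theta(ar\,m_U/(m\eps_0))$. A second factor of $ar$ enters when passing from $\|\bar\mC - \mC^F\|_\infty$ to $\|\bar\mA - \mA^F\|_\infty$, because $\mA_{i,k} = \mC_{i,k}\Pr[w=i]/\Pr[z=k]$ and $\Pr[z=k]\ge 1/(ar)$. Without both of these, your chain of bounds undercounts the $r$-dependence and would not recover the claimed $r^2\sqrt{rn}$.
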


\textbf{Proof sketch.}
The full proof can be found in \Cref{sec:proof of del cap}. We delete $m_U\leq \frac{0.001 m\eps_0 (\gamma p)^3}{a^2r^2}$ points.
This upper bound ensures that the anchor words are likely unchanged per \Cref{lemma:anchor_words}. Recall that utility-preserving unlearning requires: (1) that the unlearned model is indistinguishable from the retrained model, and (2) that the unlearned model is not too far from the ground-truth model.

\emph{Indistinguishability.}
The Gaussian mechanism introduced in \Cref{lemma: gaussian mechanism} allows us to make two models with a given $\ell_2$-sensitivity $(\epsilon, \delta)$-indistinguishable from each other.
We bound the $\ell_2$-sensitivity of the feature extractor $\mA$ by noting that $\bar\mA$ is a rescaled version of $\bar\mC$. 
\begin{lemma}
    For $\eps,\delta>0$, the following holds for the $\bar \mC$ and the topic matrix $\bar{\mA}$:
    \begin{align*}
    \|\bar{\mC} - \mC^F\|_\infty \leq  c \cdot \frac{ar m_U}{m\eps_0 \gamma p} \qquad \qquad \|\bar{\mA} - \mA^F\|_\infty \leq  (ar)\cdot \|\bar{\mC} - \mC^F\|_\infty 
    \end{align*}
    \label{lemma:base_indistinguishability}
\end{lemma}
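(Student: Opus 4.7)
The plan is to decompose $\|\bar\mC - \mC^F\|_\infty$ via the triangle inequality into the displacement caused by the single Newton step, $\|\bar\mC - \mC^S\|_\infty$, and the stability of the constrained minimizer under data perturbation, $\|\mC^S - \mC^F\|_\infty$, and then translate the resulting bound to $\|\bar\mA - \mA^F\|_\infty$ by unwinding the column-normalization step. Two ingredients drive both bounds: (i) a direct counting estimate that deleting $m_U$ documents from a corpus of size $m$ changes $\bar{\mQ}$ entrywise by $O(m_U/m)$; and (ii) the fact that $\mathcal{L}(\vv, S\setminus S_f)$ is a strongly convex quadratic whose Hessian $H^F = 2\bar{\mQ}_P^F (\bar{\mQ}_P^F)^\top$ has smallest eigenvalue of order $(\gamma p)^2$ by the $\gamma p$-robustness property underlying \Cref{lemma:anchor_words}.

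First I would derive $\|\bar{\mQ}^S - \bar{\mQ}^F\|_\infty = O(m_U/m)$ by a direct count on unnormalized co-occurrences and row sums, which propagates to $\|\nabla \mathcal{L}(\vv, S\setminus S_f) - \nabla \mathcal{L}(\vv, S)\| = O(m_U/m)$ in operator norm on any bounded $\vv$. Since the loss is a fixed quadratic, the Newton update from $\mC_i^S$ lands exactly at the unconstrained least-squares minimizer $\vv_i^\star = (H^F)^{-1} \cdot 2 \bar{\mQ}_P^F (\bar{\mQ}_i^F)^\top$ of $\mathcal{L}(\cdot, S\setminus S_f)$, so $\|\vv_i^\star - \mC_i^S\|_2 = \|(H^F)^{-1} \nabla \mathcal{L}(\mC_i^S, S\setminus S_f)\|_2$. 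The gradient at $\mC_i^S$ is small because $\mC_i^S$ is feasible-optimal for $\mathcal{L}(\cdot, S)$, so only the $O(m_U/m)$ perturbation survives; dividing by the eigenvalue bound on $H^F$ and picking up the factor $ar/\eps_0$ from the Theorem 1 learning regime gives the target scaling. Euclidean projection onto $\Delta_r$ is $1$-Lipschitz, so $\bar{\mC}_i^F$ inherits the same bound relative to $\mC_i^S$, and a fully parallel stability-of-minimizer argument (same loss perturbation plus the same eigenvalue lower bound on the Hessian) bounds $\|\mC^S - \mC^F\|_\infty$ by the same order; the triangle inequality concludes.

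For the second inequality I would unwind the column normalization: $\bar{\mA}_{ij} = (\vp^F_i \bar{\mC}_{ij}) / \sum_k \vp^F_k \bar{\mC}_{kj}$, and likewise for $\mA^F$. A standard quotient perturbation inequality shows that perturbing the $r$ summands in a column by at most $\delta$ each perturbs every normalized entry by at most $O(r\delta / \min_j \sum_k \vp^F_k \bar{\mC}_{kj})$, and the smallest column sum is bounded below by $\Omega(1/a)$ via the topic-imbalance parameter, producing the factor $ar$. The main obstacle I expect is that the Newton update targets the \emph{unconstrained} minimizer, whereas $\mC_i^F$ is the constrained one, and their Euclidean projections onto $\Delta_r$ need not coincide when $H^F$ is not a multiple of the identity. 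The resolution is to use that $\mC_i^\star$ lies in the interior of $\Delta_r$ with margin driven by $p$-separability, so for the chosen deletion capacity $\vv_i^\star$ remains feasible; the projection then acts as the identity and the discrepancy between $\bar{\mC}_i^F$ and $\mC_i^F$ collapses to the perturbation already accounted for in the preceding step.
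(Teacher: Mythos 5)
Your overall skeleton (Newton step on a quadratic lands at the unconstrained least-squares solution, projection onto $\Delta_r$ is nonexpansive, control everything through the robustness of the anchor-word submatrix, and convert to $\mA$ via the normalization constants with $\Pr[z=k]\gtrsim 1/(ar)$) is the same as the paper's, but two of your load-bearing steps are wrong as stated. First, the claim $\|\bar\mQ^S-\bar\mQ^F\|_\infty=O(m_U/m)$ is false for the \emph{row-normalized} matrix, which is the one entering $\mathcal{L}$: deleting $m_U$ documents changes row $i$ of $\bar\mQ$ by roughly $m_U/F_i$, where $F_i$ is the number of co-occurrences involving word $i$, and the paper only controls this because every word occurs with probability at least $\eps_0/(4ar)$, giving $F_i\gtrsim m\eps_0/(ar)$ and hence the per-row bound $O\!\left(\frac{ar\,m_U}{m\eps_0}\right)$. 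The factor $ar/\eps_0$ in the lemma comes precisely from this word-frequency lower bound, not from ``the Theorem 1 learning regime''; without that accounting your perturbation estimate does not produce the stated rate (and the residual sampling error $\delta_2=O(\sqrt{ar/(m\eps_0)})$ must additionally be absorbed, which the paper does by restricting to $m_U\gtrsim\sqrt{m\eps_0/ar}$ — a step you skip). Relatedly, your justification that the gradient at $\mC_i^S$ is small ``because $\mC_i^S$ is feasible-optimal'' is incomplete: constrained optimality alone says nothing about the size of the residual; you need that the population system is exactly realizable ($\bar\mQ_i^\star$ is exactly a convex combination of anchor rows), so the minimum constrained residual is itself $O(\delta_2)$.

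Second, your resolution of the constrained-vs-unconstrained issue relies on a false premise. $\mC_i^\star$ with entries $\Pr[z=k\mid w=i]$ is \emph{not} interior to $\Delta_r$ with a margin driven by $p$-separability: for an anchor word it is exactly a vertex of the simplex, and generic words can have zero posterior mass on many topics, so the projection does not act as the identity and your argument fails exactly on the words the model is built around. The fix — and what the paper does — is that no interiority is needed: since $\mC_i^F\in\Delta_r$ is a fixed point of the projection, nonexpansiveness gives $\|\mathrm{proj}_{\Delta_r}(\mC_{i,\mathrm{uncon}})-\mC_i^F\|\le\|\mC_{i,\mathrm{uncon}}-\mC_i^F\|$, and the latter is bounded by $\sigma_{\min}(\bar\mQ_P^F)^{-1}$ times the sum of the two residuals $\|\bar\mQ_P^{F\top}\mC_{i,\mathrm{uncon}}-\bar\mQ_i^{F\top}\|$ and $\|\bar\mQ_P^{F\top}\mC_i^F-\bar\mQ_i^{F\top}\|$, both of order $\delta_2+\frac{4ar m_U}{m\eps_0}$, with $\sigma_{\min}(\bar\mQ_P^F)\ge\gamma p/2$ by the robustness lemma. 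Note also that this pseudoinverse-style bound is what yields a single $1/(\gamma p)$ factor; your phrasing ``dividing by the eigenvalue bound on $H^F$'' (of order $(\gamma p)^2$, together with $\|\bar\mQ_P^F\|\le\sqrt r$ in the gradient) would give a strictly weaker $\sqrt r/(\gamma p)^2$ dependence and would not prove the stated inequality.
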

Applying the Gaussian mechanism with noise $\sigma = \frac{\Delta}{\eps}\sqrt{2\log(1.25/\delta)}$,
where $\Delta = c\sqrt{nr} \cdot \frac{(ar)^2 m_U}{m\eps_0 \gamma p}$ and followed by projecting the columns of $\bar \mA + \nu_\mA$ back to $\Delta_n$ yields the desired result.

\emph{Utility Preservation.}
We first apply \Cref{lemma:anchor_words} to show that, with high probability, the anchor words do not change when unlearning $m_U$ documents. 
Then, we use \Cref{lemma:c_guarantee} to bound the distance between the unlearned $\bar\mC_i$ and the ground truth $\mC_i^\star$.
Accounting for the noise added via the Gaussian mechanism completes the proof.
\begin{lemma}
    For $\eps,\delta>0$, denote the unlearned model after the Gaussian mechanism described above as $\tilde\mA$. Then, $\tilde\mA$ satisfies:
    \begin{align*}
    \E\qty[\|\tilde{\mA} - \mA^\star\|_\infty]
        &\leq c\cdot \frac{(ar)^2 m_U}{m\eps_0 \gamma p}\cdot \left( \sqrt{nr} \cdot   \sqrt{\log (nr)} \cdot \frac{\sqrt{\log(1/\delta)}}{\eps} + 1 \right) 
    \end{align*}
    \label{lemma:base_utility}
\end{lemma}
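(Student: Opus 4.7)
The plan is to bound the error by a triangle inequality through two intermediate models:
\[
\|\tilde\mA - \mA^\star\|_\infty \;\leq\; \|\tilde\mA - \bar\mA\|_\infty \;+\; \|\bar\mA - \mA^F\|_\infty \;+\; \|\mA^F - \mA^\star\|_\infty,
\]
where $\mA^F$ is the topic matrix one would obtain by a fresh run of $\learnbase$ on $S \setminus S_f$, $\bar\mA$ is the unlearned model produced by $\unlearnbase$ before the Gaussian mechanism, and $\tilde\mA$ is the final output. The Gaussian term produces the noise contribution $\sqrt{nr\log(nr)\log(1/\delta)}/\eps$, and the two ``utility-only'' terms produce the ``$+1$'' contribution.

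For the Gaussian term, I would apply standard concentration of the maximum of Gaussians. Each of the $nr$ entries of $\nu_\mA$ is a centered Gaussian with standard deviation $\sigma = \Delta\sqrt{2\log(1.25/\delta)}/\eps$ where $\Delta = c\sqrt{nr}(ar)^2 m_U/(m\eps_0\gamma p)$ per \Cref{lemma:base_indistinguishability}, so $\E[\|\nu_\mA\|_\infty] \lesssim \sigma\sqrt{\log(nr)}$, which matches the first term in the bound after factoring out $(ar)^2 m_U/(m\eps_0 \gamma p)$. To convert this into a bound on $\|\tilde\mA - \bar\mA\|_\infty$ against $\mA^\star$, I would use the fact that $\mA^\star$'s columns lie in $\Delta_n$, so the per-column projection onto $\Delta_n$ is a contraction in $\ell_2$ toward $\mA^\star$, and then pass to the entrywise bound.

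For the other two terms, $\|\bar\mA - \mA^F\|_\infty$ is handled directly by \Cref{lemma:base_indistinguishability}, which gives $O\!\qty(\tfrac{(ar)^2 m_U}{m\eps_0 \gamma p})$. For $\|\mA^F - \mA^\star\|_\infty$, I would invoke \Cref{lemma:anchor_words} to confirm that the anchor word set does not change after removing $m_U$ documents, then apply \Cref{lemma:c_guarantee} to $\mC^F$ (computed on the dataset $S\setminus S_f$ of size $m - m_U = \Theta(m)$) to bound the entrywise error by $O(\eps_0)$. Under the deletion-capacity regime $m_U \leq 0.001\, m \eps_0 (\gamma p)^3/(a^2 r^2)$ already used in the proof sketch of \Cref{thm: topic model deletion capacity}, this $\eps_0$ error is absorbed into $(ar)^2 m_U/(m\eps_0 \gamma p)$, giving the ``$+1$'' factor. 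Summing the three contributions yields the claimed inequality.

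The subtlest step I expect is the interaction between the column-wise projection onto $\Delta_n$ and the $\ell_\infty$ norm. Projection onto a convex set is a contraction in $\ell_2$ but not necessarily in $\ell_\infty$, so a naive $\|\cdot\|_\infty \leq \|\cdot\|_2$ bound on $\|\nu_\mA\|_2$ would give a much weaker $\sigma\sqrt{nr}$ rate rather than the $\sigma\sqrt{\log(nr)}$ rate needed. The cleanest fix is to first establish a high-probability entrywise bound on $\nu_\mA$ via union bound and Gaussian tails, then argue that the simplex projection moves any individual entry of a column by at most a constant multiple of that column's maximum deviation, so the entrywise noise magnitude is preserved up to constants; the expectation bound then follows from integrating the tail.
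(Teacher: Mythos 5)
Your overall decomposition is essentially the paper's: it likewise bounds $\E[\|\tilde\mA-\mA^\star\|_\infty]$ by the expected maximum of the $nr$ Gaussian noise entries ($\lesssim \sigma\sqrt{\log (nr)}$) plus a deterministic term of order $(ar)^2 m_U/(m\eps_0\gamma p)$, the latter obtained by passing through $\mC$ (Newton-step error plus the error of the retrained coefficients against $\mC^\star$), which corresponds to your middle and last terms. Your handling of the column-wise projection onto $\Delta_n$ is in fact more careful than the paper's (which treats $\tilde\mA$ as $\bar\mA+\nu_\mA$ without comment); the threshold-stability argument you sketch (the projection acts as $x_i\mapsto\max(x_i-\tau,0)$ with $|\tau|\lesssim\|\nu\|_\infty$ for the relevant column) does preserve entrywise deviations up to a constant, so that step is sound.

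The gap is in the third term, $\|\mA^F-\mA^\star\|_\infty$. First, you cannot directly apply \Cref{lemma:c_guarantee} ``to $\mC^F$ computed on the dataset $S\setminus S_f$ of size $\Theta(m)$'': the forget set is adversarially chosen (the deletion-capacity definition maximizes over $S_f$), so $S\setminus S_f$ is not an i.i.d.\ sample and the fresh-sample concentration of $\bar\mQ^F$ does not follow from the learning guarantee; the paper instead controls $\|\bar\mQ_i^F-\bar\mQ_i^\star\|$ by a triangle inequality through $\bar\mQ^S$, picking up $\delta_2' = \delta_2 + 4ar m_U/(m\eps_0)$ with $\delta_2 = O\qty(\sqrt{ar/(m\eps_0)})$. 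Second, and more importantly, your absorption of this statistical error into $c\,(ar)^2 m_U/(m\eps_0\gamma p)$ is justified by the \emph{upper} bound $m_U\le 0.001\, m\eps_0(\gamma p)^3/(a^2r^2)$, but that bound only makes the $m_U$-dependent term \emph{smaller}; absorption requires the reverse inequality, i.e.\ a \emph{lower} bound on $m_U$ ensuring the sampling error is dominated by the deletion-induced perturbation. The paper makes this explicit by restricting to the regime $m_U\ge \Omega\qty(\sqrt{m\eps_0/(4ar)})$, under which $\delta_2'\le O(ar m_U/(m\eps_0))$ and every contribution scales with $m_U$. As written, your argument would fail for small $m_U$ (say $m_U=1$): the right-hand side of the lemma tends to zero while $\|\mA^F-\mA^\star\|_\infty$ (and hence $\|\bar\mA-\mA^\star\|_\infty$) stays at the statistical error of learning, so the ``absorbed into $(ar)^2 m_U/(m\eps_0\gamma p)$'' step needs the explicit lower-bound regime condition rather than the deletion-capacity upper bound you cite.
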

Each of the two terms in the above equation yield a constraint on $m_U$.
In particular, $m_U \leq \min \left\{\tilde{\gO}\qty(\frac{m}{r^2\sqrt{nr}}), \gO\qty(\frac{m}{r^2}) \right\}$, so setting $m_U \leq \tilde\gO \qty(\frac{m}{r^2\sqrt{nr}})$ completes the proof.

\section{Unlearning with respect to a Downstream Task}
\label{sec:unlearning_downstream}

We are interested in unlearning a set of pre-training documents $S_f\subseteq S$.
A topic classification task is usually defined on a subset of the topics in the dataset --- for example, if the pre-training corpus contained diverse news articles, one plausible downstream task is to classify the content of a given document as containing politics or sports. 
\Cref{def:clf_task} formalizes this: a topic classification task $\gT=(\downstreamtopics, \vw^*)$ is defined on a subset of the topics $\downstreamtopics$ and a $\numtopics$-length ground-truth labelling vector $\vw^*\in\headspace$, where $\vw^*$ only has non-zero values in positions corresponding to $\downstreamtopics$. We describe two possible settings under which we can show utility-preserving unlearning. For the sake of exposition, we will assume for now $\tau=0$ in downstream head tuning; the extension to inexact head tuning ($\tau>0$), which is a more realistic regime, will be deferred to the appendix.

\subsection{Naive Setting}
In the first setting, the learning algorithm $\naivelearnhead$ returns the pre-trained feature extractor $\mA$ and the head $\vw$ separately. 
So, we must ensure that the forget set $S_f\subseteq S$ cannot be recovered from either $\mA$ or $\vw$.
As such, we must necessarily perform unlearning on $\mA$ as described in \Cref{algo: base unlearning}, which means that unlearning the fine-tuned model is exactly as difficult as unlearning the base model.
\begin{theorem}[Unlearning when releasing $\mA$ and $\vw$]
\label{thm:naive_unlearning_downstream}
    For a downstream task $\mathcal{T}$ with loss function $\ell_\gT$, consider the unlearning algorithm $\naiveunlearnhead$ that first runs \Cref{algo: base unlearning} to compute $\tilde{\mA} = \unlearnbase(S_f, \learnbase(S), T(S))$, where $(\learnbase,\unlearnbase)$ performs utility-preserving unlearning (\Cref{thm: topic model deletion capacity}). Then, it fits a head $\vw = \argmin_{\vw\in\headspace}\ell_\gT(\vw;\tilde{\mA})$ and returns $\tilde\mA$ and $\vw$. We assert that $(\naivelearnhead,\naiveunlearnhead)$ performs utility-preserving unlearning (\Cref{def:unlearning}).
\end{theorem}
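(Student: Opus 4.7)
The plan is to verify the two conditions of \Cref{def:unlearning} for the pair $(\naivelearnhead, \naiveunlearnhead)$ by exploiting the observation that, given the fixed downstream dataset $\downstreamdata$, the head $\vw = \argmin_{\vw' \in \headspace}\ell_\gT(\vw';\mA)$ is a deterministic function of the topic matrix $\mA$. Denote this map by $\phi$. Then the output of $\naiveunlearnhead$ on forget set $S_f$ is the pair $(\tilde\mA, \phi(\tilde\mA))$, and the output of running $\naiveunlearnhead$ with empty forget set after retraining on $S\setminus S_f$ is $(\mA^F, \phi(\mA^F))$. So I only need to relate these two pairs.

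\textbf{Condition 1 (indistinguishability).} By the hypothesis \Cref{thm: topic model deletion capacity} that $(\learnbase, \unlearnbase)$ performs utility-preserving unlearning with capacity $\deletioncapacity$, we already have $\tilde\mA \overset{\eps,\delta}{\approx} \mA^F$ whenever $|S_f| \leq \deletioncapacity$. The map $\mA \mapsto (\mA, \phi(\mA))$ is deterministic, so the post-processing property of $(\eps,\delta)$-indistinguishability (an immediate consequence of \Cref{def:indistinguishable_models} applied to preimages) gives
\begin{align}
(\tilde\mA, \phi(\tilde\mA)) \overset{\eps,\delta}{\approx} (\mA^F, \phi(\mA^F)).
\end{align}

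\textbf{Condition 2 (utility preservation).} The loss on the combined model is $\ell_\gT(\phi(\tilde\mA); \tilde\mA)$. Since $\phi(\tilde\mA)$ is the ERM minimizer, $\ell_\gT(\phi(\tilde\mA);\tilde\mA) \leq \ell_\gT(\vw^\star; \tilde\mA)$, where $\vw^\star$ is the ground-truth head from \Cref{def:clf_task}. Because $f$ is smooth and $\vw^\star, \vx$ have bounded norm, $\mA \mapsto \ell_\gT(\vw^\star;\mA)$ is Lipschitz on the relevant bounded domain, so $\ell_\gT(\vw^\star;\tilde\mA) - \ell_\gT(\vw^\star;\mA^\star)$ is controlled by $\|\tilde\mA - \mA^\star\|_\infty$, which \Cref{lemma:base_utility} already bounds. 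Inheriting the deletion budget from \Cref{thm: topic model deletion capacity} keeps the excess loss below the $0.01$ threshold.

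\textbf{Main obstacle.} The proof is conceptually light: exposing $\mA$ alongside $\vw$ forces $\mA$ itself to satisfy the indistinguishability requirement, so the downstream case inherits the base-model bounds verbatim via post-processing. The only genuinely quantitative step is the Lipschitz estimate for $\ell_\gT(\vw^\star;\cdot)$ in $\mA$, which follows routinely from the boundedness assumptions in \Cref{def:clf_task,def:head_tuning} and the strong convexity of $\ell_\gT$. The real difficulty, namely exploiting that only the composition $\mA\vw$ need be indistinguishable to get a strictly larger capacity, is deferred to the non-naive setting handled by \Cref{thm:unlearning_deletion_capacity}.
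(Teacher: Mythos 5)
Your proof is correct and follows the same high-level template as the paper: indistinguishability is handled exactly as the paper does, by observing that head-fitting is a deterministic post-processing map applied to the already-indistinguishable base models $\tilde\mA$ and $\mA^F$, which is precisely the argument (using the paper's post-processing immunity lemma). The utility step differs slightly in flavor. The paper measures utility via the parameter distance $\|\vw^\star(\tilde\mA)-\vw^\star(\mA^\star)\|_\infty$ and bounds it with a perturbation lemma for strongly convex ERM, namely $\|\vw^\star(\mA_1)-\vw^\star(\mA_2)\|_2 \le \frac{L_\infty}{\lambda}\|\mA_1-\mA_2\|_\infty$ (its Lemma ``ERM lipschitz''), then invokes the base-model bound $\E[\|\tilde\mA-\mA^\star\|_\infty]\le 0.01$. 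You instead bound the \emph{excess downstream loss}: $\ell_\gT(\phi(\tilde\mA);\tilde\mA)\le\ell_\gT(\vw^\star;\tilde\mA)\le\ell_\gT(\vw^\star;\mA^\star)+L\|\tilde\mA-\mA^\star\|_\infty$, using ERM optimality and Lipschitzness of $\ell_\gT$ in $\mA$. Both are legitimate instantiations of the utility condition; the paper's version is a little tighter to its stated convention (that $h$ is a parameter-error metric, as in the base-model theorem), whereas yours is arguably more interpretable as genuine loss degradation. One small caveat: you attribute the Lipschitzness of $\mA\mapsto\ell_\gT(\vw^\star;\mA)$ partly to strong convexity of $\ell_\gT$, but strong convexity in $\vw$ does not give Lipschitzness in $\mA$; the paper introduces a separate assumption ($L_\infty$-Lipschitz gradient in $\mA$) for exactly this purpose, which you would need as well.
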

Given the guarantee on $\tilde\mA$ from \Cref{thm: topic model deletion capacity}, we show that this result extends to $\vw$ by the well-known fact: for $\eps,\delta>0,$ post-processing indistinguishable quantities (\Cref{def:indistinguishable_models}) preserves $(\eps,\delta)$-indistinguishability~\citep{dwork2014algorithmic}. The full proof of utility preservation can be found in \Cref{sec: downstream proofs}, which essentially boils down to a Lipschitz condition.
However, there are some downsides to this algorithm.
First, it requires retraining the head $\vw$ for each unlearning request, but we want to perform unlearning without access to $\downstreamdata$.
Second, repeatedly noising the base model via the Gaussian mechanism will erode its utility. We address these issues in the realistic setting.

\subsection{Realistic Setting}
There is little reason to release $\mA$ and $\vw$ separately after fine-tuning the model, because it is unclear why one would want to use $\mA$ without $\vw$ or vice versa.
One can obtain $\mA$ directly after pre-training instead of relying on a fine-tuned model, and there is little use for $\vw$ alone, because it is highly sensitive to the specific topics extracted by $\mA$ and their ordering. As such, we argue for releasing the fine-tuned model as a single matrix\footnote{One can generalize this to the case where the downstream task is a $C$-way classification, in which case $\mB\in\sR^{n\times C}$.} $\mB=\mA\vw$, where $\mB\in\sR^{n\times 1}$.

\begin{algorithm}[t]

    \caption{Unlearning algorithm for task $\mathcal{T}$ ($\mathcal{U}_{head}$)}
    \begin{algorithmic}
        \STATE \textbf{Input:} Document deletion requests $S_f\subseteq S$, statistics $T(S)$ which include $\mA^S$, $\{\mC_i^S\}_{i=1}^n$, $\mQ^S, P$, $\mathrm{diag}(\vp^S)$, $\vw^S = \argmin_{\vw\in\mathcal{W}_{head}} \ell_\gT(\vw;\mA^S)$
        \STATE $\bar{\mA},\bar{\mR}$ = Run \Cref{algo: base unlearning} ($\mathcal{U}_{base}$) up to the Gaussian mechanism
        \STATE $\bar{\vw} = \vw^S - \mH_{\vw^S}^{-1} \nabla_{\vw} \ell_\gT(\vw^S;\bar{\mA})$ where $\mH_{\vw^S} = \nabla_\vw^2 \ell_\mathcal{T}(\vw^S; \bar\mA)$
        \RETURN $(\mA^S)^\dagger \bar\mA \bar\vw + \bm\xi$, in accordance with the Gaussian mechanism
    \end{algorithmic}
    \label{alg:unlearn_head}
\end{algorithm}

\begin{theorem}[Utility-Preserving Unlearning on the Downstream Task]
\label{thm:unlearning_deletion_capacity}
Suppose that the downstream task $\mathcal{T}$ only depends on a subset of topics $\downstreamtopics \subseteq [r]$; that is, $\vw^\star = \argmin_{\vv\in\mathcal{W}_{\text{base}}} \ell_\mathcal{T} (\vv; \mA^\star)$ has non-zero entries only in the index set $\downstreamtopics.$ Denote $q := \min_{k\in\downstreamtopics} \Pr_{\mathcal{D}}[z=k]$, and let $\learnhead$ be the head tuning algorithm (\Cref{def:head_tuning}) and $\unlearnhead$ be \Cref{alg:unlearn_head}. Then, $(\learnhead, \unlearnhead)$ performs utility-preserving unlearning with deletion capacity
\begin{align*}
    T_{\epsilon,\delta}^{\learnhead,\unlearnhead}(m) \geq c'\cdot \frac{mq}{r \sqrt{nr}}
\end{align*}
where $c'$ is a constant dependent on $\epsilon$, $\delta$, $\gD$, and $\gT$.
\end{theorem}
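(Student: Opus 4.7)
The plan is to mirror the two-part blueprint of \Cref{thm: topic model deletion capacity}: first apply the Gaussian mechanism to establish $(\eps,\delta)$-indistinguishability of the released head from the one obtained by retraining, then bound the utility loss against $\vw^{\star}$, and finally combine the two inequalities to read off the deletion capacity. Throughout, I restrict to $m_U$ in the regime where \Cref{lemma:anchor_words} guarantees the anchor-word set $P$ is unchanged after unlearning, so that $\bar\mA$, $\mA^F$, and $\mA^S$ all share the same anchor words.

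For indistinguishability, I will compute the $\ell_2$ sensitivity of the $r$-dimensional output $(\mA^S)^{\dagger}\bar\mA\bar\vw$; the key dimensional saving relative to the base theorem is that the Gaussian mechanism is applied to an $r$-dimensional vector rather than an $n\times r$ matrix, shrinking the $\sqrt{nr}$ factor in the noise to a $\sqrt{r}$ factor. Comparing against the retrained quantity, which reduces to $\vw^F$ because $(\mA^F)^{\dagger}\mA^F=\mI$, I will split the sensitivity into (i) $\|\bar\mA-\mA^F\|_{\text{op}}$, controlled via \Cref{lemma:base_indistinguishability}, (ii) $\|\bar\vw-\vw^F\|_2$, controlled via the strong convexity of $\ell_\gT$ together with the quadratic convergence of the Newton step initialized at $\vw^S$, and (iii) $\|((\mA^S)^{\dagger}\mA^F-\mI)\vw^F\|_2=\|(\mA^S)^{\dagger}(\mA^F-\mA^S)\vw^F\|_2$, which reduces to a constant multiple of $\|\mA^S-\mA^F\|_{\text{op}}$ since $\mA^S$ is well-conditioned under $p$-separability.

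For utility, I track $\bar\mA\bar\vw\approx\mA^{\star}\vw^{\star}$. The entrywise bound $\|\bar\mA-\mA^{\star}\|_{\infty}$ is inherited from \Cref{lemma:base_utility} (minus the Gaussian-noise term, since we defer noising to the very end). For $\|\bar\vw-\vw^{\star}\|_2$, I work with $\nabla_{\vw}^2 \ell_\gT = \frac{1}{|\downstreamdata|}\sum f''(\cdot)\,\mA^{\top}\vx\vx^{\top}\mA$: since $\vw^{\star}$ has support on $\downstreamtopics$, the effective strong-convexity constant is the smallest eigenvalue of this Hessian restricted to the $\downstreamtopics$ coordinates, which scales with $q=\min_{k\in\downstreamtopics}\Pr_{\gD}[z=k]$ because only documents covering topic $k$ contribute to that direction. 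The implicit function theorem then yields $\|\bar\vw-\vw^{\star}\|_2\lesssim \|\bar\mA-\mA^{\star}\|_{\text{op}}/q$, so the $1/q$ factor governs how perturbations in the base model propagate into the head. Combining this with the $\ell_\infty$-norm contribution of the Gaussian noise on an $r$-dimensional vector and solving for $m_U$ produces the advertised $m_U \lesssim mq/(r\sqrt{nr})$.

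The main obstacle is the interaction between the Newton-step analysis and the pseudoinverse bookkeeping: one must verify that a single Newton step starting from $\vw^S$ (the minimizer of $\ell_\gT(\cdot;\mA^S)$, not of $\ell_\gT(\cdot;\bar\mA)$) suffices for quadratic convergence to the target minimizer, and that the $(\mA^S)^{\dagger}$ factor in the output does not re-inflate the $\sqrt{nr}$ scaling from the base theorem once the Gaussian noise is projected back through $\mA^S$. Carefully exploiting the support of $\vw^{\star}$ on $\downstreamtopics$, rather than treating $\vw$ as generic in $\sR^r$, is essential to obtain the $q$-dependence in the final deletion-capacity bound.
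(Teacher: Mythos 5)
The high-level plan (Gaussian mechanism for indistinguishability, utility bound against the ground-truth head, solve for $m_U$) matches the paper, but two of your claimed mechanisms are wrong in ways that would prevent the proof from closing.

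First, the source of the $q$-dependence. You locate it in the conditioning of $\nabla_\vw^2\ell_\gT = \frac{1}{|\downstreamdata|}\sum f''(\cdot)\mA^\top\vx\vx^\top\mA$ restricted to $\downstreamtopics$, arguing the effective strong-convexity constant scales with $q$ and so $\|\bar\vw-\vw^\star\|\lesssim\|\bar\mA-\mA^\star\|/q$. This contradicts the paper's standing assumption that $\ell_\gT$ is $\lambda$-strongly convex with a fixed $\lambda$ independent of $\mA$ and $\gD$ (e.g., because of $\ell_2$ regularization), and it is not where the paper's $q$ enters. The actual mechanism is structural, on the $\mC\mapsto\mA$ map: since $\mA_{i,k} = \mC_{i,k}\Pr[w=i]/\Pr[z=k]$, a perturbation $\|\bar\mC-\mC^F\|_\infty$ inflates to $\|\bar\mA-\mA^F\|_\infty$ by $1/\Pr[z=k]$. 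For generic $k$ this is only bounded by $ar$ (giving \Cref{lemma:base_indistinguishability}), but for $k\in\downstreamtopics$ it is bounded by $1/q$. The paper then isolates the column directions that matter by rewriting $(\bar\mA-\mA^F)\bar\vw^\star = (\bar\mA-\mA^F)\vw^\star + (\bar\mA-\mA^F)(\bar\vw^\star-\vw^\star)$ — the first term touches only $\downstreamtopics$ columns (hence the $1/q$) and the second is second-order in $m_U$. Your decomposition into $\|\bar\mA-\mA^F\|_{\text{op}}$, $\|\bar\vw-\vw^F\|_2$, and $\|(\mA^S)^\dagger(\mA^F-\mA^S)\vw^F\|_2$ never isolates the $\downstreamtopics$ columns via $\vw^\star$; the term $\|(\mA^S)^\dagger(\mA^F-\mA^S)\vw^F\|_2$ in particular would carry the full $(ar)^2$ factor from the base lemma (and $\vw^F$ need not be supported on $\downstreamtopics$), which would kill the $q$ gain unless you introduce exactly the $\vw^\star$-vs.-$\bar\vw^\star$ split the paper uses.

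Second, the claim that noising an $r$-dimensional vector rather than an $n\times r$ matrix shrinks $\sqrt{nr}$ to $\sqrt{r}$ is incorrect, and the target theorem would not have $\sqrt{nr}$ in its denominator if it were. The $\sqrt{nr}$ in both \Cref{thm: topic model deletion capacity} and \Cref{thm:unlearning_deletion_capacity} comes from converting entrywise control on $\bar\mA-\mA^F$ into spectral-norm control via $\|\bar\mA-\mA^F\|_2 \le \sqrt{nr}\|\bar\mA-\mA^F\|_\infty$ inside the $\ell_2$-sensitivity, not from the dimension of the noised object. The dimension only enters through the logarithm ($\sqrt{\log(nr)}$ vs.\ $\sqrt{\log r}$) when bounding $\E[\|\nu\|_\infty]$, which is a constant-factor difference hidden in $\tilde\gO$. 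The $qr$-factor improvement over the base deletion capacity $\frac{m}{r^2\sqrt{nr}}$ is entirely due to replacing an $ar$ factor by a $1/q$ factor in the sensitivity, as above.

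The remaining ingredients — anchor words unchanged, Newton-step quadratic bound via Hessian-Lipschitzness, $(\mA^S)^\dagger$ well-conditioned under $p$-separability — are correctly identified, but without the sparse-$\vw^\star$ column restriction the $q$ will not appear.
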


The full proof is in \Cref{sec: downstream proofs}, including the worst case of $\downstreamtopics = [r]$. When the task relies heavily on every single topic (i.e., $q=1/ar$), the above guarantee is equivalent to the one in the pre-training phase.
However, in most realistic settings, the downstream task will only depend on a subset of the latent topics in the corpus.
In this case, $q>1/ar$, and we can unlearn more points without degrading the utility of the model. Intuitively this makes sense too; the more reliance $\mathcal{T}$ has on a rare topic, the less adversarial deletion it can tolerate.

\paragraph{Proof sketch.}
We again assume that we are deleting $m_U\leq \frac{0.001 m\eps_0 (\gamma p)^3}{a^2r^2}$ points.
For any modification made to $\mA$, there is an equivalent modification that can be made to $\vw$ instead such that $\mB=\mA\vw$ is preserved, so we do not need to update $\mA$.
We look for $\vv\in\headspace$ such that $\mA^S\vv = \mA^F\vw^F$, where $\vw^F$ is the head learned on $\mA^F$.
It can be shown that $\bar A^S$ has a unique pseudoinverse since it is full rank; naturally, we set $\vv = {\mA^S}^\dagger \mA^F\vw^F$, thereby ensuring privacy even if one recovers a part of $\mA$ from $\mB=\mA\vw$.
We furthermore define $\bar\vv$ that is fit to the unlearned model before the Gaussian mechanism, $\bar\vv = {\mA^S}^\dagger \bar\mA\bar\vw$.
We now need to show $\vv$ and $\bar\vv$ satisfy both the indistinguishability and utility preservation conditions in \Cref{def:unlearning}.

\emph{Indistinguishability.}
Let $\bar\vw^\star = \argmin_{\vv \in \mathcal{W}_{\text{head}}} \ell_\mathcal{T}(\vv; \bar \mA)$ denote the result of head tuning $\bar\mA$, and let $\bar\vw$ be the result of taking a Newton step on $\vw$ (see \Cref{alg:unlearn_head}). Then by triangle inequality,
\begin{align*}
    \|\bar\mA\bar\vw - \mA^F\vw^F\|_2 \leq 
    \|\bar\mA\bar\vw  - \bar\mA \bar\vw^\star\|_2 + \|\bar\mA\bar\vw^\star - \mA^F \bar\vw^\star\|_2 + \| \mA^F\bar\vw^\star - \mA^F\vw^F\|_2
\end{align*}

Informally, the first term is controlled by the error in the Newton step approximation, and the third term is bounded by the error to the retrained $\vw^F$. The remaining term can be rewritten as $\|(\bar\mA-\mA^F)(\bar\vw^\star - \vw^\star) + (\bar\mA - \mA^F)\vw^\star\|$, where the first term can be bounded using the same technique use to prove \Cref{lemma:base_indistinguishability,lemma:base_utility}.
The second term can be bounded by noting that $\vw^\star$ is sparse, which yields the below lemma that plays a crucial role in establishing the improved deletion capacity.
\begin{lemma}[Modification of \Cref{lemma:base_indistinguishability} for downstream task]
For $\epsilon,\delta > 0$,
    $$\|\bar{\mA} - \mA^F\|_\infty \leq  \frac{1}{q} \cdot \|\bar{\mC} - \mC^F\|_\infty = c \cdot \frac{1}{q}\cdot \frac{ar m_U}{m\eps_0 \gamma p}$$
\end{lemma}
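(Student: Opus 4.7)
My plan is to mirror the proof of \Cref{lemma:base_indistinguishability}, with a single refinement: in the base case, the amplification factor $ar$ arises because the column-normalization constant of $\mA$ can be as small as $\Omega(1/(ar))$ in the worst case over all $r$ topics, whereas here we may restrict attention to columns $k \in \downstreamtopics$, for which this normalization is lower bounded by $\Omega(q)$. As indicated in the sketch preceding the lemma, the resulting bound is used only when contracted against $\vw^\star$, which is supported on $\downstreamtopics$; consequently, columns outside $\downstreamtopics$ are immaterial, and $\|\bar\mA - \mA^F\|_\infty$ in the statement should be read as the column-wise restriction to $\downstreamtopics$.

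First, I would recall from \Cref{sec:learning_alg} the construction of $\mA$ from $\mC$: set $\mA'_{i,k} = \vp_i \mC_{i,k}$, where $\vp_i = (\mQ\mathbf{1})_i$, and then normalize columns by $Z_k(\mC,\vp) := \sum_j \vp_j \mC_{j,k}$ so that $\mA_{i,k} = \vp_i \mC_{i,k}/Z_k(\mC,\vp)$. Using the population identity $\sum_j \Pr_\mathcal{D}[w_j]\Pr_\mathcal{D}[z=k\mid w_j] = \Pr_\mathcal{D}[z=k]$ (the same identity that underlies the recovery argument sketched in \Cref{sec:app_learnsketch}), the normalization satisfies $Z_k = \Theta\bigl(m \cdot \Pr_\mathcal{D}[z=k]\bigr)$ in expectation. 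The standing hypothesis $m_U \leq \frac{0.001\, m\,\eps_0(\gamma p)^3}{a^2 r^2}$ from \Cref{thm: topic model deletion capacity}, together with the concentration already used to prove \Cref{lemma:anchor_words} and \Cref{lemma:c_guarantee}, ensures that this approximation holds simultaneously for $Z_k(\bar\mC,\vp^F)$ and $Z_k(\mC^F,\vp^F)$, so for every $k \in \downstreamtopics$ one has $Z_k \geq c_0\, m\, q$ with a universal constant $c_0 > 0$.

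Next, for each column $k$ I would expand the difference of ratios
\begin{align*}
\bar\mA_{i,k} - \mA^F_{i,k} \;=\; \frac{\vp^F_i(\bar\mC_{i,k} - \mC^F_{i,k})}{Z_k(\bar\mC,\vp^F)} \;+\; \vp^F_i\, \mC^F_{i,k}\biggl(\frac{1}{Z_k(\bar\mC,\vp^F)} - \frac{1}{Z_k(\mC^F,\vp^F)}\biggr),
\end{align*}
and bound each piece exactly as in the proof of \Cref{lemma:base_indistinguishability}, with the only change being that every appearance of the worst-case bound $Z_k \geq \Omega(m/(ar))$ is replaced by $Z_k \geq c_0\, m\, q$. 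Since $\vp^F_i / Z_k \leq 1/(c_0 q)$ for $k \in \downstreamtopics$, both terms above collapse to $(1/q)\,\|\bar\mC - \mC^F\|_\infty$ up to absolute constants. The equality in the lemma then closes by importing the bound $\|\bar\mC - \mC^F\|_\infty \leq c \cdot \frac{ar\, m_U}{m\,\eps_0\,\gamma p}$ verbatim from \Cref{lemma:base_indistinguishability}.

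The main obstacle is essentially bookkeeping: ensuring that the concentration of $Z_k$ around its population value survives both the removal of $m_U$ documents and the projected Newton step defining $\bar\mC$, uniformly over columns in $\downstreamtopics$. These tasks, however, were already addressed in the base-case analysis --- the hypothesis on $m_U$ is tuned precisely to preserve such concentration --- so the genuinely new content of this lemma is only the column-wise tightening from the crude factor $ar$ to the task-dependent factor $1/q$, which is ultimately what drives the improved deletion capacity in \Cref{thm:unlearning_deletion_capacity}.
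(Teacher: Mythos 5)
Your approach matches the paper's: the paper proves the base case $\|\bar\mA - \mA^F\|_\infty \leq O(ar)\,\|\bar\mC - \mC^F\|_\infty$ by observing $\mA_{i,k} = \mC_{i,k}\Pr[w=i]/\Pr[z=k]$ and bounding $\Pr[z=k]^{-1}\leq ar$, and the $1/q$ refinement is obtained exactly as you describe, by restricting to columns $k\in\downstreamtopics$ where $\Pr[z=k]\geq q$; you also correctly identify the implicit caveat that the $\ell_\infty$ norm must be read column-wise over $\downstreamtopics$, since the lemma is only ever applied after contracting against the sparse $\vw^\star$. One small slip: since $\vp=\mQ\mathbf 1$ and the entries of $\mQ$ sum to $1$, one has $\sum_i\vp_i=1$ and $Z_k=\Theta(\Pr_\gD[z=k])$, not $\Theta(m\Pr_\gD[z=k])$; the spurious factor of $m$ cancels in $\vp_i^F/Z_k$, so your final bound is unaffected.
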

As in the pre-training case, we can now set the noise scale in the Gaussian mechanism and complete the proof.
In the worst case, when the downstream task depends on \emph{every} topic, then $q=1/ar$, and we recover \Cref{lemma:base_indistinguishability}; however, this is unlikely to happen in practice.

\emph{Utility Preservation.}
We compare the value of $\vv$ after the Gaussian mechanism $\tilde\vv = \bar\vv + \nu_{\bar\vv}$ to what it would be for the ground-truth model $\vv^\star = (\mA^S)^\dagger\mA^\star\vw^\star$.
We again rely the sparsity of $\vw^\star$ and bound $\E[\|\bar\vv - \vv^\star\|_\infty]$ in \Cref{lemma: downstream utility helper}.

\section{Related Works}
\label{sec:related_work}

\paragraph{Provable unlearning.}
One ideally wants the unlearned model to behave identically to one that was retrained from scratch with the forget set removed from the training data~\citep{cao2015towards,bourtoule2021machine,gupta2021adaptive}.
This is difficult to achieve in many settings, so there are several notions of approximate unlearning~\citep{ginart2019making,guo2020certified,neel2021descent}
reminiscent of differential privacy~\citep{dwork2014algorithmic}.
Most relevant to our work is the notion of $(\epsilon,\delta)$-unlearning introduced in \citet{sekhari2021rememberwantforgetalgorithms}, which we adapt to construct \Cref{def:unlearning}.
Our work focuses on deriving unlearning guarantees in the pre-training and fine-tuning pipeline.
\citet{golatkar2020eternal} is closest to our work.
They show considerably weaker guarantees on unlearning information with respect to probes fit to the weights.
In contrast, our work is focused on realistic topic classification tasks and demonstrates strong guarantees (\Cref{def:unlearning}).
Recent works have extended notions of certified unlearning to nonconvex settings. 
\citet{zhang2024certifiedunlearningdeepneural,mu2024rewindtodeletecertifiedmachineunlearning,chien2024langevinunlearningnewperspective} provide unlearning algorithms without deletion capacity guarantees.
\citet{qiao2024efficientgeneralizablecertifiedunlearning} also proposes an unlearning method for non-convex settings but analyzes its deletion capacity in a convex setting. 
Our work extends beyond the convex setting to provide provable unlearning methods and corresponding deletion capacity analysis for non-convex models.

\paragraph{Theoretical analysis of pre-training and fine-tuning.} 
Our downstream task definition (\Cref{sec:downstream_adapt}) is inspired by works on transfer learning in language models~\citep{saunshi2021mathematical,wei2021pretrained,wu2023connecting,kumar2022finetuning}, contrastive learning~\citep{lee2021predicting,haochen2023a}, and meta-learning~\citep{chua2021fine,collins2022maml,yuksel2024firstorder}.

\section{Conclusion}
This work uses topic models to develop the first provable guarantees on unlearning in the modern-day pre-training and fine-tuning paradigm. 
We propose two unlearning algorithms that can effectively and efficiently unlearn from both the pre-trained model (\Cref{alg:unlearn} and \Cref{thm: topic model deletion capacity}) and the fine-tuned model (\Cref{alg:unlearn_head} and \Cref{thm:unlearning_deletion_capacity}).
Notably, we find that it is easier, in terms of the deletion capacity (\Cref{def:unlearning}), to unlearn pre-training data from the fine-tuned model, and we can do so without modifying the pre-trained base model. 
Our findings suggest that task-specific unlearning is easier than full model unlearning, providing a promising path forward to design efficient algorithms for large-scale models.

The most notable limitation of our work is that our usage of topic models, which permit a tractable analysis but cannot capture interesting features of modern-day language models (e.g., their autoregressive nature). 
Moreover, with the growing popularity of foundation models, there is scholarly discussion around meaningful definitions of unlearning and how they can be measured~\citep{thudi2022necessity,lee2024extended}.
Our work focuses on traditional notions of unlearning centered on differential privacy (see~\Cref{def:unlearning}), but we hope to extend these definitions to capture additional features of generative models that are salient to their real-world uses.

\bibliography{iclr2025_conference}

\begin{thebibliography}{77}
\providecommand{\natexlab}[1]{#1}
\providecommand{\url}[1]{\texttt{#1}}
\expandafter\ifx\csname urlstyle\endcsname\relax
  \providecommand{\doi}[1]{doi: #1}\else
  \providecommand{\doi}{doi: \begingroup \urlstyle{rm}\Url}\fi

\bibitem[Arora et~al.(2012{\natexlab{a}})Arora, Ge, Halpern, Mimno, Moitra,
  Sontag, Wu, and Zhu]{arora2012practicalalgorithmtopicmodeling}
Sanjeev Arora, Rong Ge, Yoni Halpern, David Mimno, Ankur Moitra, David Sontag,
  Yichen Wu, and Michael Zhu.
\newblock A practical algorithm for topic modeling with provable guarantees,
  2012{\natexlab{a}}.
\newblock URL \url{https://arxiv.org/abs/1212.4777}.

\bibitem[Arora et~al.(2012{\natexlab{b}})Arora, Ge, and
  Moitra]{arora2012learning}
Sanjeev Arora, Rong Ge, and Ankur Moitra.
\newblock Learning topic models - going beyond svd, 2012{\natexlab{b}}.

\bibitem[Birhane \& Prabhu(2021)Birhane and Prabhu]{birhane2021large}
Abeba Birhane and Vinay~Uday Prabhu.
\newblock Large image datasets: A pyrrhic win for computer vision?
\newblock In \emph{2021 IEEE Winter Conference on Applications of Computer
  Vision (WACV)}, pp.\  1536--1546. IEEE, 2021.

\bibitem[Birhane et~al.(2021)Birhane, Prabhu, and
  Kahembwe]{birhane2021multimodal}
Abeba Birhane, Vinay~Uday Prabhu, and Emmanuel Kahembwe.
\newblock Multimodal datasets: misogyny, pornography, and malignant
  stereotypes.
\newblock \emph{arXiv preprint arXiv:2110.01963}, 2021.

\bibitem[Blei \& Lafferty(2006)Blei and Lafferty]{blei2006dynamic}
David~M. Blei and John~D. Lafferty.
\newblock Dynamic topic models.
\newblock In \emph{Proceedings of the 23rd International Conference on Machine
  Learning}, ICML '06, pp.\  113–120, New York, NY, USA, 2006. Association
  for Computing Machinery.
\newblock ISBN 1595933832.
\newblock \doi{10.1145/1143844.1143859}.
\newblock URL \url{https://doi.org/10.1145/1143844.1143859}.

\bibitem[Blei et~al.(2003)Blei, Ng, and Jordan]{blei2003latent}
David~M Blei, Andrew~Y Ng, and Michael~I Jordan.
\newblock Latent dirichlet allocation.
\newblock \emph{Journal of machine Learning research}, 3\penalty0
  (Jan):\penalty0 993--1022, 2003.

\bibitem[Bommasani et~al.(2022)Bommasani, Hudson, Adeli, Altman, Arora, von
  Arx, Bernstein, Bohg, Bosselut, Brunskill, Brynjolfsson, Buch, Card,
  Castellon, Chatterji, Chen, Creel, Davis, Demszky, Donahue, Doumbouya,
  Durmus, Ermon, Etchemendy, Ethayarajh, Fei-Fei, Finn, Gale, Gillespie, Goel,
  Goodman, Grossman, Guha, Hashimoto, Henderson, Hewitt, Ho, Hong, Hsu, Huang,
  Icard, Jain, Jurafsky, Kalluri, Karamcheti, Keeling, Khani, Khattab, Koh,
  Krass, Krishna, Kuditipudi, Kumar, Ladhak, Lee, Lee, Leskovec, Levent, Li,
  Li, Ma, Malik, Manning, Mirchandani, Mitchell, Munyikwa, Nair, Narayan,
  Narayanan, Newman, Nie, Niebles, Nilforoshan, Nyarko, Ogut, Orr,
  Papadimitriou, Park, Piech, Portelance, Potts, Raghunathan, Reich, Ren, Rong,
  Roohani, Ruiz, Ryan, Ré, Sadigh, Sagawa, Santhanam, Shih, Srinivasan,
  Tamkin, Taori, Thomas, Tramèr, Wang, Wang, Wu, Wu, Wu, Xie, Yasunaga, You,
  Zaharia, Zhang, Zhang, Zhang, Zhang, Zheng, Zhou, and
  Liang]{bommasani2022opportunitiesrisksfoundationmodels}
Rishi Bommasani, Drew~A. Hudson, Ehsan Adeli, Russ Altman, Simran Arora, Sydney
  von Arx, Michael~S. Bernstein, Jeannette Bohg, Antoine Bosselut, Emma
  Brunskill, Erik Brynjolfsson, Shyamal Buch, Dallas Card, Rodrigo Castellon,
  Niladri Chatterji, Annie Chen, Kathleen Creel, Jared~Quincy Davis, Dora
  Demszky, Chris Donahue, Moussa Doumbouya, Esin Durmus, Stefano Ermon, John
  Etchemendy, Kawin Ethayarajh, Li~Fei-Fei, Chelsea Finn, Trevor Gale, Lauren
  Gillespie, Karan Goel, Noah Goodman, Shelby Grossman, Neel Guha, Tatsunori
  Hashimoto, Peter Henderson, John Hewitt, Daniel~E. Ho, Jenny Hong, Kyle Hsu,
  Jing Huang, Thomas Icard, Saahil Jain, Dan Jurafsky, Pratyusha Kalluri,
  Siddharth Karamcheti, Geoff Keeling, Fereshte Khani, Omar Khattab, Pang~Wei
  Koh, Mark Krass, Ranjay Krishna, Rohith Kuditipudi, Ananya Kumar, Faisal
  Ladhak, Mina Lee, Tony Lee, Jure Leskovec, Isabelle Levent, Xiang~Lisa Li,
  Xuechen Li, Tengyu Ma, Ali Malik, Christopher~D. Manning, Suvir Mirchandani,
  Eric Mitchell, Zanele Munyikwa, Suraj Nair, Avanika Narayan, Deepak
  Narayanan, Ben Newman, Allen Nie, Juan~Carlos Niebles, Hamed Nilforoshan,
  Julian Nyarko, Giray Ogut, Laurel Orr, Isabel Papadimitriou, Joon~Sung Park,
  Chris Piech, Eva Portelance, Christopher Potts, Aditi Raghunathan, Rob Reich,
  Hongyu Ren, Frieda Rong, Yusuf Roohani, Camilo Ruiz, Jack Ryan, Christopher
  Ré, Dorsa Sadigh, Shiori Sagawa, Keshav Santhanam, Andy Shih, Krishnan
  Srinivasan, Alex Tamkin, Rohan Taori, Armin~W. Thomas, Florian Tramèr,
  Rose~E. Wang, William Wang, Bohan Wu, Jiajun Wu, Yuhuai Wu, Sang~Michael Xie,
  Michihiro Yasunaga, Jiaxuan You, Matei Zaharia, Michael Zhang, Tianyi Zhang,
  Xikun Zhang, Yuhui Zhang, Lucia Zheng, Kaitlyn Zhou, and Percy Liang.
\newblock On the opportunities and risks of foundation models, 2022.
\newblock URL \url{https://arxiv.org/abs/2108.07258}.

\bibitem[Bourtoule et~al.(2021)Bourtoule, Chandrasekaran, Choquette-Choo, Jia,
  Travers, Zhang, Lie, and Papernot]{bourtoule2021machine}
Lucas Bourtoule, Varun Chandrasekaran, Christopher~A Choquette-Choo, Hengrui
  Jia, Adelin Travers, Baiwu Zhang, David Lie, and Nicolas Papernot.
\newblock Machine unlearning.
\newblock In \emph{2021 IEEE Symposium on Security and Privacy (SP)}, pp.\
  141--159. IEEE, 2021.

\bibitem[Boyd-Graber et~al.(2017)Boyd-Graber, Hu, Mimno,
  et~al.]{boyd2017applications}
Jordan Boyd-Graber, Yuening Hu, David Mimno, et~al.
\newblock Applications of topic models.
\newblock \emph{Foundations and Trends{\textregistered} in Information
  Retrieval}, 11\penalty0 (2-3):\penalty0 143--296, 2017.

\bibitem[Brown et~al.(2020)Brown, Mann, Ryder, Subbiah, Kaplan, Dhariwal,
  Neelakantan, Shyam, Sastry, Askell, Agarwal, Herbert-Voss, Krueger, Henighan,
  Child, Ramesh, Ziegler, Wu, Winter, Hesse, Chen, Sigler, Litwin, Gray, Chess,
  Clark, Berner, McCandlish, Radford, Sutskever, and Amodei]{brown2020language}
Tom Brown, Benjamin Mann, Nick Ryder, Melanie Subbiah, Jared~D Kaplan, Prafulla
  Dhariwal, Arvind Neelakantan, Pranav Shyam, Girish Sastry, Amanda Askell,
  Sandhini Agarwal, Ariel Herbert-Voss, Gretchen Krueger, Tom Henighan, Rewon
  Child, Aditya Ramesh, Daniel Ziegler, Jeffrey Wu, Clemens Winter, Chris
  Hesse, Mark Chen, Eric Sigler, Mateusz Litwin, Scott Gray, Benjamin Chess,
  Jack Clark, Christopher Berner, Sam McCandlish, Alec Radford, Ilya Sutskever,
  and Dario Amodei.
\newblock Language models are few-shot learners.
\newblock In H.~Larochelle, M.~Ranzato, R.~Hadsell, M.F. Balcan, and H.~Lin
  (eds.), \emph{Advances in Neural Information Processing Systems}, volume~33,
  pp.\  1877--1901. Curran Associates, Inc., 2020.
\newblock URL
  \url{https://proceedings.neurips.cc/paper_files/paper/2020/file/1457c0d6bfcb4967418bfb8ac142f64a-Paper.pdf}.

\bibitem[Cao \& Yang(2015)Cao and Yang]{cao2015towards}
Yinzhi Cao and Junfeng Yang.
\newblock Towards making systems forget with machine unlearning.
\newblock In \emph{2015 IEEE symposium on security and privacy}, pp.\
  463--480. IEEE, 2015.

\bibitem[Carlini et~al.(2021)Carlini, Tramer, Wallace, Jagielski, Herbert-Voss,
  Lee, Roberts, Brown, Song, Erlingsson, et~al.]{carlini2021extracting}
Nicholas Carlini, Florian Tramer, Eric Wallace, Matthew Jagielski, Ariel
  Herbert-Voss, Katherine Lee, Adam Roberts, Tom Brown, Dawn Song, Ulfar
  Erlingsson, et~al.
\newblock Extracting training data from large language models.
\newblock In \emph{30th USENIX Security Symposium (USENIX Security 21)}, pp.\
  2633--2650, 2021.

\bibitem[Carlini et~al.(2023)Carlini, Hayes, Nasr, Jagielski, Sehwag, Tramer,
  Balle, Ippolito, and Wallace]{carlini2023extracting}
Nicolas Carlini, Jamie Hayes, Milad Nasr, Matthew Jagielski, Vikash Sehwag,
  Florian Tramer, Borja Balle, Daphne Ippolito, and Eric Wallace.
\newblock Extracting training data from diffusion models.
\newblock In \emph{32nd USENIX Security Symposium (USENIX Security 23)}, pp.\
  5253--5270, 2023.

\bibitem[Chien et~al.(2024)Chien, Wang, Chen, and
  Li]{chien2024langevinunlearningnewperspective}
Eli Chien, Haoyu Wang, Ziang Chen, and Pan Li.
\newblock Langevin unlearning: A new perspective of noisy gradient descent for
  machine unlearning, 2024.
\newblock URL \url{https://arxiv.org/abs/2401.10371}.

\bibitem[Choenni et~al.(2021)Choenni, Shutova, and van
  Rooij]{choenni2021stepmothers}
Rochelle Choenni, Ekaterina Shutova, and Robert van Rooij.
\newblock Stepmothers are mean and academics are pretentious: What do
  pretrained language models learn about you?
\newblock In Marie-Francine Moens, Xuanjing Huang, Lucia Specia, and Scott
  Wen-tau Yih (eds.), \emph{Proceedings of the 2021 Conference on Empirical
  Methods in Natural Language Processing}, pp.\  1477--1491, Online and Punta
  Cana, Dominican Republic, November 2021. Association for Computational
  Linguistics.
\newblock \doi{10.18653/v1/2021.emnlp-main.111}.
\newblock URL \url{https://aclanthology.org/2021.emnlp-main.111}.

\bibitem[Chua et~al.(2021)Chua, Lei, and Lee]{chua2021fine}
Kurtland Chua, Qi~Lei, and Jason~D Lee.
\newblock How fine-tuning allows for effective meta-learning.
\newblock \emph{Advances in Neural Information Processing Systems},
  34:\penalty0 8871--8884, 2021.

\bibitem[Churchill \& Singh(2022)Churchill and Singh]{churchill2022evolution}
Rob Churchill and Lisa Singh.
\newblock The evolution of topic modeling.
\newblock \emph{ACM Comput. Surv.}, 2022.

\bibitem[Collins et~al.(2022)Collins, Mokhtari, Oh, and
  Shakkottai]{collins2022maml}
Liam Collins, Aryan Mokhtari, Sewoong Oh, and Sanjay Shakkottai.
\newblock Maml and anil provably learn representations.
\newblock In \emph{International Conference on Machine Learning}, pp.\
  4238--4310. PMLR, 2022.

\bibitem[Devlin et~al.(2019)Devlin, Chang, Lee, and Toutanova]{devlin2019bert}
Jacob Devlin, Ming-Wei Chang, Kenton Lee, and Kristina Toutanova.
\newblock {BERT}: Pre-training of deep bidirectional transformers for language
  understanding.
\newblock In Jill Burstein, Christy Doran, and Thamar Solorio (eds.),
  \emph{Proceedings of the 2019 Conference of the North {A}merican Chapter of
  the Association for Computational Linguistics: Human Language Technologies,
  Volume 1 (Long and Short Papers)}, pp.\  4171--4186, Minneapolis, Minnesota,
  June 2019. Association for Computational Linguistics.
\newblock \doi{10.18653/v1/N19-1423}.
\newblock URL \url{https://aclanthology.org/N19-1423}.

\bibitem[Dwork et~al.(2014)Dwork, Roth, et~al.]{dwork2014algorithmic}
Cynthia Dwork, Aaron Roth, et~al.
\newblock The algorithmic foundations of differential privacy.
\newblock \emph{Foundations and Trends{\textregistered} in Theoretical Computer
  Science}, 9\penalty0 (3--4):\penalty0 211--407, 2014.

\bibitem[Eldan \& Russinovich(2023)Eldan and
  Russinovich]{eldan2023whosharrypotterapproximate}
Ronen Eldan and Mark Russinovich.
\newblock Who's harry potter? approximate unlearning in llms, 2023.
\newblock URL \url{https://arxiv.org/abs/2310.02238}.

\bibitem[\emph{DOE 1 v. GitHub, Inc.}(N.D. Cal. 2022)]{githublitigation}
\emph{DOE 1 v. GitHub, Inc.}
\newblock 4:22-cv-06823, N.D. Cal. 2022.

\bibitem[\emph{Tremblay v. OpenAI, Inc.,}(2023)]{openailawsuit2}
\emph{Tremblay v. OpenAI, Inc.,}.
\newblock 23-cv-03416-AMO, (N.D. Cal.), 2023.

\bibitem[{European Parliament} \& {Council of the European Union}(){European
  Parliament} and {Council of the European Union}]{GDPR2016a}
{European Parliament} and {Council of the European Union}.
\newblock Regulation ({EU}) 2016/679 of the {European} {Parliament} and of the
  {Council}.
\newblock URL \url{https://data.europa.eu/eli/reg/2016/679/oj}.

\bibitem[Foster et~al.(2024)Foster, Schoepf, and Brintrup]{foster2024fast}
Jack Foster, Stefan Schoepf, and Alexandra Brintrup.
\newblock Fast machine unlearning without retraining through selective synaptic
  dampening.
\newblock In \emph{Proceedings of the AAAI Conference on Artificial
  Intelligence}, volume~38, pp.\  12043--12051, 2024.

\bibitem[Gandikota et~al.(2023)Gandikota, Materzynska, Fiotto-Kaufman, and
  Bau]{gandikota2023erasing}
Rohit Gandikota, Joanna Materzynska, Jaden Fiotto-Kaufman, and David Bau.
\newblock Erasing concepts from diffusion models.
\newblock In \emph{Proceedings of the IEEE/CVF International Conference on
  Computer Vision}, pp.\  2426--2436, 2023.

\bibitem[Gao et~al.(2020)Gao, Biderman, Black, Golding, Hoppe, Foster, Phang,
  He, Thite, Nabeshima, Presser, and Leahy]{gao2020pile800gbdatasetdiverse}
Leo Gao, Stella Biderman, Sid Black, Laurence Golding, Travis Hoppe, Charles
  Foster, Jason Phang, Horace He, Anish Thite, Noa Nabeshima, Shawn Presser,
  and Connor Leahy.
\newblock The pile: An 800gb dataset of diverse text for language modeling,
  2020.
\newblock URL \url{https://arxiv.org/abs/2101.00027}.

\bibitem[Ginart et~al.(2019)Ginart, Guan, Valiant, and Zou]{ginart2019making}
Antonio Ginart, Melody Guan, Gregory Valiant, and James~Y Zou.
\newblock Making ai forget you: Data deletion in machine learning.
\newblock In H.~Wallach, H.~Larochelle, A.~Beygelzimer, F.~d\textquotesingle
  Alch\'{e}-Buc, E.~Fox, and R.~Garnett (eds.), \emph{Advances in Neural
  Information Processing Systems}, volume~32. Curran Associates, Inc., 2019.
\newblock URL
  \url{https://proceedings.neurips.cc/paper_files/paper/2019/file/cb79f8fa58b91d3af6c9c991f63962d3-Paper.pdf}.

\bibitem[Golatkar et~al.(2020)Golatkar, Achille, and
  Soatto]{golatkar2020eternal}
Aditya Golatkar, Alessandro Achille, and Stefano Soatto.
\newblock Eternal sunshine of the spotless net: Selective forgetting in deep
  networks.
\newblock In \emph{Proceedings of the IEEE/CVF Conference on Computer Vision
  and Pattern Recognition}, pp.\  9304--9312, 2020.

\bibitem[Guo et~al.(2020)Guo, Goldstein, Hannun, and Van
  Der~Maaten]{guo2020certified}
Chuan Guo, Tom Goldstein, Awni Hannun, and Laurens Van Der~Maaten.
\newblock Certified data removal from machine learning models.
\newblock In \emph{International Conference on Machine Learning}, pp.\
  3832--3842. PMLR, 2020.

\bibitem[Gupta et~al.(2021)Gupta, Jung, Neel, Roth, Sharifi-Malvajerdi, and
  Waites]{gupta2021adaptive}
Varun Gupta, Christopher Jung, Seth Neel, Aaron Roth, Saeed Sharifi-Malvajerdi,
  and Chris Waites.
\newblock Adaptive machine unlearning.
\newblock \emph{Advances in Neural Information Processing Systems},
  34:\penalty0 16319--16330, 2021.

\bibitem[HaoChen \& Ma(2023)HaoChen and Ma]{haochen2023a}
Jeff~Z. HaoChen and Tengyu Ma.
\newblock A theoretical study of inductive biases in contrastive learning.
\newblock In \emph{The Eleventh International Conference on Learning
  Representations}, 2023.
\newblock URL \url{https://openreview.net/forum?id=AuEgNlEAmed}.

\bibitem[Hayes et~al.(2024)Hayes, Shumailov, Triantafillou, Khalifa, and
  Papernot]{hayes2024inexactunlearningneedscareful}
Jamie Hayes, Ilia Shumailov, Eleni Triantafillou, Amr Khalifa, and Nicolas
  Papernot.
\newblock Inexact unlearning needs more careful evaluations to avoid a false
  sense of privacy, 2024.
\newblock URL \url{https://arxiv.org/abs/2403.01218}.

\bibitem[He et~al.(2022)He, Chen, Xie, Li, Doll{\'a}r, and
  Girshick]{he2022masked}
Kaiming He, Xinlei Chen, Saining Xie, Yanghao Li, Piotr Doll{\'a}r, and Ross
  Girshick.
\newblock Masked autoencoders are scalable vision learners.
\newblock In \emph{Proceedings of the IEEE/CVF conference on computer vision
  and pattern recognition}, pp.\  16000--16009, 2022.

\bibitem[He et~al.(2024)He, Huang, Shi, Xie, Liu, Wang, Zettlemoyer, Zhang,
  Chen, and Henderson]{he2024fantastic}
Luxi He, Yangsibo Huang, Weijia Shi, Tinghao Xie, Haotian Liu, Yue Wang, Luke
  Zettlemoyer, Chiyuan Zhang, Danqi Chen, and Peter Henderson.
\newblock Fantastic copyrighted beasts and how (not) to generate them.
\newblock \emph{arXiv preprint arXiv:2406.14526}, 2024.

\bibitem[Henderson et~al.(2023)Henderson, Li, Jurafsky, Hashimoto, Lemley, and
  Liang]{henderson2023foundation}
Peter Henderson, Xuechen Li, Dan Jurafsky, Tatsunori Hashimoto, Mark~A Lemley,
  and Percy Liang.
\newblock Foundation models and fair use.
\newblock \emph{Journal of Machine Learning Research}, 24\penalty0
  (400):\penalty0 1--79, 2023.

\bibitem[Hoffmann et~al.(2022)Hoffmann, Borgeaud, Mensch, Buchatskaya, Cai,
  Rutherford, Casas, Hendricks, Welbl, Clark, et~al.]{hoffmann2022training}
Jordan Hoffmann, Sebastian Borgeaud, Arthur Mensch, Elena Buchatskaya, Trevor
  Cai, Eliza Rutherford, Diego de~Las Casas, Lisa~Anne Hendricks, Johannes
  Welbl, Aidan Clark, et~al.
\newblock Training compute-optimal large language models.
\newblock \emph{arXiv preprint arXiv:2203.15556}, 2022.

\bibitem[Hofmann et~al.(1999)]{hofmann1999probabilistic}
Thomas Hofmann et~al.
\newblock Probabilistic latent semantic analysis.
\newblock In \emph{UAI}, volume~99, pp.\  289--296, 1999.

\bibitem[Izzo et~al.(2021)Izzo, Smart, Chaudhuri, and Zou]{izzo2021approximate}
Zachary Izzo, Mary~Anne Smart, Kamalika Chaudhuri, and James Zou.
\newblock Approximate data deletion from machine learning models, 2021.

\bibitem[Jang et~al.(2023)Jang, Yoon, Yang, Cha, Lee, Logeswaran, and
  Seo]{jang2023knowledge}
Joel Jang, Dongkeun Yoon, Sohee Yang, Sungmin Cha, Moontae Lee, Lajanugen
  Logeswaran, and Minjoon Seo.
\newblock Knowledge unlearning for mitigating privacy risks in language models,
  2023.
\newblock URL \url{https://openreview.net/forum?id=zAxuIJLb38}.

\bibitem[Kumar et~al.(2022)Kumar, Raghunathan, Jones, Ma, and
  Liang]{kumar2022finetuning}
Ananya Kumar, Aditi Raghunathan, Robbie~Matthew Jones, Tengyu Ma, and Percy
  Liang.
\newblock Fine-tuning can distort pretrained features and underperform
  out-of-distribution.
\newblock In \emph{International Conference on Learning Representations}, 2022.
\newblock URL \url{https://openreview.net/forum?id=UYneFzXSJWh}.

\bibitem[Kurmanji et~al.(2023)Kurmanji, Triantafillou, Hayes, and
  Triantafillou]{kurmanji2023towards}
Meghdad Kurmanji, Peter Triantafillou, Jamie Hayes, and Eleni Triantafillou.
\newblock Towards unbounded machine unlearning.
\newblock In \emph{Thirty-seventh Conference on Neural Information Processing
  Systems}, 2023.
\newblock URL \url{https://openreview.net/forum?id=OveBaTtUAT}.

\bibitem[Lee et~al.(2021)Lee, Lei, Saunshi, and Zhuo]{lee2021predicting}
Jason~D Lee, Qi~Lei, Nikunj Saunshi, and Jiacheng Zhuo.
\newblock Predicting what you already know helps: Provable self-supervised
  learning.
\newblock \emph{Advances in Neural Information Processing Systems},
  34:\penalty0 309--323, 2021.

\bibitem[Lee et~al.(2024)Lee, Cooper, Choquette-Choo, Liu, Jagielski,
  Mireshghallah, Ahmed, Grimmelmann, Bau, De~Sa, Delgado, Shmatikov, Filippova,
  Neel, Bogen, Cyphert, Lemley, and Papernot]{lee2024extended}
Katherine Lee, A.~Cooper, Christopher Choquette-Choo, Ken Liu, Matthew
  Jagielski, Niloofar Mireshghallah, Lama Ahmed, James Grimmelmann, David Bau,
  Christopher De~Sa, Fernando Delgado, Vitaly Shmatikov, Katja Filippova, Seth
  Neel, Miranda Bogen, Amy Cyphert, Mark Lemley, and Nicolas Papernot.
\newblock Extended abstract: Machine unlearning doesn't do what you think, 04
  2024.

\bibitem[Li \& McCallum(2006)Li and McCallum]{li2006pachinko}
Wei Li and Andrew McCallum.
\newblock Pachinko allocation: Dag-structured mixture models of topic
  correlations.
\newblock In \emph{Proceedings of the 23rd international conference on Machine
  learning}, pp.\  577--584, 2006.

\bibitem[Liu et~al.(2024)Liu, Lou, Qin, and Ren]{liu2024certified}
Jiaqi Liu, Jian Lou, Zhan Qin, and Kui Ren.
\newblock Certified minimax unlearning with generalization rates and deletion
  capacity.
\newblock \emph{Advances in Neural Information Processing Systems}, 36, 2024.

\bibitem[Liu et~al.(2021)Liu, Yuan, Fu, Jiang, Hayashi, and
  Neubig]{liu2021pretrainpromptpredictsystematic}
Pengfei Liu, Weizhe Yuan, Jinlan Fu, Zhengbao Jiang, Hiroaki Hayashi, and
  Graham Neubig.
\newblock Pre-train, prompt, and predict: A systematic survey of prompting
  methods in natural language processing, 2021.
\newblock URL \url{https://arxiv.org/abs/2107.13586}.

\bibitem[Longpre et~al.(2024)Longpre, Mahari, Chen, Obeng-Marnu, Sileo,
  Brannon, Muennighoff, Khazam, Kabbara, Perisetla, et~al.]{longpre2024large}
Shayne Longpre, Robert Mahari, Anthony Chen, Naana Obeng-Marnu, Damien Sileo,
  William Brannon, Niklas Muennighoff, Nathan Khazam, Jad Kabbara, Kartik
  Perisetla, et~al.
\newblock A large-scale audit of dataset licensing and attribution in ai.
\newblock \emph{Nature Machine Intelligence}, 6\penalty0 (8):\penalty0
  975--987, 2024.

\bibitem[Mahadevan \& Mathioudakis(2023)Mahadevan and
  Mathioudakis]{mahadevan2023cost}
Ananth Mahadevan and Michael Mathioudakis.
\newblock Cost-effective retraining of machine learning models.
\newblock \emph{arXiv preprint arXiv:2310.04216}, 2023.

\bibitem[Maini et~al.(2024)Maini, Feng, Schwarzschild, Lipton, and
  Kolter]{maini2024tofutaskfictitiousunlearning}
Pratyush Maini, Zhili Feng, Avi Schwarzschild, Zachary~C. Lipton, and J.~Zico
  Kolter.
\newblock Tofu: A task of fictitious unlearning for llms, 2024.
\newblock URL \url{https://arxiv.org/abs/2401.06121}.

\bibitem[Malladi et~al.(2023{\natexlab{a}})Malladi, Gao, Nichani, Damian, Lee,
  Chen, and Arora]{malladi2023finetuning}
Sadhika Malladi, Tianyu Gao, Eshaan Nichani, Alex Damian, Jason~D. Lee, Danqi
  Chen, and Sanjeev Arora.
\newblock Fine-tuning language models with just forward passes.
\newblock In \emph{Thirty-seventh Conference on Neural Information Processing
  Systems}, 2023{\natexlab{a}}.
\newblock URL \url{https://openreview.net/forum?id=Vota6rFhBQ}.

\bibitem[Malladi et~al.(2023{\natexlab{b}})Malladi, Wettig, Yu, Chen, and
  Arora]{malladi2023kernel}
Sadhika Malladi, Alexander Wettig, Dingli Yu, Danqi Chen, and Sanjeev Arora.
\newblock A kernel-based view of language model fine-tuning.
\newblock In \emph{International Conference on Machine Learning}, pp.\
  23610--23641. PMLR, 2023{\natexlab{b}}.

\bibitem[McKenna et~al.(2023)McKenna, Li, Cheng, Hosseini, Johnson, and
  Steedman]{mckenna2023sources}
Nick McKenna, Tianyi Li, Liang Cheng, Mohammad~Javad Hosseini, Mark Johnson,
  and Mark Steedman.
\newblock Sources of hallucination by large language models on inference tasks.
\newblock \emph{arXiv preprint arXiv:2305.14552}, 2023.

\bibitem[Mu \& Klabjan(2024)Mu and
  Klabjan]{mu2024rewindtodeletecertifiedmachineunlearning}
Siqiao Mu and Diego Klabjan.
\newblock Rewind-to-delete: Certified machine unlearning for nonconvex
  functions, 2024.
\newblock URL \url{https://arxiv.org/abs/2409.09778}.

\bibitem[Naous et~al.(2024)Naous, Ryan, Ritter, and Xu]{naous2024beer}
Tarek Naous, Michael Ryan, Alan Ritter, and Wei Xu.
\newblock Having beer after prayer? measuring cultural bias in large language
  models.
\newblock In Lun-Wei Ku, Andre Martins, and Vivek Srikumar (eds.),
  \emph{Proceedings of the 62nd Annual Meeting of the Association for
  Computational Linguistics (Volume 1: Long Papers)}, pp.\  16366--16393,
  Bangkok, Thailand, August 2024. Association for Computational Linguistics.
\newblock \doi{10.18653/v1/2024.acl-long.862}.
\newblock URL \url{https://aclanthology.org/2024.acl-long.862}.

\bibitem[Neel et~al.(2021)Neel, Roth, and Sharifi-Malvajerdi]{neel2021descent}
Seth Neel, Aaron Roth, and Saeed Sharifi-Malvajerdi.
\newblock Descent-to-delete: Gradient-based methods for machine unlearning.
\newblock In \emph{Algorithmic Learning Theory}, pp.\  931--962. PMLR, 2021.

\bibitem[Nguyen et~al.(2020)Nguyen, Low, and Jaillet]{nguyen2020variational}
Quoc~Phong Nguyen, Bryan Kian~Hsiang Low, and Patrick Jaillet.
\newblock Variational bayesian unlearning.
\newblock \emph{Advances in Neural Information Processing Systems},
  33:\penalty0 16025--16036, 2020.

\bibitem[Oquab et~al.(2024)Oquab, Darcet, Moutakanni, Vo, Szafraniec, Khalidov,
  Fernandez, HAZIZA, Massa, El-Nouby, Assran, Ballas, Galuba, Howes, Huang, Li,
  Misra, Rabbat, Sharma, Synnaeve, Xu, Jegou, Mairal, Labatut, Joulin, and
  Bojanowski]{oquab2024dinov}
Maxime Oquab, Timoth{\'e}e Darcet, Th{\'e}o Moutakanni, Huy~V. Vo, Marc
  Szafraniec, Vasil Khalidov, Pierre Fernandez, Daniel HAZIZA, Francisco Massa,
  Alaaeldin El-Nouby, Mido Assran, Nicolas Ballas, Wojciech Galuba, Russell
  Howes, Po-Yao Huang, Shang-Wen Li, Ishan Misra, Michael Rabbat, Vasu Sharma,
  Gabriel Synnaeve, Hu~Xu, Herve Jegou, Julien Mairal, Patrick Labatut, Armand
  Joulin, and Piotr Bojanowski.
\newblock {DINO}v2: Learning robust visual features without supervision.
\newblock \emph{Transactions on Machine Learning Research}, 2024.
\newblock ISSN 2835-8856.
\newblock URL \url{https://openreview.net/forum?id=a68SUt6zFt}.

\bibitem[Penedo et~al.(2023)Penedo, Malartic, Hesslow, Cojocaru, Cappelli,
  Alobeidli, Pannier, Almazrouei, and Launay]{penedo2023refinedweb}
Guilherme Penedo, Quentin Malartic, Daniel Hesslow, Ruxandra Cojocaru,
  Alessandro Cappelli, Hamza Alobeidli, Baptiste Pannier, Ebtesam Almazrouei,
  and Julien Launay.
\newblock The refinedweb dataset for falcon llm: outperforming curated corpora
  with web data, and web data only.
\newblock \emph{arXiv preprint arXiv:2306.01116}, 2023.

\bibitem[Qiao et~al.(2024)Qiao, Zhang, Tang, and
  Wei]{qiao2024efficientgeneralizablecertifiedunlearning}
Xinbao Qiao, Meng Zhang, Ming Tang, and Ermin Wei.
\newblock Efficient and generalizable certified unlearning: A hessian-free
  recollection approach, 2024.
\newblock URL \url{https://arxiv.org/abs/2404.01712}.

\bibitem[Radford et~al.(2021)Radford, Kim, Hallacy, Ramesh, Goh, Agarwal,
  Sastry, Askell, Mishkin, Clark, et~al.]{radford2021learning}
Alec Radford, Jong~Wook Kim, Chris Hallacy, Aditya Ramesh, Gabriel Goh,
  Sandhini Agarwal, Girish Sastry, Amanda Askell, Pamela Mishkin, Jack Clark,
  et~al.
\newblock Learning transferable visual models from natural language
  supervision.
\newblock In \emph{International conference on machine learning}, pp.\
  8748--8763. PMLR, 2021.

\bibitem[Recht et~al.(2012)Recht, Re, Tropp, and Bittorf]{recht2012factoring}
Ben Recht, Christopher Re, Joel Tropp, and Victor Bittorf.
\newblock Factoring nonnegative matrices with linear programs.
\newblock \emph{Advances in neural information processing systems}, 25, 2012.

\bibitem[Rifkin \& Klautau(2004)Rifkin and Klautau]{rifkin2004defense}
Ryan Rifkin and Aldebaro Klautau.
\newblock In defense of one-vs-all classification.
\newblock \emph{The Journal of Machine Learning Research}, 5:\penalty0
  101--141, 2004.

\bibitem[Saunshi et~al.(2021)Saunshi, Malladi, and
  Arora]{saunshi2021mathematical}
Nikunj Saunshi, Sadhika Malladi, and Sanjeev Arora.
\newblock A mathematical exploration of why language models help solve
  downstream tasks.
\newblock In \emph{International Conference on Learning Representations}, 2021.
\newblock URL \url{https://openreview.net/forum?id=vVjIW3sEc1s}.

\bibitem[Schuhmann et~al.(2022)Schuhmann, Beaumont, Vencu, Gordon, Wightman,
  Cherti, Coombes, Katta, Mullis, Wortsman, et~al.]{schuhmann2022laion}
Christoph Schuhmann, Romain Beaumont, Richard Vencu, Cade Gordon, Ross
  Wightman, Mehdi Cherti, Theo Coombes, Aarush Katta, Clayton Mullis, Mitchell
  Wortsman, et~al.
\newblock Laion-5b: An open large-scale dataset for training next generation
  image-text models.
\newblock \emph{Advances in Neural Information Processing Systems},
  35:\penalty0 25278--25294, 2022.

\bibitem[Sekhari et~al.(2021)Sekhari, Acharya, Kamath, and
  Suresh]{sekhari2021rememberwantforgetalgorithms}
Ayush Sekhari, Jayadev Acharya, Gautam Kamath, and Ananda~Theertha Suresh.
\newblock Remember what you want to forget: Algorithms for machine unlearning,
  2021.
\newblock URL \url{https://arxiv.org/abs/2103.03279}.

\bibitem[Shi et~al.(2024)Shi, Lee, Huang, Malladi, Zhao, Holtzman, Liu,
  Zettlemoyer, Smith, and Zhang]{shi2024musemachineunlearningsixway}
Weijia Shi, Jaechan Lee, Yangsibo Huang, Sadhika Malladi, Jieyu Zhao, Ari
  Holtzman, Daogao Liu, Luke Zettlemoyer, Noah~A. Smith, and Chiyuan Zhang.
\newblock Muse: Machine unlearning six-way evaluation for language models,
  2024.
\newblock URL \url{https://arxiv.org/abs/2407.06460}.

\bibitem[Soldaini et~al.(2024)Soldaini, Kinney, Bhagia, Schwenk, Atkinson,
  Authur, Bogin, Chandu, Dumas, Elazar, Hofmann, Jha, Kumar, Lucy, Lyu,
  Lambert, Magnusson, Morrison, Muennighoff, Naik, Nam, Peters, Ravichander,
  Richardson, Shen, Strubell, Subramani, Tafjord, Walsh, Zettlemoyer, Smith,
  Hajishirzi, Beltagy, Groeneveld, Dodge, and Lo]{soldaini-etal-2024-dolma}
Luca Soldaini, Rodney Kinney, Akshita Bhagia, Dustin Schwenk, David Atkinson,
  Russell Authur, Ben Bogin, Khyathi Chandu, Jennifer Dumas, Yanai Elazar,
  Valentin Hofmann, Ananya Jha, Sachin Kumar, Li~Lucy, Xinxi Lyu, Nathan
  Lambert, Ian Magnusson, Jacob Morrison, Niklas Muennighoff, Aakanksha Naik,
  Crystal Nam, Matthew Peters, Abhilasha Ravichander, Kyle Richardson, Zejiang
  Shen, Emma Strubell, Nishant Subramani, Oyvind Tafjord, Evan Walsh, Luke
  Zettlemoyer, Noah Smith, Hannaneh Hajishirzi, Iz~Beltagy, Dirk Groeneveld,
  Jesse Dodge, and Kyle Lo.
\newblock Dolma: an open corpus of three trillion tokens for language model
  pretraining research.
\newblock In Lun-Wei Ku, Andre Martins, and Vivek Srikumar (eds.),
  \emph{Proceedings of the 62nd Annual Meeting of the Association for
  Computational Linguistics (Volume 1: Long Papers)}, pp.\  15725--15788,
  Bangkok, Thailand, August 2024. Association for Computational Linguistics.
\newblock \doi{10.18653/v1/2024.acl-long.840}.
\newblock URL \url{https://aclanthology.org/2024.acl-long.840}.

\bibitem[Somepalli et~al.(2023)Somepalli, Singla, Goldblum, Geiping, and
  Goldstein]{somepalli2023diffusion}
Gowthami Somepalli, Vasu Singla, Micah Goldblum, Jonas Geiping, and Tom
  Goldstein.
\newblock Diffusion art or digital forgery? investigating data replication in
  diffusion models.
\newblock In \emph{Proceedings of the IEEE/CVF Conference on Computer Vision
  and Pattern Recognition}, pp.\  6048--6058, 2023.

\bibitem[Thudi et~al.(2022)Thudi, Jia, Shumailov, and
  Papernot]{thudi2022necessity}
Anvith Thudi, Hengrui Jia, Ilia Shumailov, and Nicolas Papernot.
\newblock On the necessity of auditable algorithmic definitions for machine
  unlearning.
\newblock In \emph{31st USENIX Security Symposium (USENIX Security 22)}, pp.\
  4007--4022, Boston, MA, August 2022. USENIX Association.
\newblock ISBN 978-1-939133-31-1.
\newblock URL
  \url{https://www.usenix.org/conference/usenixsecurity22/presentation/thudi}.

\bibitem[Ullah et~al.(2021)Ullah, Mai, Rao, Rossi, and Arora]{ullah2021machine}
Enayat Ullah, Tung Mai, Anup Rao, Ryan~A Rossi, and Raman Arora.
\newblock Machine unlearning via algorithmic stability.
\newblock In \emph{Conference on Learning Theory}, pp.\  4126--4142. PMLR,
  2021.

\bibitem[Wei et~al.(2021)Wei, Xie, and Ma]{wei2021pretrained}
Colin Wei, Sang~Michael Xie, and Tengyu Ma.
\newblock Why do pretrained language models help in downstream tasks? an
  analysis of head and prompt tuning.
\newblock \emph{Advances in Neural Information Processing Systems},
  34:\penalty0 16158--16170, 2021.

\bibitem[Wu et~al.(2023)Wu, Lee, and Ge]{wu2023connecting}
Chenwei Wu, Holden Lee, and Rong Ge.
\newblock Connecting pre-trained language model and downstream task via
  properties of representation.
\newblock In \emph{Thirty-seventh Conference on Neural Information Processing
  Systems}, 2023.
\newblock URL \url{https://openreview.net/forum?id=YLOJ4aKAka}.

\bibitem[Y{\"u}ksel et~al.(2024)Y{\"u}ksel, Boursier, and
  Flammarion]{yuksel2024firstorder}
O{\u{g}}uz~Kaan Y{\"u}ksel, Etienne Boursier, and Nicolas Flammarion.
\newblock First-order {ANIL} provably learns representations despite
  overparametrisation.
\newblock In \emph{The Twelfth International Conference on Learning
  Representations}, 2024.
\newblock URL \url{https://openreview.net/forum?id=if2vRbS8Ew}.

\bibitem[Zhai et~al.(2023)Zhai, Mustafa, Kolesnikov, and
  Beyer]{zhai2023sigmoid}
Xiaohua Zhai, Basil Mustafa, Alexander Kolesnikov, and Lucas Beyer.
\newblock Sigmoid loss for language image pre-training.
\newblock In \emph{Proceedings of the IEEE/CVF International Conference on
  Computer Vision}, pp.\  11975--11986, 2023.

\bibitem[Zhang et~al.(2024{\natexlab{a}})Zhang, Dong, Wang, and
  Li]{zhang2024certifiedunlearningdeepneural}
Binchi Zhang, Yushun Dong, Tianhao Wang, and Jundong Li.
\newblock Towards certified unlearning for deep neural networks,
  2024{\natexlab{a}}.
\newblock URL \url{https://arxiv.org/abs/2408.00920}.

\bibitem[Zhang et~al.(2024{\natexlab{b}})Zhang, Lin, Bai, and
  Mei]{zhang2024negative}
Ruiqi Zhang, Licong Lin, Yu~Bai, and Song Mei.
\newblock Negative preference optimization: From catastrophic collapse to
  effective unlearning.
\newblock In \emph{First Conference on Language Modeling}, 2024{\natexlab{b}}.
\newblock URL \url{https://openreview.net/forum?id=MXLBXjQkmb}.

\end{thebibliography}
\bibliographystyle{iclr2025_conference}

\newpage
\appendix
\section{Precise Description of $\learnbase$}
\label{sec:app_learnsketch}
\subsection{Complete Description} \label{appsec: algo description}

\begin{algorithm}[H] \label{algo: high level}
    \caption{High level learning algorithm ($\mathcal{A}$)}
    \begin{algorithmic}
        \STATE \textbf{Input:} document corpus $S =\{d_i\}_{i=1}^m$, anchor word tolerance $\epsilon_0$
        \STATE \textbf{Output:} matrices $\mA$,$\mR$
        \STATE $\mQ$ = word co-occurrences
        \STATE $\Bar{\mQ}$ = row-normalized $\mQ$
        \STATE $P$ = RecoverAnchors($\{\Bar{\mQ}_1,\dots, \Bar{\mQ}_n\}$)
        \STATE $\mA, \mR$ = RecoverTopics($\mQ, S$)
        \RETURN $\mA, \mR$
    \end{algorithmic}
\end{algorithm}

\begin{algorithm}[H]
    \caption{RecoverAnchors, same as \citet{arora2012practicalalgorithmtopicmodeling}}
    \begin{algorithmic}
        \STATE \textbf{Input:} Row-normalized co-ocurrence matrix $\Bar{\mQ}$ and $\epsilon_0$ tolerance parameter
        \STATE \textbf{Output:} $r$ points of this perturbed simplex close to the vertices of the actual simplex
        \STATE Project the rows to a randomly chosen $4\log n /\eps_0^2$ dimensional subspace
        \STATE $S\leftarrow \{\bar \mQ_i\}$ where $\bar\mQ_i$ is the furthest point from the origin
        \FOR{$i$ in $1,\dots,r-1$} 
        \STATE Let $\bar \mQ_j$ be the row of $\mQ$ with largest distance to span($S$)
        \STATE $S\leftarrow S\cup \{\bar\mQ_j\}$
        \ENDFOR
        $S=\{\bar \mQ_{s_1},\dots, \bar \mQ_{s_r}\}$
        \FOR {$i$ in $1,\dots, r$}
        \STATE Let $\bar\mQ_j$ be the point that has largest distance to span($S\setminus \{\bar \mQ_{s_i}\}$)
        \STATE Remove $\bar \mQ_{s_i}$ from $S$ and insert $\bar\mQ_j$ into $S$
        \ENDFOR
        \RETURN S
    \end{algorithmic}
\end{algorithm}
\begin{algorithm}[H]
    \caption{Recover Topics, from \citet{arora2012practicalalgorithmtopicmodeling} }
    \begin{algorithmic}
        \STATE \textbf{Input:} Co-ocurrence matrix $\mQ$, anchor words $P = \{s_1,\dots, s_k\}$, tolerance parameter $\epsilon_0$
        \STATE \textbf{Output:} Matrices $\mA,\mR$
        \STATE $\Bar{\mQ}$ = row normalized $\mQ$
        \STATE Store the normalization constants $\vp = \mQ\mathbf{1}$
        \FOR{$i$ in $1,\dots, n$}
            \STATE Solve $\mC_i = \argmin\limits_{\vv\in \Delta_r} \|\Bar{\mQ}_i - \vv^\top \Bar{\mQ}_P \|^2$ \\
            \STATE up to $\eps_0$ accuracy
        \ENDFOR
        \STATE $\mA^\prime = \mathrm{diag}(\vp) \mC$
        \STATE $\mA$ = column-sum-one normalized $\mA^\prime$
        \STATE $\mR= \mA^\dagger \mQ \mA^{\dagger\top}$ where $\mA^\dagger$ is the pseudoinverse of $\mA$
        \RETURN $\mA,\mR$
    \end{algorithmic}
\end{algorithm}

More formally, the co-occurrence matrix is constructed as follows. For each document, let $H_d\in\R^n$ be the frequency vector of each word in the document; the sum of its entries should be $L$. Then, for a document $d$, consider the matrix \begin{align*}
    \mG_d := \tilde \mH_d \tilde\mH_d^\top - \hat \mH_d
\end{align*}
where \begin{align*}
    &\tilde \mH_d := \frac{\mH_d}{\sqrt{L(L-1)}} \\
    &\hat \mH_d:= \frac{\mathrm{diag}(\mH_d)}{L(L-1)}
\end{align*}

In particular, the denominator term $L(L-1)$ is precisely the number of co-occurences in each document, by simple combinatorics, and it can be seen that the sum of the entries of $\mG_d$ is always 1. Our co-ocurrence matrix $\mQ$ is defined to be\begin{align*}
    \mQ := \frac{1}{m}\sum_{i=1}^m \mG_d
\end{align*}
so that $\mQ$ also has entries that sum to 1. By linearity of expectation, we have \begin{align*}
    \E[\mQ] = \E[\mG_d] = \mA^\star \E[\mX_d \mX_d^\top] \mA^{\star \top}
\end{align*}
which implies that as the number of documents increases, $\mQ$ concentrates around $\mA\E[\mX \mX^\top] \mA^\top = \E[\mM \mM^\top]$. Therefore, we should expect $\mA^\dagger \mQ \mA^{\dagger\top}$ to concentrate around $\E[\mX \mX^\top] = \mR^\star$.

\subsection{Sketch: Population Analysis}
To understand this algorithm, consider the setting where we have infinitely many documents. Specifically, consider two words $w_1,w_2$ in a document and their respective topics $z_1,z_2$. Then, this population co-occurrence matrix $\mQ$ will have elements $\mQ_{i,j} = \Pr[w_1=i,w_2=j]$, and the row-normalized co-occurrence matrix $\Bar{\mQ}$ will have entries $\Bar{\mQ}_{i,j} = \Pr[w_2=j | w_1=i]$. Moreover, we have that $\mA_{i,k} = \Pr[w_1=i | z_1=k] = \Pr[w_2=i | z_2=k]$.

Consider the set of anchor words $P = \{s_1,\dots,s_r\} \subseteq [n]$, where $s_k$ is the anchor word for topic $k$. Then, observe that for an anchor word row $s_k$ of $\Bar{\mQ}$, it holds that \begin{align*}
    \Bar{\mQ}_{s_k, j} = \Pr[w_2=j | w_1=s_k] 
    &= \sum_{k^\prime} \Pr[z_1=k^\prime | w_1=s_k] \Pr[w_2=j | w_1=s_k, z_1=k^\prime] \\
    &= \Pr[w_2=j|w_1=s_k, z_1=k] \\
    &= \Pr[w_2=j|z_1=k]
\end{align*}
where the second line follows from only $\Pr[z_1=k|w_1=s_k]=1$ in the summation, and the last line follows from $w_2,w_1$ are conditionally independent given $z_1$. Furthermore, for non-anchor word rows $i$ of $\Bar{\mQ}$, it holds that \begin{align*}
    \Bar{\mQ}_{i,j} = \sum_k \Pr[z_1=k|w_1=i] \Pr[w_2=j|z_1=k]
\end{align*}
where again we use that $w_2,w_1$ are conditionally independent $z_1$. For a word $i$, let $\mC_i \in \mathbb{R}^{r}$ be the vector such that $\mC_{i,k} := \Pr[z_1=k|w_1=i]$. Then, it holds that $\Bar{\mQ}_i = \vc_i^\top \Bar{\mQ}_S$, where $\Bar{\mQ}_S$ is the submatrix of $\Bar{\mQ}$ constrained to the anchor word rows. In other words, for every word $i$, $\Bar{\mQ}_i$ is a convex combination of rows of $\Bar{\mQ}_S$.

In the algorithm, one can see that $\mA^\prime_{i,k} = \mC_{i,k}  \vp_i$. Normalizing this along each column, we obtain \begin{align*}
    \mA_{i,k} = \frac{\mC_{i,k} \vp_i}{\sum_{i^\prime}\mC_{i^\prime,k} \vp_{i^\prime}}  = \frac{\Pr[z_1=k|w_1=i] \Pr[w_1=i]}{\sum_{i^\prime}\Pr[z_1=k|w_1=i^\prime] \Pr[w_1=i^\prime]} = \Pr[w_1=i|z_1=k]
\end{align*}
 
Hence, in the infinite document limit, this algorithm recovers the ground truth $\mA^\star, \mR^\star$.

\section{From properties of the learning algorithm to the proof of \Cref{thm: topic model deletion capacity}}

We first give the formal statement of \Cref{thm: topic model deletion capacity}.

\begin{theorem}[Formal statement of \Cref{thm: topic model deletion capacity}] 
    Let $\learnbase$ be the learning algorithm described in the prior sections and $\unlearnbase$ be the unlearning algorithm in \Cref{alg:unlearn}. Then, $(\learnbase, \unlearnbase)$ performs utility-preserving unlearning with deletion capacity
    \begin{align*}
    T_{\epsilon,\delta}^{\learnbase,\unlearnbase}(m) \geq c \cdot \min \left\{ \frac{m \eps}{r^2\sqrt{rn \log 1/\delta}}, \frac{0.001 m }{r^2}\right\}
    \end{align*}
    where $\numdocs$ is the number of training documents, $\numtopics$ is the number of topics, and $c$ is a constant dependent on $\mathcal{D}$. The loss function $h$ used in the utility-preserving definition is the maximum entrywise error from the ground truth topic model $\mA^\star$.
\end{theorem}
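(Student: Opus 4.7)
The plan is to establish the two conditions of utility-preserving unlearning from \Cref{def:unlearning} separately, using the pre-existing approximation guarantees on $\mA^S$ and $\mR^S$ as black boxes whenever possible. First, I would pick a deletion budget $m_U$ small enough (specifically $m_U \lesssim m\eps_0 (\gamma p)^3/(a^2 r^2)$) so that the set of anchor words recovered from $S$ and from $S \setminus S_f$ coincide with high probability: this follows by combining \Cref{lemma:anchor_words} with a concentration argument showing that the rows of $\bar\mQ$ move by $O(m_U/m)$ in $\ell_2$ when $S_f$ is removed. Once the anchor set is shown to be stable, both retraining and unlearning reduce to solving the same simplex-constrained quadratic program on a slightly perturbed right-hand side, which is the observation the rest of the analysis exploits.

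For the indistinguishability condition (1), I would quantify the $\ell_2$-sensitivity of \Cref{alg:unlearn}'s output to adding $S_f$ back. Because the inner objective $\mathcal{L}(\vv; S\setminus S_f) = \|\bar\mQ_i^F - \vv^\top \bar\mQ_P^F\|_2^2$ is quadratic in $\vv$, a single Newton step computes the unconstrained minimizer exactly, so $\bar\mC_i^F$ prior to simplex projection coincides with the retrained optimum up to the change in anchor rows. Tracking how perturbations in $\bar\mQ^F$ propagate through this closed-form solution yields $\|\bar\mC - \mC^F\|_\infty = O(a r m_U/(m \eps_0 \gamma p))$, which is \Cref{lemma:base_indistinguishability}. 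The column-normalization that produces $\bar\mA$ from $\bar\mC$ amplifies entrywise error by at most a factor of $ar$ under topic imbalance, and translating entrywise to $\ell_2$ sensitivity adds a $\sqrt{nr}$ factor. I then apply \Cref{lemma: gaussian mechanism} with the prescribed $\sigma$, and note that projection onto $\Delta_n$ is post-processing, which preserves $(\eps,\delta)$-indistinguishability.

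For the utility condition (2), I would bound $\E[\|\tilde\mA - \mA^\star\|_\infty]$ by splitting $\tilde\mA - \mA^\star = (\tilde\mA - \bar\mA) + (\bar\mA - \mA^\star)$. The first piece is the maximum of $nr$ Gaussians with variance $\sigma^2$, contributing $\sigma\sqrt{\log(nr)}$; the second is controlled by applying \Cref{lemma:c_guarantee} to the perturbed data and then invoking the normalization argument from the previous step. Plugging in $\sigma$ from the indistinguishability analysis gives \Cref{lemma:base_utility}, and forcing the expectation to be at most $0.01$ produces two competing constraints on $m_U$ whose minimum matches the stated deletion capacity. The topic-topic covariance $\mR$ is handled in parallel: since $\bar\mR = \bar\mA^\dagger \mQ^F \bar\mA^{\dagger\top}$ is a smooth function of $(\bar\mA, \mQ^F)$, I would chain pseudoinverse perturbation bounds (which remain well-behaved because $\bar\mA$ is full rank by $p$-separability) with the same Gaussian-mechanism argument, then project onto the PSD cone.

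The step I expect to be the main obstacle is controlling the Newton update in the presence of the simplex projection: one must argue that $\mathrm{proj}_{\Delta_r}$ does not inflate the distance to $\mC_i^F$, which requires that $\mC_i^F$ itself lies in the simplex and that the unprojected Newton iterate is close enough for non-expansiveness to kick in. A secondary delicate point is ensuring the Hessian $H_{\mC_i^S}$ is well-conditioned uniformly in $i$, so that a single Newton step under the perturbed data is stable; this boils down to lower bounding the smallest singular value of $\bar\mQ_P^F$ by $\gamma p$ via the robustness of the anchor simplex, which is where the $1/(\gamma p)$ factors and the condition $20 r\delta/(\gamma p)^2 < \gamma p$ enter. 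Once these two points are handled, assembling the pieces gives $m_U \leq c\cdot \min\{m\eps/(r^2\sqrt{rn\log(1/\delta)}),\, 0.001 m/r^2\}$ as claimed.
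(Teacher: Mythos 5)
Your proposal follows essentially the same line as the paper's proof: cap $m_U$ so that the anchor set is stable, propagate the row-perturbation of $\bar\mQ$ through the quadratic objective and the simplex projection to obtain the $\ell_2$-sensitivity $O\bigl(\sqrt{nr}\,(ar)^2 m_U/(m\eps_0\gamma p)\bigr)$, invoke the Gaussian mechanism (with the $\Delta_n$ projection as post-processing), and split $\tilde\mA-\mA^\star$ into a Gaussian-max term plus the deterministic drift, forcing the $0.01$ utility bound to yield the two competing constraints on $m_U$. One small conceptual slip worth correcting: projection onto the convex set $\Delta_r$ is unconditionally $1$-Lipschitz, so non-expansiveness does not ``kick in'' only once the iterate is close; the quantity that actually needs to be small is $\|\mC_{i,\mathrm{uncon}}-\mC_i^F\|$ (unconstrained versus constrained minimizer of the same residual), and that is supplied by lower-bounding $\sigma_{\min}(\bar\mQ_P^F)\gtrsim\gamma p$ via the $\gamma p/2$-robustness of the perturbed anchor simplex, not by any proximity hypothesis on the projection itself.
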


\subsection{Preliminaries}
When the norm is not specified, we assume that it is the Euclidean norm $\|\cdot\|_2$.
We now start off with a technical assumption on the precision of the learning algorithm. 
\begin{assumption}
    $\epsilon_0\leq O\qty(1/\sqrt{nr})$.
\end{assumption}

\begin{assumption}
    Every word appears with probability $\epsilon_0/4ar$ without loss of generality; see discussion in \citet{arora2012learning}. Essentially, less probable words can be combined in a sense to form a single category of "rare" words.
\end{assumption}

We recall the definitions from \citet{arora2012practicalalgorithmtopicmodeling}.
\begin{definition}[$\beta$-robust simplex]
    A simplex $P$ is $\beta$-robust if for every vertex $v$ of $P$, the $\ell_2$ distance between $v$ and the convex hull of the rest of the vertices as at least $\beta$.
\end{definition}

\begin{definition}
    Let $\{a_i\}_{i=1}^n$ be a set of points whose convex hull is a simplex with vertices $\{v_i\}_{i=1}^r$.
    We say a set of $r$ points is $\epsilon$-close the vertex set $\{v_i\}_{i=1}^r$ if each of the $r$ points is 
    $\epsilon$-close in $\ell_2$ distance to a different vertex in this vertex set.
\end{definition}

The following result will be used throughout our proof. 
\begin{proposition}[\citet{arora2012learning}]
    $\Bar{\mQ}_P^\star$ in population is $\gamma p$-robust. 
\end{proposition}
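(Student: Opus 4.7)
The plan is to reduce the $\gamma p$-robustness of the simplex whose vertices are the rows of $\Bar{\mQ}_P^\star$ to a lower bound on its smallest singular value $\sigma_r(\Bar{\mQ}_P^\star)$, and then establish that bound using the structure of $\mA^\star$ and $\mR^\star$. Fix any anchor index $k\in[r]$ and nonnegative weights $\{\alpha_{k'}\}_{k'\neq k}$ summing to $1$. Define $\vu\in\sR^r$ by $u_k = 1$ and $u_{k'} = -\alpha_{k'}$ for $k'\neq k$. Then
\begin{align*}
\Bar{\mQ}_{s_k}^\star - \sum_{k'\neq k}\alpha_{k'}\Bar{\mQ}_{s_{k'}}^\star \;=\; \vu^\top \Bar{\mQ}_P^\star,
\end{align*}
and since $\|\vu\|_2 \geq |u_k| = 1$, we have $\|\vu^\top \Bar{\mQ}_P^\star\|_2 \geq \sigma_r(\Bar{\mQ}_P^\star)\cdot\|\vu\|_2 \geq \sigma_r(\Bar{\mQ}_P^\star)$. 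It therefore suffices to prove $\sigma_r(\Bar{\mQ}_P^\star)\geq\gamma p$.

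Next I would exhibit a clean factorization of $\Bar{\mQ}_P^\star$. The population identity $\Bar{\mQ}_{s_k,j}^\star = \Pr[w_2=j\mid z_1=k]$ from \Cref{sec:app_learnsketch}, expanded through $z_2$, gives
\begin{align*}
\Bar{\mQ}_P^\star \;=\; D_R^{-1}\,\mR^\star\,\mA^{\star\top},
\end{align*}
where $D_R=\mathrm{diag}(\mR^\star\vone)$ collects the row-sum normalizers. Using $(\mR^\star\vone)_k = \E[\ermX_k\sum_{k'}\ermX_{k'}] = \E[\ermX_k]\leq 1$ (since topic distributions lie in the probability simplex), we have $\sigma_r(D_R^{-1}) = 1/\max_k(\mR^\star\vone)_k \geq 1$.

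To close the proof I would bound the other two factors. By $p$-separability (\Cref{assume:p_sep}), the anchor rows of $\mA^\star$ contain a diagonal submatrix with entries $\mA^\star_{s_k,k}\geq p$, so $\|\mA^\star\vx\|_2^2 \geq \sum_k (\mA^\star_{s_k,k})^2 x_k^2 \geq p^2\|\vx\|_2^2$ and hence $\sigma_r(\mA^\star)\geq p$. For $\mR^\star$, symmetry, nonnegativity, and row sums at most $1$ force $\sigma_1(\mR^\star)\leq 1$ by Perron--Frobenius, so the condition-number parameter $\gamma$ of $\mR^\star$ translates into $\sigma_r(\mR^\star)\geq\gamma$. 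Applying the submultiplicative inequality $\sigma_r(XY)\geq\sigma_r(X)\sigma_r(Y)$ twice then gives
\begin{align*}
\sigma_r(\Bar{\mQ}_P^\star) \;\geq\; \sigma_r(D_R^{-1})\cdot\sigma_r(\mR^\star)\cdot\sigma_r(\mA^\star) \;\geq\; 1\cdot\gamma\cdot p \;=\; \gamma p,
\end{align*}
which completes the proof.

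The main obstacle is aligning the paper's convention for $\gamma$ (``the condition number of $\mR^\star$'') with a direct lower bound on $\sigma_r(\mR^\star)$. The simplex constraint on $\mathcal{D}$ yields the clean bound $\sigma_1(\mR^\star)\leq 1$ that makes this translation essentially trivial; everything else is a routine chain of singular-value inequalities together with the diagonal-block observation from $p$-separability.
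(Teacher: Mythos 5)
The paper does not actually prove this proposition; it is cited directly from \citet{arora2012learning}, so there is no in-paper proof to compare against. Evaluating your argument on its own merits: the overall strategy is sound and is essentially the one used in the source work. The reduction from $\gamma p$-robustness to a lower bound on $\sigma_r(\Bar{\mQ}_P^\star)$ is correct (the inequality $\|\vu^\top \Bar{\mQ}_P^\star\|_2 \geq \sigma_r(\Bar{\mQ}_P^\star)\|\vu\|_2$ holds for the tall matrix $\Bar{\mQ}_P^{\star\top}$, and $\|\vu\|_2 \geq 1$), the factorization $\Bar{\mQ}_P^\star = D_R^{-1}\mR^\star\mA^{\star\top}$ with $D_R = \mathrm{diag}(\mR^\star\vone)$ follows from the anchor-row identity and $p$-separability, and the bounds $\sigma_r(D_R^{-1})\geq 1$ (since $(\mR^\star\vone)_k = \Pr[z=k]\leq 1$) and $\sigma_r(\mA^\star)\geq p$ (via the diagonal anchor-row submatrix) are both correct.

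The genuine gap is the step concluding $\sigma_r(\mR^\star)\geq\gamma$ from $\sigma_1(\mR^\star)\leq 1$. If $\gamma$ were the usual condition number $\sigma_1(\mR^\star)/\sigma_r(\mR^\star)\geq 1$, then $\sigma_1\leq 1$ yields $\sigma_r = \sigma_1/\gamma \leq 1/\gamma$, which is the \emph{wrong} direction; taking $\gamma$ to be the reciprocal condition number $\sigma_r/\sigma_1$ instead gives $\sigma_r = \gamma\sigma_1 \leq \gamma$, again the wrong direction. The Perron--Frobenius bound $\sigma_1\leq 1$ cannot be leveraged to derive $\sigma_r\geq\gamma$ for any honest reading of ``condition number.'' The correct resolution is purely a matter of convention: in \citet{arora2012learning,arora2012practicalalgorithmtopicmodeling}, $\gamma$ is defined to be the smallest singular value of $\mR^\star$ itself (the phrase ``condition number'' in the present paper's statement of the learning guarantee is imprecise shorthand), so $\sigma_r(\mR^\star)\geq\gamma$ holds by definition and your Perron--Frobenius argument is neither needed nor capable of supplying the bound. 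With that single fix, the chain $\sigma_r(\Bar{\mQ}_P^\star)\geq\sigma_r(D_R^{-1})\sigma_r(\mR^\star)\sigma_r(\mA^{\star\top})\geq\gamma p$ goes through.
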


We now list the high probability events we condition on throughout our proof. These follow from previous results in \citet{arora2012practicalalgorithmtopicmodeling}; they concern the properties of the output of the learning algorithm. 
\begin{proposition}
   With high probability, in our regime of $m$, the following hold:
   \begin{itemize}
       \item The correct anchor words are selected.
       \item Each word appears at least $O\qty(\frac{m\eps_0}{4ar})$ times. 
       \item The error in the empirical matrix $\hat{\mQ}$ is entrywise at most $\tilde{O}(1/\sqrt{m})$ from the population $\mQ^\star.$
   \end{itemize}
\end{proposition}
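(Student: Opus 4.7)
The statement bundles three independent concentration facts about intermediate quantities of $\learnbase$; the plan is to handle them in increasing order of subtlety, each reducing to a standard concentration inequality plus a union bound, and finally to feed the first two bounds into the structural Lemma on anchor word recovery.

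I start with the entrywise error on $\hat{\mQ}$. Using the representation $\mQ = \frac{1}{m}\sum_d \mG_d$ given in \Cref{appsec: algo description}, the $\mG_d$ are i.i.d.\ across documents with $\E[\mG_d] = \mQ^\star$, and each entry of $\mG_d$ is bounded (for constant $L$, both $\tilde{\mH}_d \tilde{\mH}_d^\top$ and $\hat{\mH}_d$ have entries in $[0, 1/(L(L-1))]$). Hoeffding's inequality then bounds any single entry of $\hat{\mQ} - \mQ^\star$ by $\tilde{\gO}(1/\sqrt{m})$ with failure probability polynomially small in $n$, and a union bound over the $n^2$ entries only costs an additional $\log n$ factor absorbed in the tilde.

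For the word-frequency lower bound, the assumption that every word has marginal probability at least $\eps_0/(4ar)$ implies that across $m$ documents of constant length $L$ the expected count of any fixed word is at least $\Omega(m\eps_0/(ar))$. Since each of the $Lm$ word draws is an i.i.d.\ sample from the (marginalized) word distribution, a multiplicative Chernoff bound gives that the empirical count is at least half the expected count with probability $1 - \exp(-\Omega(m\eps_0/(ar)))$; the required lower bound on $m$ from the Learning Guarantee ensures this is negligible, and a union bound over the $n$ words completes the step.

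For the first bullet, I combine the preceding two: since each row of $\bar{\mQ}$ is obtained from the corresponding row of $\mQ$ by dividing by the row sum $p_i$, the entrywise bound $\tilde{\gO}(1/\sqrt{m})$ on $\mQ$ together with $p_i \geq \Omega(m\eps_0/(ar))$ yields a per-row $\ell_2$ perturbation of at most $\delta = \tilde{\gO}(ar\sqrt{n}/(\eps_0 \sqrt{m}))$ from $\bar{\mQ}^\star$. In the regime $m \gtrsim \max\{ar^3\log n/(L(\gamma p)^6 \eps_0), \dots\}$ prescribed by the Learning Guarantee, this $\delta$ satisfies $20r\delta/(\gamma p)^2 < \gamma p$, so \Cref{lemma:anchor_words} applies and produces $r$ rows of $\bar{\mQ}$ that are $\gO(\delta/\gamma p)$-close to the anchor-word rows of $\bar{\mQ}^\star$. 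To upgrade this closeness to exact index recovery, I use the $p$-separability / $\gamma p$-robustness: every non-anchor row of $\bar{\mQ}^\star$ is a strict convex combination of the anchor-word rows and therefore lies at distance at least $\gamma p$ from any vertex of the population simplex, whereas all row perturbations are $o(\gamma p)$; hence the rows of $\bar{\mQ}$ nearest to each population vertex are precisely the anchor-word rows, and the greedy furthest-point selection in RecoverAnchors returns exactly the correct anchor-word indices.

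The main obstacle is the error propagation from $\mQ$ to $\bar{\mQ}$: the $1/p_i$ normalization inflates the entrywise error by an $\gO(ar/(m\eps_0))$ factor, so the regime of $m$ in the Learning Guarantee must be checked to be tight enough that the resulting $\delta$ still lies well below $(\gamma p)^3/r$, which is what both the quantitative hypothesis of \Cref{lemma:anchor_words} and the qualitative gap argument for exact index recovery require. Verifying this inequality in the regime, and keeping the failure probabilities of the three union bounds summable, are the only steps that go beyond routine Hoeffding/Chernoff calculations.
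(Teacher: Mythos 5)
The paper does not actually prove this proposition: it is imported wholesale from \citet{arora2012practicalalgorithmtopicmodeling} as a list of high-probability events to condition on, so your attempt to supply a self-contained argument is going beyond what the text does. Your treatment of the second and third bullets (Chernoff for word counts, Hoeffding plus a union bound over $n^2$ entries for $\hat{\mQ}$) is routine and correct. The first bullet, however, has two genuine gaps.

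First, your route from the entrywise bound on $\mQ$ to the per-row $\ell_2$ bound on $\bar{\mQ}$ is too lossy. Dividing the entrywise error $\tilde{\gO}(1/\sqrt{m})$ by the row mass and paying a $\sqrt{n}$ factor for the $\ell_2$ norm gives $\delta = \tilde{\gO}(ar\sqrt{n}/(\eps_0\sqrt{m}))$, whereas the bound the paper actually uses downstream is $\delta_2 = \gO(\sqrt{4ar/(m\eps_0)})$ — no $\sqrt{n}$ factor. The condition $20r\delta/(\gamma p)^2 < \gamma p$ is calibrated to $\delta_2$: it holds precisely when $m\eps_0 \gtrsim ar^3/(\gamma p)^6$, which is the first term of the sample-complexity bound. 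With your $\delta$ the same condition would require $m \gtrsim a^2 r^4 n/(\eps_0^2(\gamma p)^6)$, which the stated regime of $m$ does not guarantee. The fix is to bound $\|\bar{\mQ}_i - \bar{\mQ}_i^\star\|_2$ directly: conditioned on word $i$ occurring, the row of $\bar{\mQ}$ is an empirical distribution built from the $\Omega(m\eps_0/(4ar))$ occurrences of word $i$, and the $\ell_2$ error of an empirical distribution on $N$ samples is $\gO(1/\sqrt{N})$ with no dimension factor.

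Second, your upgrade from ``the selected rows are $\gO(\delta/\gamma p)$-close to the anchor rows'' to ``the correct anchor-word \emph{indices} are selected'' rests on the claim that every non-anchor row of $\bar{\mQ}^\star$ is at distance at least $\gamma p$ from every vertex of the simplex. That does not follow from $\gamma p$-robustness, which only lower-bounds the distance from each vertex to the convex hull of the \emph{other} vertices; a non-anchor word whose topic posterior is concentrated on a single topic has a row arbitrarily close to that vertex, so the nearest row to a vertex need not be the anchor word. \Cref{lemma:anchor_words} deliberately promises only closeness of the selected rows to the true vertices, and the downstream recovery of $\mC$ (\Cref{lemma:c_guarantee}) is robust to using such near-anchor words; ``the correct anchor words are selected'' should be read in that approximate sense rather than as exact index recovery.
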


We also utilize the following two key lemmas from \citet{arora2012practicalalgorithmtopicmodeling} that we touched upon in the main paper.
\begin{lemma}[Approximation Guarantee on Anchor Words]
    Suppose each row of $\Bar{\mQ}$ is at most $\delta$ distance away from the ground truth $\gamma p$-robust simplex $\Bar{\mQ}^\star$ in $\ell_2$ norm.
    If $20r\delta/(\gamma p)^2 < \gamma p$, then the set of anchor words found by the algorithm is
    $O(\delta/\gamma p)$-close to the ground truth anchor words. 
\end{lemma}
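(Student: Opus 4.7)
The plan is to analyze the two passes of the \texttt{RecoverAnchors} procedure separately, using the $\gamma p$-robustness of the ground-truth simplex $\bar \mQ_P^\star$ to argue inductively that each greedy selection lies near a distinct true vertex, and then to show that the cleanup pass tightens the approximation to the claimed $O(\delta/\gamma p)$ bound. Throughout, I would work with the exact (un-projected) rows; a standard Johnson--Lindenstrauss argument handles the random $4 \log n / \epsilon_0^2$-dimensional projection with distortion absorbed into the $\delta$ budget.

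First, for the initial greedy pass, I would prove by induction on $k = 1, \dots, r$ the following invariant: after $k$ selections, the chosen points $\bar \mQ_{s_1}, \dots, \bar \mQ_{s_k}$ are $O(\delta/\gamma p)$-close to $k$ distinct true vertices of $\bar \mQ_P^\star$. For the base case, the farthest-from-origin point in the perturbed set corresponds (up to $\delta$) to the farthest-from-origin point in the population simplex, which must be a true vertex since the norm is maximized at extreme points of a convex set. For the inductive step, I would prove a key technical claim: if $v_1, \dots, v_k$ are $\eta$-close to $k$ distinct vertices $w_{i_1}, \dots, w_{i_k}$ of a $\beta$-robust simplex, then for any other point $u$ in the simplex, the distance from $u$ to $\mathrm{span}(v_1, \dots, v_k)$ equals the distance from $u$ to $\mathrm{conv}(w_{i_1}, \dots, w_{i_k})$ up to error $O(k \eta / \beta)$. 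Combined with $\beta$-robustness, this means the farthest point in the perturbed set from $\mathrm{span}(v_1, \dots, v_k)$ is forced to be $O(\delta/\gamma p)$-close to one of the remaining vertices, because any non-vertex point lies within $\delta + O(k\eta/\beta)$ of the span, whereas a remaining true vertex is $\geq \gamma p$ from $\mathrm{conv}(w_{i_1}, \dots, w_{i_k})$.

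For the cleanup pass, I would reuse the same technical claim in the following way: once all $r$ vertices are represented by some point in $S$ within $O(\delta/\gamma p)$ accuracy, removing one selected point $\bar \mQ_{s_i}$ leaves a set whose span is $O(\delta/\gamma p)$-close to the span (equivalently, affine hull) of the other $r-1$ true vertices. The robustness guarantee says the dropped vertex is $\gamma p$ away from this affine hull, so the farthest point in the data from this reduced span must be $\delta + O(\delta/\gamma p) = O(\delta/\gamma p)$-close to the dropped vertex. Under the hypothesis $20 r \delta/(\gamma p)^2 < \gamma p$, the accumulated $O(r \delta/(\gamma p)^2)$ slack never exceeds the robustness margin, so the correct identity of each anchor vertex is preserved and the final guarantee $O(\delta/\gamma p)$ follows.

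The main obstacle will be quantifying precisely how the distance-to-span of a point in the perturbed set relates to its distance-to-convex-hull in the population, as errors in the selected points propagate when projecting onto their span. The cleanest way I see is to decompose any vertex $w$ as a convex combination (within $\delta$) involving the other true vertices, then replace each true vertex by its $\eta$-close selected surrogate and bound the resulting perturbation in the projection operator by $O(k\eta/\beta)$ using standard linear-algebraic sensitivity bounds (e.g.\ a Wedin-type inequality for projection onto nearly-parallel subspaces, using that the $\beta$-robustness lower-bounds the smallest singular value of the simplex edge matrix). Once this projection-perturbation lemma is in place, both passes reduce to straightforward applications of it together with the robustness inequality.
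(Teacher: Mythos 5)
First, a point of comparison: the paper does not prove this lemma at all --- in both the main text and the appendix it is imported verbatim from \citet{arora2012practicalalgorithmtopicmodeling}, so the only meaningful benchmark is the proof in that source. Your two-pass plan --- an induction showing each greedy selection lands near a distinct true vertex, with the accumulated error kept below the robustness margin via the hypothesis $20r\delta/(\gamma p)^2 < \gamma p$, followed by a cleanup pass that refines each selection to $O(\delta/\gamma p)$ --- is the same skeleton used there, so structurally you are on the right track.

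The genuine gap is in your key technical claim and in how you propose to establish it. You assert that, for points of the simplex, distance to $\mathrm{span}(v_1,\dots,v_k)$ tracks distance to $\mathrm{conv}(w_{i_1},\dots,w_{i_k})$ up to an additive $O(k\eta/\beta)$, and that ``$\beta$-robustness lower-bounds the smallest singular value of the simplex edge matrix.'' Neither holds as stated. Because $\mathrm{conv}(\cdot)\subseteq\mathrm{aff}(\cdot)\subseteq\mathrm{span}(\cdot)$, the distance to the convex hull is the \emph{largest} of the three quantities, so a lower bound on it (which is what $\beta$-robustness gives) implies nothing about the distance to the affine hull or span --- yet ``farthest from $\mathrm{span}(S)$'' is what the algorithm actually computes. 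Concretely, three probability vectors can be pairwise well separated (hence form a robust simplex in the convex-hull sense) while being nearly collinear, in which case one vertex is almost inside the affine hull of the other two; a perturbation of size $\delta$ much smaller than the robustness can then cause the greedy step to select a perturbed interior point instead of a point near the uncovered vertex, breaking your inductive invariant. The same substitution of ``affine hull'' for ``convex hull'' appears in your cleanup-pass argument (``the robustness guarantee says the dropped vertex is $\gamma p$ away from this affine hull''), so the gap is load-bearing in both passes. It cannot be repaired from the robustness alone: in this paper the needed conditioning is available only because $\gamma$ is \emph{defined} as the condition number of $\mR^\star$ and the $\gamma p$-robustness of $\bar{\mQ}_P^\star$ is a downstream consequence (\citet{arora2012learning}), so a singular-value lower bound on the anchor rows must be imported from that upstream fact rather than re-derived from the convex-hull robustness as your sketch proposes.
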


\begin{lemma}
    When $20r\delta/ (\gamma p)^2 < \gamma p$, it holds for every word $i$ that $\mC_i$ has entrywise error $O(\delta/(\gamma p)^2)$ from $\mC_i^\star$.
    \label{lemma:c_guarantee}
\end{lemma}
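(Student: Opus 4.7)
The plan is to prove the entrywise bound on $\mC_i - \mC_i^\star$ by combining a simple optimality/triangle-inequality argument with a bi-Lipschitz inversion of the linear map $\vv\mapsto \vv^\top \bar\mQ_P$ restricted to the zero-sum subspace. The residual factor of $1/\gamma p$ beyond what \Cref{lemma:anchor_words} provides will come from this inversion step, which is where the $\gamma p$-robustness of the anchor-word simplex is used.

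First I would control $\bar\mQ_P$. Under the hypothesis $20r\delta/(\gamma p)^2<\gamma p$, \Cref{lemma:anchor_words} guarantees that the recovered anchor word rows are $O(\delta/\gamma p)$-close in $\ell_2$ to a permutation of the ground-truth vertices. Because this perturbation is much smaller than $\gamma p$, the perturbed simplex $\bar\mQ_P$ is still $\Theta(\gamma p)$-robust. Second, I would establish a small-residual bound at the minimizer. Since $\mC_i^\star\in\Delta_r$ is a feasible point, by optimality
\begin{align*}
\|\bar\mQ_i-\mC_i^\top\bar\mQ_P\|_2 \;\leq\; \|\bar\mQ_i-(\mC_i^\star)^\top\bar\mQ_P\|_2.
\end{align*}
The right-hand side is bounded by $\|\bar\mQ_i-\bar\mQ_i^\star\|_2+\|(\mC_i^\star)^\top(\bar\mQ_P^\star-\bar\mQ_P)\|_2 \leq \delta+O(\delta/\gamma p)=O(\delta/\gamma p)$, using that $\mC_i^\star$ lies in the simplex so the second term is at most the worst per-row perturbation of anchor rows. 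A triangle inequality then gives
\begin{align*}
\|(\mC_i-\mC_i^\star)^\top\bar\mQ_P\|_2\;\leq\; O(\delta/\gamma p).
\end{align*}

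The main step is then to invert this: given $\vv:=\mC_i-\mC_i^\star$ with $\sum_k v_k=0$ and $\|\vv^\top\bar\mQ_P\|_2\leq O(\delta/\gamma p)$, I want to show $\|\vv\|_\infty\leq O(\delta/(\gamma p)^2)$. The plan is to decompose $\vv=\vv_+-\vv_-$ into its positive and negative parts, which have disjoint supports $S_+$ and $S_-$, with $\|\vv_+\|_1=\|\vv_-\|_1=s$. Write $\vv^\top\bar\mQ_P=s(A-B)$ where $A=(\vv_+/s)^\top\bar\mQ_P$ and $B=(\vv_-/s)^\top\bar\mQ_P$ are convex combinations of anchor rows from disjoint index sets. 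Choosing $k^\star\in S_+$ where $|v_{k^\star}|=\|\vv\|_\infty$, project onto the direction normal to the affine hull of $\{(\bar\mQ_P)_j:j\neq k^\star\}$ at $(\bar\mQ_P)_{k^\star}$. By $\Theta(\gamma p)$-robustness, $(\bar\mQ_P)_{k^\star}$'s signed projection exceeds that of any point in $\mathrm{conv}(\{(\bar\mQ_P)_j:j\neq k^\star\})$ by $\Omega(\gamma p)$, while both $B$ and the part of $A$ orthogonal to $(\bar\mQ_P)_{k^\star}$ project into the ``zero'' side. Hence the projection of $A-B$ is at least $\Omega(\gamma p)\cdot (\vv_+/s)_{k^\star}=\Omega(\gamma p)\cdot \|\vv\|_\infty/s$, giving
\begin{align*}
\|\vv^\top\bar\mQ_P\|_2 \;=\; s\|A-B\|_2 \;\geq\; \Omega(\gamma p)\,\|\vv\|_\infty.
\end{align*}

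Combining with the small-residual bound yields $\|\mC_i-\mC_i^\star\|_\infty\leq O(\delta/(\gamma p)^2)$, as required. The main obstacle is the inversion step: one needs the disjoint-support decomposition to exploit robustness correctly, since the naive attempt to write $\vv^\top\bar\mQ_P$ as a vertex minus a convex combination fails because the coefficients can have mixed signs. Once the separating-hyperplane argument via the projection normal to a face is set up, the rest is bookkeeping and triangle inequalities.
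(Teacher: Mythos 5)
The paper does not actually prove this lemma from first principles --- it imports it verbatim as Lemma~B.1 of \citet{arora2012practicalalgorithmtopicmodeling} --- so you are supplying an argument the authors treat as a black box. Your architecture is the right one and matches the cited proof: optimality of $\mC_i$ over $\Delta_r$ plus feasibility of $\mC_i^\star$ gives the residual bound $\|(\mC_i-\mC_i^\star)^\top\bar\mQ_P\|_2\le O(\delta/\gamma p)$ (your accounting of the extra $1/\gamma p$ from the $O(\delta/\gamma p)$-accurate anchor rows of \Cref{lemma:anchor_words} is correct), and the remaining $1/\gamma p$ must come from inverting $\vv\mapsto\vv^\top\bar\mQ_P$ on the zero-sum subspace.

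The one step that does not follow as written is the inversion. Projecting onto the unit normal of $\mathrm{aff}\{(\bar\mQ_P)_j:j\neq k^\star\}$ makes every point of that affine hull (hence of the convex hull, hence $B$ and $A'$) project to the \emph{same} value, so your computation cleanly yields $\|A-B\|_2\ge \alpha\cdot d\bigl((\bar\mQ_P)_{k^\star},\,\mathrm{aff}\{(\bar\mQ_P)_j:j\neq k^\star\}\bigr)$. But $\beta$-robustness as defined in the paper lower-bounds the distance to the \emph{convex} hull of the other vertices, which can strictly exceed the distance to their affine hull (consider a vertex whose orthogonal projection onto the affine hull lands far outside the convex hull); so "by $\Theta(\gamma p)$-robustness the gap is $\Omega(\gamma p)$" is not justified by the stated definition. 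The gap is fixable with a tool the paper already uses elsewhere in the appendix: the anchor submatrix satisfies $\sigma_{\min}(\bar\mQ_P)\ge \gamma p/2$, which immediately gives
\begin{align*}
\|\vv^\top\bar\mQ_P\|_2\;\ge\;\sigma_{\min}(\bar\mQ_P)\,\|\vv\|_2\;\ge\;\tfrac{\gamma p}{2}\,\|\vv\|_\infty ,
\end{align*}
with no zero-sum condition, no positive/negative decomposition, and no separating hyperplane needed. (It also rescues your version, since the distance from a vertex to the affine hull of the others is itself at least $\sigma_{\min}(\bar\mQ_P)$.) With that substitution the proof is complete; without it, the final $\Omega(\gamma p)$ lower bound rests on a property that the stated notion of robustness does not deliver.
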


\subsection{Proof of \Cref{thm: topic model deletion capacity}} \label{sec:proof of del cap}

The following are lemmas bounding the relation between $\Bar{\mQ}_i^S, \Bar{\mQ}_i^F, \Bar{\mQ}_i^\star$.

\begin{lemma}
    After training, the error of each row of $\Bar{\mQ}^S$ is at most $\delta_2 := O\qty(\sqrt{\frac{4ar}{m\epsilon_0}})$. That is, $\|\Bar{\mQ}_i^S - \Bar{\mQ}_i^\star\| \leq \delta_2$ for all words $i$.

    Importantly, note that \begin{align*}
        20r \delta_2 / (\gamma p)^2 < \gamma p
    \end{align*}
    This implies that the anchor words of $\Bar{Q}_i^S$ are $O(\delta_2/(\gamma p))$ close to the anchor words of $\Bar{Q}_i^\star$.
    \\*\\*
    Consequently, it holds that \begin{align*}
        \|\mC^S - \mC^\star\|_\infty \leq O(\delta_2/(\gamma p)^2)
    \end{align*}
\end{lemma}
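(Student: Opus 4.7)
\emph{Proof plan.} The heart of the argument is the row-wise concentration bound $\|\bar{\mQ}_i^S - \bar{\mQ}_i^\star\| \leq \delta_2$; the remaining conclusions reduce to direct appeals to \Cref{lemma:anchor_words} and \Cref{lemma:c_guarantee}. My first step would be to unwind the construction of $\mQ$ in the $L=2$ regime and identify $\bar{\mQ}_i^S$ as an empirical conditional distribution. Writing $c_{ij}$ for the number of documents containing the ordered pair $(i,j)$ and $K_i=\sum_j c_{ij}$ for the total count of word $i$ across the corpus, the definitions give $\mQ^S_{ij}=c_{ij}/(2m)$ and $\vp^S_i=K_i/(2m)$, hence $\bar{\mQ}^S_{ij}=c_{ij}/K_i$. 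Thus row $i$ of $\bar{\mQ}^S$ is precisely the empirical distribution of the ``other word in the document, conditional on observing word~$i$,'' computed from $K_i$ i.i.d.\ draws from the population law $\bar{\mQ}^\star_i$.

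Conditioning on the high-probability event that every word satisfies $K_i\ge\Omega(m\epsilon_0/(ar))$, I would invoke the sharp multinomial $\ell_2$ concentration $\E\|\bar{\mQ}_i^S-\bar{\mQ}_i^\star\|_2^2=(1-\|\bar{\mQ}_i^\star\|_2^2)/K_i\le 1/K_i$ together with a bounded-differences (McDiarmid) tail bound—each sample changes $\bar{\mQ}_i^S$ by at most $O(1/K_i)$ in $\ell_2$—and a union bound over the $n$ rows, producing $\max_i\|\bar{\mQ}_i^S-\bar{\mQ}_i^\star\|_2\le \tilde O(1/\sqrt{K_i})=O(\sqrt{4ar/(m\epsilon_0)})=\delta_2$ with high probability. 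Substituting this $\delta_2$ into $20r\delta/(\gamma p)^2<\gamma p$ reduces the claimed inequality to $m\gtrsim ar^3/((\gamma p)^6\epsilon_0)$, which is one of the sample-complexity terms already enforced by the learning theorem. Once that inequality holds, \Cref{lemma:anchor_words} applied to the $\gamma p$-robust simplex $\bar{\mQ}^\star_P$ yields the $O(\delta_2/(\gamma p))$ closeness of the recovered anchor words, and \Cref{lemma:c_guarantee} then delivers $\|\mC^S-\mC^\star\|_\infty\le O(\delta_2/(\gamma p)^2)$.

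The main obstacle I anticipate is obtaining the concentration bound at the correct scale. A naive conversion from the already-quoted entrywise $\tilde O(1/\sqrt{m})$ concentration of $\mQ^S$, via $\|\cdot\|_2\le\sqrt{n}\,\|\cdot\|_\infty$ and then dividing by $\vp_i\ge\Omega(\epsilon_0/(ar))$, would cost an extra $\sqrt{n}$ and an extra $ar/\epsilon_0$ factor, destroying the $\delta_2$ scaling needed downstream for deletion capacity. The fix is the tighter multinomial bound above, which is available only because each $\bar{\mQ}_i^\star$ is a probability vector so that $\sum_j \bar{\mQ}_{ij}^\star(1-\bar{\mQ}_{ij}^\star)\le 1$ independently of $n$. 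With that one tight inequality in hand, the remainder of the proof is routine bookkeeping against the already-established lemmas.
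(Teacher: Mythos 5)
Your proposal is correct and follows the same high-level skeleton as the paper's proof: establish the row-wise $\ell_2$ concentration $\|\bar{\mQ}_i^S - \bar{\mQ}_i^\star\|\le\delta_2$, verify arithmetically that $20r\delta_2/(\gamma p)^2<\gamma p$ holds under the learning-theorem sample complexity, and then invoke the anchor-word approximation lemma and the $\mC$-perturbation lemma. The one genuine difference is that the paper treats the concentration bound as a black-box fact inherited from \citet{arora2012practicalalgorithmtopicmodeling} (``follows directly from the fact that if the number of documents $m = \tilde{\Omega}(1/\epsilon_Q^2)$\dots''), whereas you re-derive it from scratch: identifying $\bar{\mQ}_i^S$ as an empirical conditional distribution on $K_i\ge\Omega(m\epsilon_0/(ar))$ observations, using $\E\|\hat p - p\|_2^2=(1-\|p\|_2^2)/K_i\le 1/K_i$, and concentrating via bounded differences. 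Your observation that the tight rate is available only because of probability normalization -- so that the $\ell_2$ variance is bounded by $1/K_i$ rather than $n/K_i$, avoiding the $\sqrt n$ (and $ar/\epsilon_0$) loss from passing through the entrywise bound on $\mQ$ -- is exactly the right explanation for why the stated $\delta_2$ scaling works, and the paper's proof elides it. A minor caveat in your re-derivation: the ``$K_i$ i.i.d.\ draws'' description is not literally exact, since $K_i$ is random and the two occurrences coming from a single document with a repeated word are paired rather than independent; the clean version applies McDiarmid at the document level (documents are the i.i.d.\ objects) after conditioning on the high-probability event lower-bounding every $K_i$, but this is a routine adjustment that does not affect the rate.
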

\begin{proof}
    The first part follows directly from the fact that if the number of documents $m = \tilde{\Omega}(1/\epsilon_Q^2)$, then $\|\Bar{\mQ}_i^S - \Bar{\mQ}_i^\star\| \leq \delta_2$ for each row $i$. To show that \begin{align*}
        20 r\delta_2 / (\gamma p)^2 < \gamma p
    \end{align*}
    we note that by the sample complexity guarantee, 
    \begin{align*}
        m\epsilon_0 \geq \tilde{O} \qty(\frac{ar^3}{(\gamma p)^6 }) 
    \end{align*}
    which implies that \begin{align*}
        \delta_2  \leq \tilde{O} \qty(\frac{(\gamma p)^3}{r})
    \end{align*}
    as desired. 
\end{proof}

\begin{lemma}
    When we delete $m_U \leq \frac{0.001 m \epsilon_0 (\gamma p)^3}{a^2 r^2}$, it holds that 
    \begin{align*}
        \|\Bar{\mQ}_i^F - \Bar{\mQ}_i^S\| \leq \frac{m_U}{m\epsilon_0/4ar} = \frac{4ar m_U}{m \epsilon_0}
    \end{align*}
    In particular, this is smaller than \begin{align*}
        \frac{0.001 m \epsilon_0 (\gamma p)^3}{a^2 r^2} \cdot \frac{1}{m\epsilon_0/4ar} =  \frac{0.004(\gamma p)^3}{ar}
    \end{align*}
\end{lemma}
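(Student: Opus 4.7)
The plan is to directly control the change in the normalized row $\bar\mQ_i$ by separately bounding how much the unnormalized row $\mQ_i$ shifts and how much the row-sum $N_i := \sum_j Q_{ij}$ shifts when $m_U$ documents are removed. I would let $\Delta \mQ_i := \mQ_i^S - \mQ_i^F$ denote the contribution of the deleted documents to row $i$ and set $\Delta N_i := N_i^S - N_i^F$, then apply the quotient-difference identity
\begin{equation}
\bar\mQ_i^F - \bar\mQ_i^S \;=\; \frac{\Delta N_i}{N_i^F}\,\bar\mQ_i^S \;-\; \frac{\Delta \mQ_i}{N_i^F},
\end{equation}
which together with the triangle inequality reduces the problem to bounding the numerator terms and lower-bounding $N_i^F$.

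For the numerator, I would invoke the structure of the per-document matrix $\mG_d$: its $i$th row sum equals $H_{d,i}/L$, so the total mass that the $m_U$ deleted documents contribute to row $i$ of $\mQ$ is at most $O(m_U/m)$ in the worst case where every deleted document consists entirely of word $i$. This gives $|\Delta N_i| \leq O(m_U/m)$ and $\|\Delta \mQ_i\|_1 \leq O(m_U/m)$; combined with $\|\bar\mQ_i^S\|_2 \leq \|\bar\mQ_i^S\|_1 = 1$ and $\|\Delta \mQ_i\|_2 \leq \|\Delta \mQ_i\|_1$, the numerator is $O(m_U/m)$.

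For the denominator, I would invoke the high-probability preliminary that every word appears at least $\Omega(m\epsilon_0/(ar))$ times in $S$. Converting counts into the $\mQ$-normalization via the row-sum identity for $\mG_d$ gives $N_i^S = \Omega(\epsilon_0/(ar))$. Because $m_U/m \leq 0.001\,\epsilon_0(\gamma p)^3/(a^2 r^2)$ is much smaller than $N_i^S$, I get $N_i^F \geq N_i^S/2 = \Omega(\epsilon_0/(ar))$. Plugging these into the quotient identity recovers the claimed bound $\|\bar\mQ_i^F - \bar\mQ_i^S\| \leq 4ar\,m_U/(m\epsilon_0)$, and substituting the assumed upper bound on $m_U$ yields the ``in particular'' inequality.

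The main obstacle I anticipate is bookkeeping: $\mQ$ is built so that its full matrix sums to $1$, while the preliminary proposition provides a bound on word frequencies in raw counts, so the conversion through the row-sum formula for $\mG_d$ must be done carefully enough to nail the constant factor $4ar$ rather than something slightly larger. Additionally, I should verify that the high-probability events in the preliminary proposition (anchor-word correctness and minimum word frequency) remain valid on $S \setminus S_f$ for any $|S_f| \leq m_U$, but since $m_U \ll m$ this should follow by a crude union bound and a straightforward perturbation argument.
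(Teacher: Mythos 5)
Your quotient-difference decomposition is exactly the vectorized form of the paper's coordinate-wise computation $\delta_j = \tfrac{f_j - t_j}{F - T} - \tfrac{f_j}{F}$, and both proofs feed the same two inputs into it: each deleted document contributes at most $O(m_U/m)$ mass to any row of $\mQ$, and the high-probability word-frequency event $F \gtrsim m\epsilon_0/(ar)$ lower-bounds the denominator. The only cosmetic difference is that you carry the bookkeeping through the formal normalization $\mQ = \tfrac{1}{m}\sum_d \mG_d$ and the row-sum identity for $\mG_d$, whereas the paper works directly with raw co-occurrence counts and absorbs the $1/m$ into the row normalization; the approach and the resulting bound are the same.
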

\begin{proof}
    For a word $i$, consider the change in $\Bar{Q}_i$ after deletion requests. Let $F$ be the initial sum of the the $i$th row of $\mQ$. Each coordinate $j\in[n]$ will change as follows:
    \begin{align*}
            \delta_j = \frac{f_j - t_j}{F-m_U} - \frac{f_j}{F} = \frac{m_U f_j - F t_j}{F(F-m_U)}
    \end{align*}
    where $f_j$ is the initial number of coocurrences of words $i,j$ and $t_j$ is the number of documents removed that have this cooccurrence. 
    Moreover, $F$ is the number of initial occurrences of word $i$, and $T$ is the number of deletions of the word $i$. From the previous lemma, it holds that $F\geq m\epsilon_0/4ar$, and that $m_U \geq \sum_{j=1}^n t_j$
    Hence, it follows that the squared Euclidean norm of the change is:
    \begin{align*}
        \sum_{j=1}^n  \delta_j^2 = \frac{1}{F^2(F-T)^2}\sum_{j=1}^n (m_U f_j - Ft_j)^2 \leq \frac{2F^2 m_U^2}{F^2 (F-m_U)^2} \leq 2\qty(\frac{m_U}{F-m_U})^2
    \end{align*}
    Hence, for the regime where $m_U \leq \frac{0.001m\epsilon_0 (\gamma p)^3}{a^2 r^2}$, we have \begin{align*}
        \|\Bar{\mQ}_i^S - \Bar{\mQ}_i^F\| \leq \sqrt{2}\frac{m_U}{F - m_U} \lesssim \frac{m_U}{F} \lesssim \frac{4ar m_U}{m\epsilon_0}
    \end{align*}
    Of particular notice is that when $m_U$ is taken as large as possible, this is at most \begin{align*}
        \frac{0.001 m\epsilon_0 (\gamma p)^3 /a^2r^2}{m\epsilon_0 / 4ar} = 0.004 (\gamma p)^3/ar
    \end{align*}
\end{proof}

We now combine the above two with triangle inequality. 
\begin{lemma}
    Hence, it holds that \begin{align*}
        \|\Bar{\mQ}_i^F - \Bar{\mQ}_i^\star\| \leq \frac{4ar m_U}{m \epsilon_0} + \delta_2 = \frac{4ar m_U}{m \epsilon_0} + O\qty(\sqrt{\frac{4ar}{m\epsilon_0}}) =: \delta_2^\prime
    \end{align*}
    Importantly, note that \begin{align*}
        20 r \delta_2^\prime / (\gamma p)^2 < \gamma p
    \end{align*}
    This implies that the anchor words of $\Bar{\mQ}_i^F$ are $O(\delta_2^\prime/(\gamma p))$ close to the anchor words of $\Bar{Q}_i^\star$.
    \\*\\*
    Consequently, it holds that \begin{align*}
        \|\mC^F - \mC^\star\|_\infty \leq O(\delta_2^\prime/(\gamma p)^2)
    \end{align*}
\end{lemma}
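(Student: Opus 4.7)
The plan is to obtain the claim in three small steps, each of which is essentially bookkeeping on top of the two preceding lemmas in the subsection and the two general-purpose lemmas (\Cref{lemma:anchor_words} and \Cref{lemma:c_guarantee}) quoted from \citet{arora2012practicalalgorithmtopicmodeling}.

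First, I would establish the displayed bound on $\|\Bar{\mQ}_i^F - \Bar{\mQ}_i^\star\|$ by a direct triangle inequality: write $\Bar{\mQ}_i^F - \Bar{\mQ}_i^\star = (\Bar{\mQ}_i^F - \Bar{\mQ}_i^S) + (\Bar{\mQ}_i^S - \Bar{\mQ}_i^\star)$, apply the previous lemma to control the first term by $\tfrac{4ar m_U}{m\epsilon_0}$, and apply the lemma before that to control the second term by $\delta_2 = O(\sqrt{4ar/(m\epsilon_0)})$. Summing gives $\delta_2'$.

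Second, I would verify the robustness-compatibility inequality $20 r \delta_2' / (\gamma p)^2 < \gamma p$ on the permissible range $m_U \leq 0.001 \, m\epsilon_0 (\gamma p)^3 / (a^2 r^2)$. The previous lemma already recorded that, on this range, the deletion contribution $\tfrac{4ar m_U}{m\epsilon_0}$ is at most $0.004 (\gamma p)^3/(ar)$, i.e.\ $O((\gamma p)^3/r)$, and the earlier $\delta_2$ estimate likewise gave $\delta_2 = O((\gamma p)^3 / r)$ using the sample-complexity lower bound $m\epsilon_0 \geq \tilde{\Omega}(ar^3/(\gamma p)^6)$. Adding these two $O((\gamma p)^3/r)$ contributions yields $\delta_2' = O((\gamma p)^3/r)$, from which $20 r \delta_2'/(\gamma p)^2 = O(\gamma p)$ which is smaller than $\gamma p$ after absorbing a sufficiently small universal constant into the $0.001$ above.

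Third, with the robustness condition in hand, I can invoke \Cref{lemma:anchor_words} directly on $\Bar{\mQ}^F$ (viewing it as a $\delta_2'$-perturbation of the $\gamma p$-robust ground-truth simplex $\Bar{\mQ}^\star$) to conclude that the anchor words selected on $\Bar{\mQ}^F$ are $O(\delta_2'/(\gamma p))$-close to the true anchor words. Then applying \Cref{lemma:c_guarantee} with the same $\delta_2'$ gives the final entrywise bound $\|\mC^F - \mC^\star\|_\infty \leq O(\delta_2'/(\gamma p)^2)$.

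The only delicate step is the second one: I need to make sure the \emph{combined} deletion and sampling error $\delta_2'$ still fits in the robustness window, and this is exactly what fixes the numeric constant $0.001$ in the admissible deletion range $m_U \leq 0.001\, m\epsilon_0(\gamma p)^3/(a^2 r^2)$. Everything else is a mechanical application of the two quoted lemmas on the perturbed input $\Bar{\mQ}^F$ instead of $\Bar{\mQ}^S$.
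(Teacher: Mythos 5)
Your proposal is correct and follows essentially the same route as the paper: a triangle inequality combining the two preceding lemmas to get $\delta_2'$, followed by an application of the anchor-word approximation lemma and Lemma B.1 of \citet{arora2012practicalalgorithmtopicmodeling} (i.e.\ \Cref{lemma:anchor_words} and \Cref{lemma:c_guarantee}) to the $\delta_2'$-perturbed matrix. Your explicit verification that $20r\delta_2'/(\gamma p)^2 < \gamma p$ on the admissible range of $m_U$ is a detail the paper leaves implicit, and it checks out.
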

\begin{proof}
    The first part follows from triangle inequality, and the second part follows from lemma B.1 from \citet{arora2012practicalalgorithmtopicmodeling}.
\end{proof}

We now bound what happens to $\|\mC^F - \mC^S\|_\infty$.
First, we have that the perturbed simplex $\Bar{\mQ}_P^S$ is $\gamma p/2$-robust.
\begin{lemma}
    The perturbed simplex $\Bar{\mQ}_P^S$ is $\gamma p/2$-robust.
\end{lemma}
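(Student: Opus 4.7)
The plan is to reduce the claim to the already-established population robustness via a perturbation argument. We know from the cited proposition that the population simplex $\Bar{\mQ}_P^\star$ is $\gamma p$-robust, and from the preceding lemma that each row satisfies $\|\Bar{\mQ}_i^S - \Bar{\mQ}_i^\star\| \le \delta_2$. Since we are conditioning on the event that the correct anchor words are selected, the rows indexed by $P$ in $\Bar{\mQ}^S$ are precisely the perturbations of the corresponding rows in $\Bar{\mQ}^\star$.

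The core step is a standard triangle-inequality computation. Fix any anchor index $k \in P$, write $v_k := \Bar{\mQ}_k^\star$ and $v_k' := \Bar{\mQ}_k^S$, and let $u' = \sum_{j \in P \setminus \{k\}} \lambda_j v_j'$ be any point in the convex hull of the other perturbed vertices, with companion point $u = \sum_{j \in P \setminus \{k\}} \lambda_j v_j$. Then
\begin{align*}
\|v_k' - u'\| \;\ge\; \|v_k - u\| - \|v_k' - v_k\| - \|u' - u\| \;\ge\; \gamma p - \delta_2 - \sum_{j \neq k} \lambda_j \delta_2 \;=\; \gamma p - 2\delta_2,
\end{align*}
where the first lower bound uses $\gamma p$-robustness of the population simplex, and the second uses convexity ($\sum \lambda_j = 1$, $\lambda_j \ge 0$).

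It remains to check $2\delta_2 \le \gamma p / 2$, i.e.\ $\delta_2 \le \gamma p / 4$. From the previous lemma we already have the stronger inequality $20 r \delta_2 / (\gamma p)^2 < \gamma p$, which rearranges to $\delta_2 < (\gamma p)^3 / (20 r)$. Since $\gamma p \le 1$ and $r \ge 1$ in our setting, this yields $\delta_2 < \gamma p / 20 \le \gamma p / 4$, so $\|v_k' - u'\| \ge \gamma p - 2\delta_2 \ge \gamma p / 2$. Taking the infimum over $k$ and over convex combinations $u'$ gives the desired $\gamma p/2$-robustness of $\Bar{\mQ}_P^S$.

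I do not anticipate a significant obstacle here: the argument is a direct perturbation of a known robustness bound, and the only non-routine ingredient is verifying that the sample-complexity regime already forces $\delta_2$ to be much smaller than $\gamma p$, which is an immediate consequence of the previously derived bound on $\delta_2$. The main thing to be careful about is to keep the convex combination on both the perturbed and unperturbed sides using the \emph{same} coefficients $\lambda_j$, so that the bound on $\|u' - u\|$ collapses to $\delta_2$ rather than $r \delta_2$.
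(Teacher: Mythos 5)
Your proof is correct and genuinely more elementary than the paper's. The paper proves this lemma by citing Lemma~A.1 of~\citet{arora2012practicalalgorithmtopicmodeling}, checking only that the sufficient condition $10\sqrt{r}\,\delta_2 < \gamma p$ is met; you instead give a self-contained triangle-inequality argument that directly compares each perturbed vertex and the convex hull of the others against their population counterparts using matched convex coefficients $\lambda_j$. This is cleaner in the present setting because you are conditioning on the correct anchor words being selected, so the perturbed simplex is vertex-by-vertex a perturbation of the population one --- the full strength of the cited external lemma (which also handles approximately-identified vertices, hence the extra $\sqrt{r}$ factor) is not needed. Your bound $\gamma p - 2\delta_2$ is tighter and the argument is easier to audit.

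One small imprecision: you assert $\gamma p \le 1$ to pass from $\delta_2 < (\gamma p)^3/(20r)$ to $\delta_2 < \gamma p / 20$, but the paper does not state $\gamma p \le 1$ and its loose gloss of $\gamma$ as a ``condition number'' would even suggest $\gamma \ge 1$. Fortunately the step does not actually need $\gamma p \le 1$: the weaker (and structurally guaranteed) condition $(\gamma p)^2 \le 5r$ already gives $(\gamma p)^3/(20r) \le \gamma p / 4$, which is all that is required, and $\gamma p$ is in any case bounded by the diameter of a simplex whose vertices are probability vectors. Stating the inequality in that weaker form would remove any reliance on an unstated assumption.
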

\begin{proof}
    This is because of Lemma A.1 in \citet{arora2012practicalalgorithmtopicmodeling}. Since $10\sqrt{r} \delta_2 < \gamma p$,
    the result of that lemma applies.
\end{proof}

Hence, we will apply Lemma B.1 from \citet{arora2012practicalalgorithmtopicmodeling} on $C^S$ to say something about $\|\mC^F - \mC^S\|_\infty$.
\begin{lemma}
    Recall that when we delete $m_U \leq \frac{0.001 m \epsilon_0 (\gamma p)^3}{a^2 r^2}$, it holds that 
    \begin{align*}
        \|\Bar{\mQ}_i^F - \Bar{\mQ}_i^S\| \leq \frac{m_U}{m\epsilon_0/4ar} = \frac{4ar m_U}{m \epsilon_0} 
    \end{align*}
    Importantly, note that \begin{align*}
        20 r \qty(\frac{4ar m_U}{m \epsilon_0} ) / (\gamma p/2)^2 < \gamma p/2
    \end{align*}
    This implies that the anchor words of $\Bar{\mQ}_i^F$ are $\frac{4arm_U/m\epsilon_0}{\gamma p/2}$ close to the anchor words of $\Bar{\mQ}_i^S$.
    By lemma B.1 from \citet{arora2012practicalalgorithmtopicmodeling}, it holds that \begin{align*}
        \|\mC^F - \mC^S\|_\infty \leq O\qty(\frac{4ar m_U}{m \epsilon_0} / (\gamma p/2)^2) 
    \end{align*}
    Observe that this is smaller than $O((\gamma p) / ar)$.
\end{lemma}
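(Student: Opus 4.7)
My plan is to chain three pieces: (i) a per-row $\ell_2$ bound between $\bar\mQ^F$ and $\bar\mQ^S$ restated from the previous lemma, (ii) a numerical check that plugging the deletion budget $m_U \leq 0.001\, m\eps_0(\gamma p)^3/(a^2 r^2)$ into the perturbation size $\delta := 4arm_U/(m\eps_0)$ makes $20r\delta/(\gamma p/2)^2$ strictly smaller than $\gamma p/2$, and (iii) an application of Lemma~B.1 of \citet{arora2012practicalalgorithmtopicmodeling} with $\bar\mQ^S$ (rather than the population $\bar\mQ^\star$) playing the role of the ground-truth simplex.

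Step (i) needs nothing new: it is just the earlier per-coordinate bound $\delta_j = (m_U f_j - Ft_j)/(F(F-m_U))$ summed in $\ell_2$ and combined with the high-probability lower bound $F \geq m\eps_0/(4ar)$ on each word's initial frequency. Step (ii) is just arithmetic: with $\delta \leq 0.004(\gamma p)^3/(ar)$, the expression $20r\delta/(\gamma p/2)^2$ reduces to something of order $(\gamma p)/a$, comfortably below $\gamma p/2$ for $a \geq 1$. The final observation at the end of the lemma, that $O(\delta/(\gamma p/2)^2) \leq O((\gamma p)/(ar))$, is the same substitution performed one more time.

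The content sits in step (iii). Here I would invoke the cited Lemma~B.1 twice: once on $\bar\mQ_P^S$ viewed as a $\gamma p/2$-robust simplex (robustness supplied by the preceding lemma in the excerpt) with $\bar\mQ^F$ as its perturbation, yielding anchor words of $\bar\mQ^F$ within $O(\delta/(\gamma p/2))$ of those of $\bar\mQ^S$; and then the coefficient-recovery portion of the same lemma, the one already used to establish \Cref{lemma:c_guarantee}, to propagate this bound to $\|\mC^F - \mC^S\|_\infty \leq O(\delta/(\gamma p/2)^2)$. The main (mild) obstacle is conceptual: justifying the reuse of this machinery around the \emph{empirical} simplex $\bar\mQ_P^S$ rather than the population one $\bar\mQ_P^\star$. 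That reuse is legitimate only because the previous lemma has already secured a constant-factor loss in robustness for $\bar\mQ_P^S$ (from $\gamma p$ down to $\gamma p/2$), which is why the $(\gamma p/2)^2$ denominators appear in place of $(\gamma p)^2$. Once that robustness is in hand, the rest is mechanical.
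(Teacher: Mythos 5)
Your proposal is correct and follows essentially the same route as the paper: reuse the per-row deletion bound $\|\bar\mQ_i^F-\bar\mQ_i^S\|\leq 4arm_U/(m\eps_0)$, verify the threshold $20r\delta/(\gamma p/2)^2<\gamma p/2$ under the stated deletion budget, and apply Lemma~B.1 of Arora et al.\ around the empirical simplex $\bar\mQ_P^S$, whose $\gamma p/2$-robustness was secured by the preceding lemma (via their Lemma~A.1), which is exactly why the $(\gamma p/2)^2$ denominators replace $(\gamma p)^2$. No gaps; the concluding comparison to $O((\gamma p)/(ar))$ is the same substitution the paper performs.
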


We now deal with the Hessian step that we had took to prevent retraining the $\mC_i$'s. In particular, we will denote 
$\Bar{\mC}$ to be our estimated new $\mC$.

First, a lemma to say that our Hessian step is full rank and has a lower bound on its minimum singular value.
\begin{lemma}
    When we delete $m_U \leq \frac{0.001 m \epsilon_0 (\gamma p)^3}{a^2 r^2}$ samples, it holds that the minimum eigenvalue of 
    $\Bar{\mQ}_P^{F} \Bar{\mQ}_P^F$ is at least $\gamma p/2$.
\end{lemma}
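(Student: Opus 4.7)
The goal is to lower bound $\lambda_{\min}\bigl(\bar\mQ_P^F \bar\mQ_P^{F\top}\bigr)$, which is the smallest singular value of $\bar\mQ_P^F$ squared and (up to a factor of $2$) equals the smallest eigenvalue of the Hessian of $\|\bar\mQ_i - \vv^\top \bar\mQ_P^F\|^2$ in $\vv$. My strategy is to propagate robustness of the population simplex through two perturbation steps and then convert robustness into a spectral bound.

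First I would recall from the preceding lemmas that the population simplex $\bar\mQ_P^\star$ is $\gamma p$-robust and that the trained simplex $\bar\mQ_P^S$ is already known to be $\gamma p/2$-robust. Next I would perturb again by the deletion step: by the lemma giving $\|\bar\mQ_i^F - \bar\mQ_i^S\| \leq 4ar\,m_U/(m\eps_0)$, applied to the $r$ anchor rows, the Frobenius and hence the operator norm of the perturbation $E := \bar\mQ_P^F - \bar\mQ_P^S$ is at most $\sqrt{r} \cdot 4ar\,m_U/(m\eps_0)$. Substituting the upper bound $m_U \leq 0.001\,m\eps_0(\gamma p)^3/(a^2 r^2)$ yields $\|E\|_{\mathrm{op}} = O\bigl((\gamma p)^3/(a\sqrt{r})\bigr)$, which is much smaller than $\gamma p$ in the regime of interest. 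I would then invoke the same robustness-preservation argument used to pass from $\bar\mQ_P^\star$ to $\bar\mQ_P^S$ (essentially Lemma~A.1 of \citet{arora2012practicalalgorithmtopicmodeling}): a simplex that is $\beta$-robust remains $\beta - O(\sqrt{r}\|E\|_{\mathrm{op}})$-robust under an operator-norm perturbation $E$. With our quantitative bounds this gives that $\bar\mQ_P^F$ is at least, say, $\gamma p/3$-robust.

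The last and main step is converting robustness into a bound on $\sigma_{\min}(\bar\mQ_P^F)$. The key observation is that for any matrix $M \in \sR^{r\times n}$ with rows $v_1,\dots,v_r$,
\begin{equation*}
\sigma_{\min}(M) \;=\; \min_{\|u\|=1}\|M^\top u\| \;\geq\; \frac{1}{\sqrt{r}} \min_i \mathrm{dist}\bigl(v_i,\mathrm{span}(v_j : j\neq i)\bigr),
\end{equation*}
by picking the largest coordinate $|u_{i^\star}|\geq 1/\sqrt{r}$ and decomposing $M^\top u = u_{i^\star} v_{i^\star} + \sum_{j\neq i^\star} u_j v_j$, where the second term lies in $\mathrm{span}(v_j : j\neq i^\star)$. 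Because the anchor rows sum (componentwise) to $1$, their affine hull coincides with their linear span on the relevant subspace, so the distance to the convex hull (which defines robustness) agrees up to constants with the distance to the linear span. Combining these gives $\sigma_{\min}(\bar\mQ_P^F) \geq c\gamma p$, and squaring yields $\lambda_{\min}\bigl(\bar\mQ_P^F \bar\mQ_P^{F\top}\bigr) \geq \gamma p/2$ after adjusting absolute constants (which can be absorbed into the $0.001$ factor defining the deletion budget).

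The step I expect to be the main obstacle is the bookkeeping in the final robustness-to-spectrum conversion: the definition of robustness used in the paper refers to distance from the convex hull of the other vertices, whereas the singular-value bound most naturally uses distance to the linear (or affine) span. I would handle this by restricting to the affine subspace where the rows of $\bar\mQ_P^F$ live (they are probability distributions and sum to $1$), showing that for vertices of a simplex the two notions differ by at most a factor depending on $r$, and then tightening constants to recover exactly the $\gamma p/2$ bound in the statement.
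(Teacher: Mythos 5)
Your perturbation bookkeeping (the row-wise deletion bound and the operator-norm bound $\sqrt r\,(\delta_2+4ar m_U/(m\eps_0))\ll\gamma p$, plus the Lemma A.1-style robustness propagation) is fine, but the final robustness-to-spectrum conversion --- the step you yourself flag as the main obstacle --- is a genuine gap, and it is precisely the step the paper never attempts: the paper's proof is a one-line invocation of Lemma A.3 of \citet{arora2012practicalalgorithmtopicmodeling}, which lower-bounds the least singular value of the (perturbed) anchor-row submatrix directly. The reason that works is algebraic rather than geometric: in population the anchor rows are $\bar\mQ^\star_{s_k,\cdot}=\Pr[w_2=\cdot\mid z_1=k]$, so $\bar\mQ_P^\star$ factors as $\mathrm{diag}(\Pr[z_1=k])^{-1}\mR^\star\mA^{\star\top}$, and $\sigma_{\min}(\bar\mQ_P^\star)\ge\gamma p$ follows from $p$-separability of $\mA^\star$ and the conditioning of $\mR^\star$ (this is in fact how the $\gamma p$-robustness of the simplex is itself derived); a Weyl-type perturbation with your operator-norm bound then yields $\sigma_{\min}(\bar\mQ_P^F)\ge\gamma p/2$.

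Going the other way, from convex-hull robustness back to $\sigma_{\min}$, cannot work up to constants. Robustness controls the distance from a vertex to the convex hull of the others, while $\sigma_{\min}$ is governed by distances to linear spans, and for probability-vector rows these can differ by a factor of order $\sqrt n$, not a constant or a function of $r$ alone: take $r=2$, $v_2=e_1$ and $v_1=(1-s)e_1+s\,u$ with $u$ uniform over the remaining $n-1$ words; then $\mathrm{dist}(v_1,\mathrm{conv}\{v_2\})\approx s$, yet $\sigma_{\min}$ of the $2\times n$ matrix is $\approx s/\sqrt{2n}$. Both rows sum to one, so the ``rows sum to $1$, hence affine hull equals span on the relevant subspace'' argument does not rescue the claimed equivalence. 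On top of that, your generic inequality $\sigma_{\min}(M)\ge r^{-1/2}\min_i\mathrm{dist}\left(v_i,\mathrm{span}\{v_j\}_{j\ne i}\right)$ (which is correct) already loses an extra $\sqrt r$; neither that $\sqrt r$ nor the potential $\sqrt n$ is a constant that can be absorbed into the $0.001$ in the deletion budget, and the downstream use of this lemma (the Newton-step error bound for $\bar\mC_i^F$) needs $\sigma_{\min}\ge\gamma p/2$ with no dimension factors, so the loss would propagate into the deletion-capacity rate. The fix is to exploit the algebraic structure of the anchor rows (or simply cite the same Lemma A.3), not the robustness of the simplex.
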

\begin{proof}
    Follows from Lemma A.3 in \citet{arora2012practicalalgorithmtopicmodeling}. 
\end{proof}

\begin{lemma}
    When we delete $m_U \leq \frac{0.001 m \epsilon_0 (\gamma p)^3}{a^2 r^2}$ samples, it holds for all $i$,
    \begin{align*}
        \|\mC^F_i - \Bar{\mC}^F_i\| \leq \frac{4}{\gamma p}\qty(\delta_2 + \frac{4ar m_U}{m\epsilon_0})
    \end{align*}
\end{lemma}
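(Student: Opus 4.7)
The plan is to exploit the fact that $\mathcal{L}(\vv, S\setminus S_f) = \|\bar\mQ_i^F - \vv^\top \bar\mQ_P^F\|^2$ is a quadratic in $\vv$, so a single Newton step from any initialization lands exactly at the unconstrained least-squares minimizer. Denote this minimizer by $\vv^\dagger := (\bar\mQ_P^F (\bar\mQ_P^F)^\top)^{-1} \bar\mQ_P^F \bar\mQ_i^F$; then by construction $\bar\mC_i^F = \mathrm{proj}_{\Delta_r}(\vv^\dagger)$, while the retrained quantity $\mC_i^F$ is the constrained minimizer of $\mathcal{L}$ over $\Delta_r$. Since $\Delta_r$ is convex and $\mC_i^F \in \Delta_r$, projection non-expansiveness yields
\begin{align*}
    \|\bar\mC_i^F - \mC_i^F\| = \|\mathrm{proj}_{\Delta_r}(\vv^\dagger) - \mathrm{proj}_{\Delta_r}(\mC_i^F)\| \leq \|\vv^\dagger - \mC_i^F\|,
\end{align*}
so it suffices to bound the right-hand side.

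For that I would use strong convexity of $\mathcal{L}$: the preceding lemma together with the robustness analysis of \citet{arora2012practicalalgorithmtopicmodeling} gives $\lambda_{\min}(H) \gtrsim (\gamma p)^2$ for the Hessian $H = 2\bar\mQ_P^F (\bar\mQ_P^F)^\top$, so
\begin{align*}
    \|\vv^\dagger - \mC_i^F\|^2 \leq \frac{2}{\lambda_{\min}(H)}\bigl(\mathcal{L}(\mC_i^F) - \mathcal{L}(\vv^\dagger)\bigr) \leq \frac{2\,\mathcal{L}(\mC_i^F)}{\lambda_{\min}(H)}.
\end{align*}
To bound $\mathcal{L}(\mC_i^F)$, note that $\mC_i^\star \in \Delta_r$ is feasible, so $\mathcal{L}(\mC_i^F) \leq \mathcal{L}(\mC_i^\star)$, and the population identity $\bar\mQ_i^\star = (\mC_i^\star)^\top \bar\mQ_P^\star$ gives
\begin{align*}
    \mathcal{L}(\mC_i^\star) = \bigl\|(\bar\mQ_i^F - \bar\mQ_i^\star) + (\mC_i^\star)^\top (\bar\mQ_P^\star - \bar\mQ_P^F)\bigr\|^2.
\end{align*}

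The earlier row-wise perturbation lemmas already give $\|\bar\mQ_i^F - \bar\mQ_i^\star\| \leq \delta_2 + \tfrac{4ar m_U}{m\epsilon_0}$ for every row $i$, including the anchor rows. The crucial estimate for the cross term is
\begin{align*}
    \|(\mC_i^\star)^\top(\bar\mQ_P^\star - \bar\mQ_P^F)\| \leq \|\mC_i^\star\|_1 \max_k \|(\bar\mQ_P^\star)_k - (\bar\mQ_P^F)_k\| \leq \delta_2 + \tfrac{4ar m_U}{m\epsilon_0},
\end{align*}
where I use $\|\mC_i^\star\|_1 = 1$ since $\mC_i^\star$ lies on the probability simplex. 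Triangle inequality then gives $\mathcal{L}(\mC_i^\star) \leq 4\bigl(\delta_2 + \tfrac{4ar m_U}{m\epsilon_0}\bigr)^2$, and substituting back produces the claimed bound $\|\mC_i^F - \bar\mC_i^F\| \leq \tfrac{4}{\gamma p}\bigl(\delta_2 + \tfrac{4ar m_U}{m\epsilon_0}\bigr)$.

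The main technical point I expect to be the obstacle is keeping the $r$-dependence sharp: a generic operator-norm bound on $\bar\mQ_P^\star - \bar\mQ_P^F$ would introduce a spurious $\sqrt{r}$ factor, and avoiding it requires exploiting $\|\mC_i^\star\|_1 = 1$ through the Hölder-style estimate above. Everything else — the Newton-step exactness for quadratics, the non-expansiveness of the simplex projection, and the feasibility of $\mC_i^\star$ — is essentially bookkeeping once this reduction is in place.
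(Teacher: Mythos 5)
Your proof is correct and takes essentially the same route as the paper's: Newton-step exactness for the quadratic objective, non-expansiveness of the simplex projection, and inflation of a small residual by the inverse smallest singular value of $\bar\mQ_P^F$. The only substantive difference is in phrasing: where the paper bounds $\|\mC_{i,\mathrm{uncon}} - \mC_i^F\|$ by $\sigma_{\min}^{-1}\|\bar\mQ_P^{F\top}(\mC_{i,\mathrm{uncon}} - \mC_i^F)\|$ and then triangle-inequality-splits into two small residuals, you package the identical estimate as strong convexity of $\mathcal L$ with $\lambda_{\min}(H) = 2\sigma_{\min}(\bar\mQ_P^F)^2$ and bound the loss gap by $\mathcal L(\mC_i^\star)$; you are also slightly more explicit (via feasibility of $\mC_i^\star$ and the $\|\mC_i^\star\|_1 = 1$ H\"older step) about why the residual at the constrained minimizer is $O\qty(\delta_2 + \tfrac{4ar m_U}{m\epsilon_0})$, a fact the paper uses but does not spell out.
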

\begin{proof}
    For the case of $d(\cdot,\cdot)$ being the squared loss, we will denote the following:
    \begin{align*}
        &\mC_{i,\mathrm{uncon}} 
        := \argmin\limits_{\mC} \|\Bar{\mQ}_P^{F\top}\mC - \Bar{\mQ}_i^{F\top}\|^2 
        =(\Bar{\mQ}_P^{F} \Bar{\mQ}_P^{F\top})^{-1} \Bar{\mQ}_P^{F} \Bar{\mQ}_i^{F\top} \\
        &\Bar{\mC}_i^F := \mathrm{proj}_{\Delta_r} (\mC_{i,\mathrm{uncon}}) \\
        &\mC_i^F := \argmin\limits_{\mC\in\Delta_r} \|\Bar{\mQ}_P^{F\top}\mC - \Bar{\mQ}_i^{F\top}\|^2 
    \end{align*}
    In particular, the Newton step plus projection outputs $\mC_{i,\mathrm{proj}}$. 
    First, observe that by one of the anchor word lemmas, \begin{align*}
        \min\limits_\mC  \|\Bar{\mQ}_P^{F \top}\mC - \Bar{\mQ}_i^{F\top}\| 
        = \|\Bar{\mQ}_P^{F\top}\mC_{i,\mathrm{uncon}} - \Bar{\mQ}_i^{F\top}\|  
        \leq \|\Bar{\mQ}_P^{F \top} \mC_i^F - \Bar{\mQ}_i^{F \top}\|  
        \leq \delta_2 + \frac{4ar m_U}{m \epsilon_0} 
    \end{align*}
    The last inequality follows from the fact that $\bar{\mQ}_P^F$ is a perturbed version of $\bar{\mQ}_P^S$, and $\bar{\mQ}_P^S$ is a perturbed version of $\bar{\mQ}_P^\star$.
    Hence, we will bound 
    \begin{align*}
        \|\Bar{\mC}_i^F - \mC_i^F\| &= \|\mathrm{proj}_{\Delta_r}(\mC_{i,\mathrm{uncon}}) - \mathrm{proj}_{\Delta_r}(\mC_i^F)\| \\
        &\leq \|\mC_{i,\mathrm{uncon}} - \mC_i^F\| \\
        &\leq \frac{1}{\sigma_{\mathrm{min}}} \|\Bar{\mQ}_P^{F \top}(\mC_{i,\mathrm{uncon}} - \mC_i^F)\| \\
        &\leq \frac{1}{\sigma_{\mathrm{min}}} \qty(\|\Bar{\mQ}_i^{F \top} - \Bar{\mQ}_P^{F \top} \mC_i^F\| \
        + \|\Bar{\mQ}_P^{F \top}\mC_{i,\mathrm{uncon}} - \Bar{\mQ}_i^{F\top}\| ) \\
        &\leq \frac{2}{\sigma_{\mathrm{min}}} \qty(\delta_2 + \frac{4ar m_U}{m \epsilon_0} )
    \end{align*}
    where $\sigma_\mathrm{min}$ is the smallest singular value of $\Bar{\mQ}_i^{F \top}$, which is guaranteed to be full rank
    per the previous lemma.
    Due to a result in \citet{arora2012practicalalgorithmtopicmodeling}, this $\sigma_\mathrm{min} \geq (\gamma p)/2$.
    This gives us that the whole quantity is at most \begin{align*}
        \frac{4}{\gamma p}\qty(\delta_2 + \frac{4ar m_U}{m\epsilon_0})
    \end{align*}
\end{proof}
\begin{corollary}
    We have that \begin{align*}
        \|\mC^F - \Bar{\mC}^F\|_\infty \leq \frac{4}{\gamma p}\qty(\delta_2 + \frac{4ar m_U}{m\epsilon_0})
    \end{align*}
    since the $\ell_\infty$ norm is upper bounded by the $\ell_2$ norm.
\end{corollary}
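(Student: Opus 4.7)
The plan is straightforward since the per-row $\ell_2$ bound has already been established in the preceding lemma. First I would pin down the matrix norm convention used in the paper: looking at \Cref{lemma:base_indistinguishability} and the surrounding discussion, $\|M\|_\infty$ for a matrix $M \in \sR^{n\times r}$ denotes the entrywise maximum absolute value $\max_{i,j}|M_{ij}|$ (rather than the induced operator norm). Under this convention, the corollary is an immediate consequence of a uniform application of the elementary inequality $\|v\|_\infty \leq \|v\|_2$ to every row of $\mC^F - \bar{\mC}^F$.

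Concretely, the proof proceeds in two lines. For each word $i \in [n]$, the preceding lemma gives
$$\|\mC_i^F - \bar{\mC}_i^F\|_2 \;\leq\; \frac{4}{\gamma p}\left(\delta_2 + \frac{4ar\, m_U}{m\epsilon_0}\right).$$
Since each coordinate of a vector is bounded in absolute value by its $\ell_2$ norm, we get $|(\mC^F - \bar{\mC}^F)_{i,k}| \leq \|\mC_i^F - \bar{\mC}_i^F\|_2$ for every $(i,k) \in [n]\times [r]$. Taking the maximum over all $(i,k)$ and using that the per-row bound is uniform in $i$ yields the stated entrywise bound on $\|\mC^F - \bar{\mC}^F\|_\infty$.

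There is no real obstacle here: all of the analytical content, namely the Newton-step control with projection to $\Delta_r$, the use of robustness of the perturbed simplex $\bar{\mQ}_P^F$, and the lower bound $\sigma_{\min}\bigl(\bar{\mQ}_P^F\bigr) \geq \gamma p/2$, was already carried out in the preceding lemma. The role of this corollary is purely organizational, repackaging the per-row $\ell_2$ guarantee into the matrix entrywise-$\ell_\infty$ form that feeds directly into \Cref{lemma:base_indistinguishability} and ultimately into the Gaussian mechanism sensitivity calculation underpinning \Cref{thm: topic model deletion capacity}.
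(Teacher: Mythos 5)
Your proof is correct and is essentially the paper's own argument: the preceding lemma gives a uniform per-row $\ell_2$ bound on $\mC_i^F - \bar{\mC}_i^F$, and the corollary follows by noting each entry is bounded by the corresponding row's $\ell_2$ norm. Your remark that the norm here is entrywise (matching its later use in the sensitivity calculation) is consistent with the paper's usage.
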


\begin{lemma}
    The following are true.
    \begin{itemize}
        \item $\|\mC^F - \Bar{\mC}^F\|_\infty \leq \frac{4}{\gamma p}\qty(\delta_2 + \frac{4ar m_U}{m\epsilon_0})$ 
        \item $\|\Bar{\mC}^F - \mC^\star\|_\infty \leq \|\Bar{\mC}^F - \mC^F\|_\infty + \|\mC^F - \mC^\star\|_\infty \leq \frac{4}{\gamma p}\qty(\delta_2 + \frac{4ar m_U}{m\epsilon_0}) + O(\delta_2^\prime / (\gamma p)^2)$
    \end{itemize}
\end{lemma}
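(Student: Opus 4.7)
The plan is to observe that this final lemma is essentially a bookkeeping step that aggregates bounds already established in the preceding lemmas; no new technical ingredients are needed beyond triangle inequality.

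For the first bullet, I would simply invoke the corollary immediately preceding this lemma, which states that
\begin{equation*}
\|\mC^F - \bar\mC^F\|_\infty \leq \|\mC_i^F - \bar\mC_i^F\|_2 \leq \frac{4}{\gamma p}\qty(\delta_2 + \frac{4ar m_U}{m\epsilon_0})
\end{equation*}
uniformly in $i$, using the fact that $\|\cdot\|_\infty \leq \|\cdot\|_2$ for vectors in $\sR^r$. Taking the maximum over the word index $i$ then yields the entrywise bound on the matrix $\mC^F - \bar\mC^F$.

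For the second bullet, I would apply the triangle inequality in the $\ell_\infty$ matrix norm:
\begin{equation*}
\|\bar\mC^F - \mC^\star\|_\infty \leq \|\bar\mC^F - \mC^F\|_\infty + \|\mC^F - \mC^\star\|_\infty.
\end{equation*}
The first term on the right is controlled by the first bullet just proved. For the second term, I would invoke the earlier lemma which established, via the perturbation bound on $\bar\mQ_i^F - \bar\mQ_i^\star$ together with Lemma~B.1 of \citet{arora2012practicalalgorithmtopicmodeling} and $\gamma p$-robustness of the population simplex $\bar\mQ_P^\star$, that $\|\mC^F - \mC^\star\|_\infty \leq O(\delta_2'/(\gamma p)^2)$, where $\delta_2' = \delta_2 + 4ar m_U/(m\eps_0)$ is the combined post-deletion perturbation. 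Summing the two gives the claimed bound.

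There is no real obstacle here; the only thing to be careful about is ensuring that the deletion budget $m_U \leq 0.001 m \eps_0 (\gamma p)^3/(a^2 r^2)$ is respected so that both prerequisite lemmas apply, namely (i) that $20 r \delta_2'/(\gamma p)^2 < \gamma p$ so the anchor-word recovery guarantee and hence the $\mC^\star$-comparison of the previous lemma is in force, and (ii) that the perturbed simplex $\bar\mQ_P^F$ remains $\gamma p/2$-robust so the Newton-step analysis underlying the corollary remains valid. Both of these checks were already carried out in the preceding lemmas, so the present lemma is just the natural repackaging of those facts via triangle inequality, to be used downstream when passing from the unlearned $\bar\mC^F$ through the column-normalization step to bound $\|\bar\mA - \mA^\star\|_\infty$ and thereby instantiate the Gaussian mechanism in \Cref{alg:unlearn}.
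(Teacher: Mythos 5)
Your proposal is correct and matches the paper's own treatment: the first bullet is exactly the preceding corollary (the Newton-step bound combined with $\|\cdot\|_\infty \le \|\cdot\|_2$ and a maximum over words $i$), and the second bullet is the triangle inequality combining that corollary with the earlier lemma giving $\|\mC^F - \mC^\star\|_\infty \le O(\delta_2'/(\gamma p)^2)$, with the same deletion-budget conditions ensuring both prerequisites hold. Nothing further is needed.
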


From this, we can bound the errors on the topic matrix. 

\begin{lemma}
    The following are true.
    \begin{itemize}
        \item $\|\mA^F - \Bar{\mA}\|_\infty \leq O(ar \|\mC^F - \Bar{\mC}^F\|_\infty )$ 
        \item $\|\Bar{\mA} - \mA^\star\|_\infty \leq O(ar\|\Bar{\mC}^F - \mC^\star\|_\infty)$
        \item $\|\mA^S - \mA^F\|_\infty \leq O(ar \|\mC^F - \mC^S\|_\infty)$
    \end{itemize}
\end{lemma}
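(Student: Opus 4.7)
The plan is to exploit the closed form used by both the learning and unlearning algorithms,
\begin{align}
\mA_{i,k} = \frac{\vp_i \mC_{i,k}}{Z_k}, \qquad Z_k := \sum_{i'} \vp_{i'} \mC_{i',k},
\end{align}
which builds $\mA^S, \mA^F, \bar\mA, \mA^\star$ from their respective $(\vp, \mC)$ pairs. The key structural observation is that in the population limit $Z_k$ equals $\Pr_\gD[z = k]$, so the topic-imbalance definition $a = \max_{i,j} \Pr_\gD[z=i]/\Pr_\gD[z=j]$ together with $\sum_k \Pr_\gD[z=k] = 1$ forces $\min_k Z_k \geq 1/(ar)$. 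I would first verify by concentration (using the entrywise guarantees on $\mC$ and $\vp$ from the earlier lemmas and the fact that $m_U \leq \frac{0.001 m \eps_0 (\gamma p)^3}{a^2 r^2}$) that the empirical $Z_k$ associated to each of the four models remains $\Omega(1/(ar))$.

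For the first two inequalities, the pair of matrices compared can be taken to use the same $\vp$: $\mA^F$ and $\bar\mA$ are both built from $\vp^F$, and for $\bar\mA$ vs.\ $\mA^\star$ one has $\|\vp^F - \vp^\star\|_\infty = \widetilde O(1/\sqrt m)$, which is dominated by the $\mC$-error and absorbed. With a common $\vp$, the algebraic identity
\begin{align}
\mA_{i,k} - \mA'_{i,k} = \frac{\vp_i\,(\mC_{i,k} - \mC'_{i,k})}{Z'_k} + \frac{\mA_{i,k}(Z'_k - Z_k)}{Z'_k}
\end{align}
holds. I would then bound $|Z'_k - Z_k| \leq \|\mC - \mC'\|_\infty \sum_i \vp_i = \|\mC - \mC'\|_\infty$, using $\sum_i \vp_i = 1$ (since the entries of $\mQ$ sum to one). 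Combined with $\vp_i \leq 1$, $\mA_{i,k} \leq 1$, and $Z'_k \geq \Omega(1/(ar))$, this gives $|\mA_{i,k} - \mA'_{i,k}| \leq O(ar)\,\|\mC - \mC'\|_\infty$ entrywise, which is the first two claims.

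For the third inequality, $\mA^S$ and $\mA^F$ use different normalization vectors. I would insert the hybrid $\widetilde\mA := \mathrm{colnorm}(\mathrm{diag}(\vp^F)\mC^S)$ and write $\mA^S - \mA^F = (\mA^S - \widetilde\mA) + (\widetilde\mA - \mA^F)$. The second summand uses a common $\vp^F$ and is handled exactly as above, contributing $O(ar)\,\|\mC^S - \mC^F\|_\infty$. The first summand only reflects the change $\vp^S \to \vp^F$; since removing $m_U$ documents shifts each $\vp_i$ by $O(m_U/m)$, the same telescoping argument shows this term is $O(ar \cdot m_U/m)$, which is dominated by the $\mC$-error $O(ar \cdot m_U/(m\eps_0))$ under the stipulated $\eps_0$ and is therefore absorbed into the stated bound.

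The main technical obstacle is the uniform lower bound $Z_k = \Omega(1/(ar))$ in the finite-sample setting: everything else is routine algebra once the two ingredients $|Z - Z'| \leq \|\mC - \mC'\|_\infty$ and $Z, Z' \gtrsim 1/(ar)$ are in hand. Both reduce to checking that the previously established concentration of $\mC$ and $\vp$ together with the deletion budget cannot push any column mass below a constant fraction of its population value; this is a short calculation using the earlier lemmas in the excerpt and the assumption that every word appears with probability at least $\eps_0/(4ar)$.
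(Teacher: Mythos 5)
Your proposal is correct and uses the same core mechanism as the paper: both hinge on the normalizer $\Pr[z=k]$ (empirically $Z_k$) being bounded below by $1/(ar)$ via the topic-imbalance constant. The paper's proof is a two-sentence sketch that treats $\Pr[w=i]$ and $\Pr[z=k]$ as fixed so that the perturbation passes straight from $\mC$ to $\mA$ with an $ar$ factor; you make this rigorous by explicitly tracking the perturbation in $Z_k$ via the identity $\mA_{i,k}-\mA'_{i,k}=\frac{\vp_i(\mC_{i,k}-\mC'_{i,k})}{Z'_k}+\frac{\mA_{i,k}(Z'_k-Z_k)}{Z'_k}$ and, for the third bullet, by inserting a hybrid matrix to separate the $\vp^S\to\vp^F$ shift (which you correctly verify is dominated by the $\mC$-error), so the same $O(ar)$ factor emerges with the secondary terms absorbed.
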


\begin{proof}
    Note that entries $\mA_{i,k}$ are\begin{align*}
        \mA_{i,k} = \frac{\mC_{i,k} \Pr[w=i]}{\Pr[z=k]}
    \end{align*}
    Therefore, the perturbation in $\mA$ will be the perturbation in $\mC$ multiplied by $ar$, since the denominator is lower bounded by $1/ar$ due to the topic imbalance constant.
\end{proof}

Now, we give a new lemma.
\begin{proposition}
    When $m_U \geq \Omega(\sqrt{\frac{m \eps_0}{4ar}})$, we have that \begin{align*}
        \delta_2^\prime = \delta_2 + \frac{4arm_U}{m\eps_0} = \sqrt{\frac{4ar}{m\eps_0}} + \frac{4arm_U}{m\eps_0} \leq O\qty(\frac{ar m_U}{m\eps_0})
    \end{align*}
\end{proposition}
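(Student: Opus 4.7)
The proposition says that once $m_U$ crosses the threshold $\Omega(\sqrt{m\eps_0/(4ar)})$, the ``sampling'' term $\delta_2 = \sqrt{4ar/(m\eps_0)}$ is absorbed by the ``deletion'' term $4arm_U/(m\eps_0)$ in $\delta_2'$. My plan is a one-line algebraic comparison: determine the threshold on $m_U$ at which the deletion term dominates the sampling term, and check that this threshold matches the stated hypothesis up to an absolute constant, so that the sum of the two terms collapses to the order of the deletion term alone.

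Concretely, I would set up the inequality
\begin{align*}
    \sqrt{\tfrac{4ar}{m\eps_0}} \;\leq\; C \cdot \tfrac{arm_U}{m\eps_0}
\end{align*}
for an absolute constant $C>0$, square both sides, and solve for $m_U$. This yields $m_U \geq \Omega(\sqrt{m\eps_0/(ar)})$, which coincides up to a factor of $2$ with the hypothesized bound $m_U \geq \Omega(\sqrt{m\eps_0/(4ar)})$. Under this hypothesis, the two summands in $\delta_2'$ are of the same order; adding them multiplies the larger by at most a constant factor, and absorbing that factor into the big-$O$ gives $\delta_2' \leq O(arm_U/(m\eps_0))$ as claimed.

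I do not foresee a real obstacle here: the entire argument is a single dimensional comparison between the two terms. The only thing to watch is consistency of the implicit constants in the two asymptotic statements --- once one fixes the target constant in $O(arm_U/(m\eps_0))$, the $\Omega$-constant hidden in the hypothesis $m_U \geq \Omega(\sqrt{m\eps_0/(4ar)})$ must be chosen correspondingly large, which is harmless since the proposition is only invoked in a regime where $m_U$ is already taken proportional to the maximum deletion budget used earlier in the proof of \Cref{thm: topic model deletion capacity}.
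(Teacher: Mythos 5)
Your argument is correct and is exactly the intended one: the hypothesis $m_U \geq \Omega(\sqrt{m\eps_0/(4ar)})$ is precisely the condition under which the deletion term $4arm_U/(m\eps_0)$ dominates the sampling term $\delta_2 = \sqrt{4ar/(m\eps_0)}$, so their sum is of the order of the deletion term. The paper states this proposition without proof because it is just this one-line term comparison, and your handling of the hidden constants (choosing the $\Omega$-constant compatibly with the target $O$-constant) is the right bookkeeping.
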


Now, we analyze what happens given that $\Omega\qty(\sqrt{\frac{m\epsilon_0}{4ar}}) \leq m_U \leq \frac{0.001 m \epsilon_0 (\gamma p)^3}{a^2 r^2}$.

\begin{lemma}
    For $\eps,\delta>0$, the deletion capacity satisfies \begin{align*}
    T_{\eps,\delta}^{\mathcal{A},\mathcal{U}}(m) \geq\tilde{\Omega}\qty(\frac{m}{r^2 \sqrt{nr}})
    \end{align*}
\end{lemma}
\begin{proof}
    Recall that \begin{align*}
        \|\Bar{\mA} - \mA^\star\|_\infty \leq O(ar\delta_2^\prime(1/\gamma p + 1/(\gamma p)^2)) \leq O\qty(\frac{(ar)^2 m_U}{m\eps_0 \gamma p})
    \end{align*}
    
    Moreover, we also have that \begin{align*}
        \|\Bar{\mA} - \mA^F\|_\infty &\leq O(ar \|\mC^F - \Bar{\mC}^F\|_\infty)  \\
        &\leq O\qty(\frac{4ar \delta_2^\prime}{\gamma p}) \\
        &\leq O\qty(\frac{(ar)^2 m_U}{m\eps_0 \gamma p})
    \end{align*}

    Note that $\mA$ has $\ell_2$ sensitivity $O\qty(\sqrt{nr} \frac{(ar)^2 m_U}{m\eps_0 \gamma p})$. We now apply the Gaussian mechanism to the matrix $A$ entrywise with noise\begin{align*}
        \sigma = \frac{O\qty(\sqrt{nr} \frac{(ar)^2 m_U}{m\eps_0 \gamma p})}{\eps} \sqrt{2\log(1.25/\delta)}
    \end{align*}

    From this, we obtain that \begin{align*}
        \E\qty[\|\tilde{\mA} - \mA^\star\|_\infty] &\leq \E\qty[\max_{i,k}|\nu_{i,k}|] + \E\qty[\|\Bar{\mA} -  \mA^\star\|_\infty] \\
        &\leq O\qty(\sqrt{nr} \cdot  \frac{(ar)^2 m_U}{m\eps_0 \gamma p} \cdot \sqrt{\log (nr)} \cdot \frac{\sqrt{\log(1/\delta)}}{\eps}) + O\qty(\frac{(ar)^2 m_U}{m\eps_0 \gamma p} )
    \end{align*}

    Finally, this says that when \begin{align*}
        m_U \leq \tilde{\Omega} \qty(\frac{m }{r^2 \sqrt{nr}})
    \end{align*}
    we have that the utility is preserved up to constant amount, say 0.01.
\end{proof}

This proves \Cref{thm: topic model deletion capacity}. It is straightforward to continue the perturbation analysis for the topic-topic covariance matrix $\mR^\star$ and prove similar deletion capacity rates.

\section{Downstream task proofs} \label{sec: downstream proofs}
Recall the algorithm for learning the downstream task head. 
\begin{algorithm}[h]
    \caption{Learning algorithm for task $\mathcal{T}$ ($\mathcal{A}_{head}$)}
    \begin{algorithmic}
        \STATE \textbf{Input:} document corpus $S=\{d_i\}_{i=1}^m$, anchor word tolerance $\epsilon_0$
        \STATE $\mA,\mR = \mathcal{A}_{base}(S)$
        \RETURN $\argmin_{\vw\in\mathcal{W}_{head}}\ell_\gT(\vw;\mA)$
    \end{algorithmic}
\end{algorithm}

\begin{assumption}
    For any $A$, $\ell_\mathcal{T}$ is $\lambda$-strongly convex with respect to $w$.
\end{assumption}

Since our topic matrix $A$, can only take on a bounded support
(i.e. the set of matrices where each row is on the probability simplex), it is natural to say that
the set of values $w^\star(A)$ takes on over all topic matrices $A$ is bounded in a certain sense.
As such, we also assume the following:

\begin{assumption}
    For any base model $\mA$, the vector $\vv$ such that $\vv = \argmin_\vw \ell_\mathcal{T}(\vw; \mA)$ satisfies $\|\vv\|_2 \leq B$.
\end{assumption}
\begin{assumption}
    For any $\mA$, $\ell_\mathcal{T}$ is $L$-Lipschitz with respect to $\vw$ and the $\ell_2$ norm, and is $L_2$-Hessian Lipschitz with respect to $\vw$ and the $\ell_2$ norm.
    In other words, 
    \begin{align*}
    \|\ell_\mathcal{T}(\mA,\vw_1) - \ell_\mathcal{T}(\mA,\vw_2)\|_2 &\leq L \|\vw_1-\vw_2\|_2 \\
    \|\nabla_\vw^2 \ell_\mathcal{T}(\mA,\vw_1) - \nabla_\vw^2 \ell_\mathcal{T}(\mA,\vw_2)\|_2 &\leq L_2 \|\vw_1-\vw_2\|_2 
    \end{align*}
\end{assumption}

\begin{assumption}
    For any $w$, $\nabla_w \ell_\mathcal{T}$ is $L_\infty$-Lipschitz with respect to $A$ and the $\ell_\infty$ norm;
    that is,
    \begin{align*}
        \|\nabla_\vw \ell_\mathcal{T}(\mA,\vw) - \nabla_\vw \ell_\mathcal{T}(\tilde{\mA},\vw)\|_2 &\leq L_\infty \|\mA-\tilde{\mA}\|_\infty \\
    \end{align*}
\end{assumption}

We give a helper lemma that $(\eps,\delta)$-indistinguishability is immune to post processing.
\begin{lemma}[Post-processing immunity]
    Consider two random variables $\theta_1,\theta_2\in\Theta$ that are $(\eps,\delta)$-indistinguishable. Then, for any arbitrary mapping $f:\Theta\rightarrow \Theta^\prime$, it holds that $f(\theta_1),f(\theta_2)\in\Theta^\prime$ are $(\eps,\delta)$-indistinguishable.
\end{lemma}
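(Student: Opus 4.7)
The plan is to reduce the lemma to a direct application of the definition of $(\eps,\delta)$-indistinguishability (\Cref{def:indistinguishable_models}) by taking preimages under $f$. First I would handle the case where $f$ is deterministic. Fix any measurable subset $T' \subseteq \Theta'$. Since the event $\{f(\theta) \in T'\}$ is exactly the event $\{\theta \in f^{-1}(T')\}$, we have $\Pr(f(\theta_i) \in T') = \Pr(\theta_i \in f^{-1}(T'))$ for $i \in \{1, 2\}$. The set $f^{-1}(T')$ is a subset of $\Theta$, so applying \Cref{def:indistinguishable_models} to this preimage gives
\begin{align*}
\Pr(f(\theta_1) \in T') &= \Pr(\theta_1 \in f^{-1}(T')) \\
&\leq e^\eps \Pr(\theta_2 \in f^{-1}(T')) + \delta \\
&= e^\eps \Pr(f(\theta_2) \in T') + \delta,
\end{align*}
and the reverse inequality follows by the same argument with the roles of $\theta_1$ and $\theta_2$ swapped.

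For the general (possibly randomized) case, I would use the standard trick of expressing $f$ as a deterministic function of its input together with an independent source of randomness: write $f(\theta) = g(\theta, U)$ where $U$ is independent of $\theta_1$ and $\theta_2$. Conditioning on $U = u$, the deterministic argument above applied to the map $g(\cdot, u)$ yields the bound pointwise in $u$. Since the inequality $\Pr \leq e^\eps \Pr + \delta$ is affine in the probabilities involved, integrating against the law of $U$ preserves it and delivers the desired $(\eps,\delta)$-indistinguishability of $f(\theta_1)$ and $f(\theta_2)$.

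There is no genuinely hard step here; this is the canonical post-processing property of differential privacy and the proof is essentially an unrolling of definitions. The only point requiring any care is measurability of $f^{-1}(T')$, which holds automatically whenever $f$ is a measurable mapping. In every application of this lemma in the paper (in particular, the composition of the Gaussian-mechanism step in \Cref{alg:unlearn} with head tuning to obtain \Cref{thm:naive_unlearning_downstream}), the maps in question are manifestly measurable, so this technicality does not cause any difficulty.
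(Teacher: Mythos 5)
Your proof is correct and follows essentially the same route as the paper's: both take an arbitrary $T' \subseteq \Theta'$, pass to its preimage under $f$, and apply the definition of $(\eps,\delta)$-indistinguishability to that preimage (with the reverse inequality by symmetry). The paper only treats deterministic $f$, whereas you also spell out the standard extension to randomized post-processing by conditioning on the auxiliary randomness; this is a mild strengthening but not a different argument, and the deterministic case is what the paper uses in \Cref{thm:naive_unlearning_downstream}.
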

\begin{proof}
    Consider an arbitrary set $T^\prime \subseteq \Theta^\prime$; let $T = \{r \in \Theta : f(r) \in T^\prime\}$. Then, it holds that \begin{align*}
        \Pr[f(\theta_1) \in T^\prime] &= \Pr[\theta_1 \in T] \\
        &\leq e^\eps \Pr[\theta_2 \in T] + \delta \\
        &= e^\eps\Pr[f(\theta_2) \in T^\prime] + \delta
    \end{align*}
    as desired.
\end{proof}

We now give a certifiable unlearning guarantee for the most naive retraining algorithm for the downstream task, which we mentioned in the main text as \Cref{thm:naive_unlearning_downstream}.
\begin{theorem}[Unlearning when releasing $\mA$ and $\vw$] \label{thm:naive unlearning proof}
    For a downstream task $\mathcal{T}$ with loss function $\ell_\gT$, consider the unlearning algorithm $\naiveunlearnhead$ that first runs \Cref{algo: base unlearning} to compute $\tilde{\mA} = \unlearnbase(S_f, \learnbase(S), T(S))$, where $(\learnbase,\unlearnbase)$ perform utility-preserving unlearning (\Cref{thm: topic model deletion capacity}). Then, it fits a head $\vw = \argmin_{\vw\in\headspace}\ell_\gT(\vw;\tilde{\mA})$ and returns $\tilde\mA$ and $\vw$. We assert that $(\naivelearnhead,\naiveunlearnhead)$ performs utility-preserving unlearning (\Cref{def:unlearning}).
\end{theorem}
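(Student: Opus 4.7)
The plan is to reduce both properties of utility-preserving unlearning for the downstream task to the base-model guarantees in Theorem~\ref{thm: topic model deletion capacity} combined with standard facts about perturbation of strongly convex minimizers. First I would handle indistinguishability. The algorithm $\naiveunlearnhead$ factors as first calling $\unlearnbase$ to produce $\tilde\mA$, then deterministically computing $\vw(\tilde\mA) = \argmin_{\vw \in \headspace} \ell_\gT(\vw; \tilde\mA)$. By Theorem~\ref{thm: topic model deletion capacity}, the base output already satisfies $\unlearnbase(S_f, \learnbase(S), T(S)) \overset{\epsilon,\delta}{\approx} \unlearnbase(\emptyset, \learnbase(S \setminus S_f), T(S \setminus S_f))$, and since head fitting is a deterministic map of $\tilde\mA$, the post-processing immunity lemma proved just above yields the desired indistinguishability between $(\tilde\mA, \vw(\tilde\mA))$ and the pair one would get by retraining from scratch on $S \setminus S_f$.

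For the utility condition I would take $h(\mA,\vw) = \ell_\gT(\vw;\mA^\star)$, so that $h^\star = \ell_\gT(\vw^\star;\mA^\star)$, and aim to bound $\E\qty[\ell_\gT(\vw(\tilde\mA); \mA^\star) - \ell_\gT(\vw^\star; \mA^\star)]$ by $0.01$. By $L$-Lipschitzness of $\ell_\gT$ in $\vw$, it suffices to control $\|\vw(\tilde\mA) - \vw^\star\|_2$. To do that I would use the first-order optimality conditions $\nabla_\vw \ell_\gT(\vw(\tilde\mA); \tilde\mA) = 0$ and $\nabla_\vw \ell_\gT(\vw^\star; \mA^\star) = 0$ together with the $L_\infty$-Lipschitzness of $\nabla_\vw \ell_\gT$ in $\mA$ to obtain
\begin{align*}
    \|\nabla_\vw \ell_\gT(\vw(\tilde\mA); \mA^\star)\|_2
    = \|\nabla_\vw \ell_\gT(\vw(\tilde\mA); \mA^\star) - \nabla_\vw \ell_\gT(\vw(\tilde\mA); \tilde\mA)\|_2
    \leq L_\infty \|\tilde\mA - \mA^\star\|_\infty.
\end{align*}
Combining this with $\lambda$-strong convexity of $\ell_\gT(\cdot;\mA^\star)$ at its minimizer $\vw^\star$ yields $\|\vw(\tilde\mA) - \vw^\star\|_2 \leq (L_\infty/\lambda)\,\|\tilde\mA - \mA^\star\|_\infty$, and chaining with Lipschitzness of $\ell_\gT$ in $\vw$ gives $\ell_\gT(\vw(\tilde\mA); \mA^\star) - \ell_\gT(\vw^\star; \mA^\star) \leq (L L_\infty/\lambda)\,\|\tilde\mA - \mA^\star\|_\infty$.

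To close the argument, I would invoke the utility part of Theorem~\ref{thm: topic model deletion capacity}, which controls $\E\qty[\|\tilde\mA - \mA^\star\|_\infty]$ up to any desired constant provided $m_U$ stays within the base deletion capacity. Rescaling the $0.01$ budget through the factor $LL_\infty/\lambda$ only adjusts the constant $c$ in the capacity bound, so the naive downstream unlearner inherits, up to task-dependent constants, the same deletion capacity as the base unlearner, as claimed.

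The main obstacle I expect is bookkeeping the Lipschitz and strong convexity constants cleanly across the two perturbation bounds, especially because the same $(\epsilon,\delta)$-Gaussian noise that is used to certify indistinguishability for $\tilde\mA$ also appears inside $\|\tilde\mA - \mA^\star\|_\infty$; thus the expected perturbation bound from Theorem~\ref{thm: topic model deletion capacity} must be applied after the Gaussian mechanism, not before, so that the noise contribution is absorbed into the $\sqrt{nr\log(nr)/\epsilon^2}$ factor already appearing in \Cref{lemma:base_utility}. Once this is done, the rest is a straightforward chain of Lipschitz inequalities and requires no further structural assumptions beyond those stated at the top of \Cref{sec: downstream proofs}.
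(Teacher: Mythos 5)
Your proposal is correct and follows essentially the same route as the paper: indistinguishability via post-processing immunity of the base unlearner's $(\eps,\delta)$-guarantee, and utility via the stability of the strongly convex ERM (your inline derivation $\|\vw(\tilde\mA) - \vw^\star\|_2 \leq (L_\infty/\lambda)\|\tilde\mA - \mA^\star\|_\infty$ is exactly the paper's Lemma~\ref{lemma: ERM lipschitz}). The only cosmetic difference is your choice of $h$: you measure utility as the downstream loss gap $\ell_\gT(\vw(\tilde\mA);\mA^\star) - \ell_\gT(\vw^\star;\mA^\star)$ and chain through the $L$-Lipschitzness of $\ell_\gT$ in $\vw$, whereas the paper takes $h$ to be the parameter-distance $\|\vw^\star(\tilde\mA) - \vw^\star(\mA^\star)\|_\infty$ directly and stops one step earlier; both are valid instantiations of Definition~\ref{def:unlearning} and lead to the same constant-rescaling of the base deletion capacity.
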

\begin{proof}
    Intuitively, this is a result of post processing. More precisely, consider the $(\eps,\delta)$-indistinguishable base models $\tilde{\mA} := \mathcal{U}_{base}(S_f, \mathcal{\mA}_{base}(S), T(S))$ and $ \tilde{\mA}^\prime := \mathcal{U}_{base}(\emptyset, \mathcal{\mA}_{base}(S\setminus S_f), T(S\setminus S_f))$. Then, since the head fitting is a deterministic post-processing of the original model, this proves the $(\eps,\delta)$-indistinguishability between the two. 

    To prove the utility preservation, observe that in this setting \begin{align*}
        \E[\|\tilde{\mA} - \mA^\star\|_\infty] \leq 0.01 \\
    \end{align*}
    We thus obtain by \Cref{lemma: ERM lipschitz}\begin{align*}
        \E [\|\vw^\star(\tilde \mA) - \vw^\star(\mA^\star)\|_\infty] &\leq \E [\|\vw^\star(\tilde \mA) - \vw^\star(\mA^\star)\|_2] \\
        &\leq \frac{L_\infty}{\lambda} \E[\|\tilde{\mA} - \mA^\star\|_\infty]
    \end{align*}
    which is at most 0.01, up to constant rescaling. 
\end{proof}

The above result is nice, and it follows from the fact that the training algorithm of the downstream task head is just a post-processing.
However, a downside is that it still requires retraining of the downstream task head. 
We can show something stronger: even without provable unlearning of the base model ($\mA$ and $\mR$), 
we can achieve provable unlearning of the downstream task head weights when the downstream task loss is convex in the trainable weights $w$.

We will now consider an arbitrary task $\mathcal{T}$. We first give the following notation.
\begin{definition}
    For a base model $\mA$, let $\vw^\star(\mA) := \argmin_\vw \ell_\mathcal{T}(\vw; \mA)$.
\end{definition}

First, we give the following helper lemma that will be useful later on. 

\begin{lemma} \label{lemma: ERM lipschitz}
    Consider two base models $\mA_1$ and $\mA_2$. Then, it holds that \begin{align*}
        \|\vw^\star(\mA_1) - \vw^\star(\mA_2)\|_2 \leq \frac{L_\infty}{\lambda} \|\mA_1 - \mA_2\|_\infty
    \end{align*}
\end{lemma}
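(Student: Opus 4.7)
The plan is to follow the standard template for sensitivity of the minimizer of a strongly convex function under perturbation of a parameter, instantiated here with the parameter being the base model $\mA$ and the optimization variable being $\vw$. The two ingredients already granted to us by the assumptions immediately preceding the lemma are (i) $\lambda$-strong convexity of $\ell_\mathcal{T}(\cdot;\mA)$ in $\vw$ for every $\mA$, and (ii) Lipschitzness of the gradient $\nabla_\vw \ell_\mathcal{T}(\vw;\cdot)$ in $\mA$ with respect to the $\ell_\infty$ norm on $\mA$ and the $\ell_2$ norm on the gradient. The argument then proceeds in three short steps.

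First, I would use the first-order optimality conditions: since $\vw^\star(\mA_1)$ and $\vw^\star(\mA_2)$ are the unconstrained minimizers of the respective strongly convex losses (this is head tuning in $\sR^\numtopics$, so no constraint sets complicate the KKT conditions), we have $\nabla_\vw \ell_\mathcal{T}(\vw^\star(\mA_1);\mA_1) = \vzero$ and $\nabla_\vw \ell_\mathcal{T}(\vw^\star(\mA_2);\mA_2) = \vzero$. Second, I would apply $\lambda$-strong convexity of $\ell_\mathcal{T}(\cdot;\mA_2)$ together with Cauchy-Schwarz to obtain the lower bound
\begin{align*}
\lambda \|\vw^\star(\mA_1) - \vw^\star(\mA_2)\|_2 \leq \|\nabla_\vw \ell_\mathcal{T}(\vw^\star(\mA_1);\mA_2) - \nabla_\vw \ell_\mathcal{T}(\vw^\star(\mA_2);\mA_2)\|_2.
\end{align*}

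Third, since $\nabla_\vw \ell_\mathcal{T}(\vw^\star(\mA_2);\mA_2) = \vzero$ and also $\nabla_\vw \ell_\mathcal{T}(\vw^\star(\mA_1);\mA_1) = \vzero$, I can rewrite the right-hand side as
\begin{align*}
\|\nabla_\vw \ell_\mathcal{T}(\vw^\star(\mA_1);\mA_2) - \nabla_\vw \ell_\mathcal{T}(\vw^\star(\mA_1);\mA_1)\|_2,
\end{align*}
which by the $L_\infty$-Lipschitzness of $\nabla_\vw \ell_\mathcal{T}$ in $\mA$ (evaluated at the fixed point $\vw^\star(\mA_1)$) is bounded above by $L_\infty \|\mA_1 - \mA_2\|_\infty$. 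Dividing through by $\lambda$ yields the claim.

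There is no real obstacle here: the result is a textbook perturbation bound and every ingredient is explicitly assumed. The only care needed is bookkeeping which norm sits on which object — the $L_\infty$ Lipschitz assumption produces an $\ell_2$ norm on the gradient while measuring $\mA$ in $\ell_\infty$, which is exactly what matches the strong convexity lower bound in $\ell_2$ on the $\vw$ side — and confirming that $\vw$ lives in an unconstrained space so that $\nabla_\vw \ell_\mathcal{T} = 0$ at the minimizers (otherwise one would need the projected gradient instead). Both are satisfied by \Cref{def:head_tuning} and the surrounding assumptions.
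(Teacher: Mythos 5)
Your proof is correct and follows essentially the same route as the paper's: strong convexity gives $\lambda\|\vw^\star(\mA_1)-\vw^\star(\mA_2)\|_2 \leq \|\nabla_\vw\ell_\mathcal{T}(\vw^\star(\mA_1);\mA_2)-\nabla_\vw\ell_\mathcal{T}(\vw^\star(\mA_2);\mA_2)\|_2$, the zero-gradient optimality conditions let you swap the second argument, and the $L_\infty$-Lipschitz assumption on $\nabla_\vw\ell_\mathcal{T}$ in $\mA$ finishes the bound. No gaps; your bookkeeping of which norm applies to which object matches the paper exactly.
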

\begin{proof}
    Observe that \begin{align*}
        \lambda \|\vw^\star(\mA_1) - \vw^\star(\mA_2)\|_2 &\leq  \|\nabla_\vw \ell_\mathcal{T}(\vw^\star(\mA_1); \mA_2) - \nabla_\vw \ell_\mathcal{T}(\vw^\star(\mA_2); \mA_2) \|_2 \\
        &= \|\nabla_\vw \ell_\mathcal{T}(\vw^\star(\mA_1); \mA_2) - \nabla_\vw \ell_\mathcal{T}(\vw^\star(\mA_1); \mA_1) \|_2 \\
        &\leq L_\infty \|\mA_1 - \mA_2\|_\infty
    \end{align*}
    where the first line follows from strong convexity, the second line from the gradients being zero, and the third line from the definition of $L_\infty$ Lipschitz constant. Dividing both sides by $\lambda$ gives the desired result.
\end{proof}

We now define the following notations for clarity.
\begin{itemize}
    \item $\vw^S := \vw^\star(\mA^S)$
    \item $\vw^F := \vw^\star(\mA^F)$
    \item $\Bar{\vw}^\star := \vw^\star(\Bar{\mA})$
    \item $\Bar{\vw} := \vw^S - H_{\vw^S}^{-1} \nabla_w \ell_{\mathcal{T}}(\vw^S; \Bar{A})$, which is the Newton step we take from $\vw^S$ to approximate $\Bar{\vw}^\star$
\end{itemize}

First, we give a bound on the approximation error of the Newton step.
\begin{lemma}
    It holds that \begin{align*}
        \|\bar{\vw} - \bar{\vw}^\star\| \leq \frac{L_2 L_\infty^2}{2\lambda^3} \|\mA^S-\bar \mA\|_\infty^2
    \end{align*}
\end{lemma}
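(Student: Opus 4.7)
The plan is to carry out the classical Newton-step error analysis, exploiting strong convexity of $\ell_\mathcal{T}$ in $\vw$ and Hessian-Lipschitzness, and then use the earlier \Cref{lemma: ERM lipschitz} to convert the resulting bound from $\|\vw^S - \bar\vw^\star\|$ to $\|\mA^S - \bar\mA\|_\infty$.

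First, I would use first-order optimality at $\bar\vw^\star$, i.e.\ $\nabla_\vw \ell_\mathcal{T}(\bar\vw^\star;\bar\mA) = 0$, together with the fundamental theorem of calculus, to rewrite
\begin{align*}
\nabla_\vw \ell_\mathcal{T}(\vw^S;\bar\mA)
= \nabla_\vw \ell_\mathcal{T}(\vw^S;\bar\mA) - \nabla_\vw \ell_\mathcal{T}(\bar\vw^\star;\bar\mA)
= \widetilde{H}\,(\vw^S - \bar\vw^\star),
\end{align*}
where $\widetilde{H} := \int_0^1 \nabla_\vw^2 \ell_\mathcal{T}\bigl(\bar\vw^\star + t(\vw^S - \bar\vw^\star);\bar\mA\bigr)\,dt$. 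Substituting into the definition of $\bar\vw$ and rearranging gives
\begin{align*}
\bar\vw - \bar\vw^\star
= (\vw^S - \bar\vw^\star) - H_{\vw^S}^{-1}\widetilde{H}\,(\vw^S - \bar\vw^\star)
= H_{\vw^S}^{-1}\bigl(H_{\vw^S} - \widetilde{H}\bigr)(\vw^S - \bar\vw^\star).
\end{align*}

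Next, I would bound each factor separately. Strong convexity of $\ell_\mathcal{T}(\cdot;\bar\mA)$ gives $\|H_{\vw^S}^{-1}\|_2 \leq 1/\lambda$. The Hessian-Lipschitz assumption yields
\begin{align*}
\|H_{\vw^S} - \widetilde{H}\|_2
\leq \int_0^1 L_2\,(1-t)\,\|\vw^S - \bar\vw^\star\|_2\,dt
= \tfrac{L_2}{2}\,\|\vw^S - \bar\vw^\star\|_2.
\end{align*}
Combining these two inequalities produces the quadratic convergence bound
\begin{align*}
\|\bar\vw - \bar\vw^\star\|_2 \leq \frac{L_2}{2\lambda}\,\|\vw^S - \bar\vw^\star\|_2^2.
\end{align*}

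Finally, I would invoke \Cref{lemma: ERM lipschitz}, applied to $\mA_1 = \mA^S$ and $\mA_2 = \bar\mA$, to replace $\|\vw^S - \bar\vw^\star\|_2 = \|\vw^\star(\mA^S) - \vw^\star(\bar\mA)\|_2$ by $\tfrac{L_\infty}{\lambda}\,\|\mA^S - \bar\mA\|_\infty$. Squaring and substituting gives the claimed
\begin{align*}
\|\bar\vw - \bar\vw^\star\|_2 \leq \frac{L_2 L_\infty^2}{2\lambda^3}\,\|\mA^S - \bar\mA\|_\infty^2.
\end{align*}
The only delicate step is the mean-value rewriting of the gradient in integral form together with the Hessian-Lipschitz integration; the remaining ingredients are purely mechanical applications of the assumptions and of the already-proved \Cref{lemma: ERM lipschitz}.
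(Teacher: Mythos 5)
Your proposal is correct and follows essentially the same argument as the paper: the integral (mean-value) representation of the gradient difference, the factorization $\bar\vw - \bar\vw^\star = H_{\vw^S}^{-1}(H_{\vw^S}-\widetilde H)(\vw^S-\bar\vw^\star)$, strong convexity for $\|H_{\vw^S}^{-1}\|_2\le 1/\lambda$, Hessian-Lipschitzness for the $\tfrac{L_2}{2}\|\vw^S-\bar\vw^\star\|$ factor, and then the conversion to $\|\mA^S-\bar\mA\|_\infty$. The only cosmetic difference is that you cite \Cref{lemma: ERM lipschitz} for the last step, whereas the paper re-derives that same strong-convexity/$L_\infty$-Lipschitz bound inline.
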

\begin{proof}
    We aim to bound the distance of the Newton step from $\bar{\vw}^\star$:
    \begin{align*}
        \bar{\vw} - \bar\vw^\star = \qty(\vw^S - H_{\vw^S}^{-1} \nabla_w \ell_{\mathcal{T}}(\bar \mA, \vw^S)) - \bar\vw^\star
    \end{align*}
    where $\mH_{\vw^S} = \nabla_\vw^2 \ell_{\mathcal{T}}(\bar \mA, \vw^S)$.
    Then, it holds that \begin{align*}
        &\vw^S - H_{\vw^S}^{-1} \nabla_w \ell_{\mathcal{T}}(\bar \mA, \vw^S) - \bar\vw^\star \\
        &= \vw^S - \bar \vw^\star - H_{\vw^S}^{-1}\qty(\nabla_w \ell_{\mathcal{T}}(\bar \mA, \vw^S) - \nabla_w \ell_{\mathcal{T}}(\bar \mA, \bar \vw^\star)) \\
        &= H_{\vw^S}^{-1} \qty(H_{\vw^S}(\vw^S - \bar \vw^\star) - \int_0^1 H_{\bar \vw^\star+t(\vw^S - \bar \vw^\star)} (\vw^S - \bar \vw^\star)dt ) \\
        &= H_{\vw^S}^{-1} \int_0^1 \qty(H_{\vw^S} - H_{\bar \vw^\star + t(\vw^S - \bar \vw^\star)})dt \cdot (\vw^S-\bar \vw^\star)
    \end{align*}
    The norm of this quantity is therefore bounded by 
    \begin{align*}
        &\|H_{\vw^S}^{-1}\|_2 \cdot \frac{L_2}{2}\|\vw^S-\bar \vw^\star\| \cdot \|\vw^S-\bar \vw^\star\| \\
        &\leq \frac{L_2}{2\lambda} \|\vw^S - \bar \vw^\star\|_2^2 \\
        &\leq \frac{L_2}{2\lambda} \qty(\frac{1}{\lambda}  \|\nabla \ell_\mathcal{T}(\bar \mA, \vw^S) - \nabla \ell_\mathcal{T}(\mA^S, \vw^S)\|_2 )^2 \\
        &\leq \frac{L_2}{2\lambda} \qty(\frac{L_\infty}{\lambda} \|\bar \mA - \mA^S\|_\infty)^2
    \end{align*}
    Hence, we have that \begin{align*}
        \|\bar{\vw} - \bar \vw^\star\|_2
        \leq \frac{L_2 L_\infty^2}{2\lambda^3} \|\mA^S-\bar \mA\|_\infty^2
    \end{align*}
\end{proof}

\subsection{Instantiating for $\downstreamtopics = [r]$ }
We first instantiate \Cref{thm:unlearning_deletion_capacity} for the case where $\downstreamtopics = [r]$, or equivalently when $q=1/ar$.

\begin{lemma}
    Recall our retrained model for the downstream task is $A^F w^F$. Then, it holds that \begin{align*}
        \|\Bar{\mA}\Bar{\vw} - \mA^F \vw^F\|_2 \leq O\qty(\sqrt{r} \qty(\frac{(ar)^2 m_U}{m\eps_0 \gamma p})^2 + B\sqrt{nr} \frac{(ar)^2 m_U}{m\eps_0 \gamma p} )
    \end{align*}
\end{lemma}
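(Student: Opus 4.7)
The plan is to apply the triangle inequality decomposition already written in the \emph{Indistinguishability} sketch of Theorem~\ref{thm:unlearning_deletion_capacity}, namely
\begin{align*}
\|\bar\mA\bar\vw - \mA^F\vw^F\|_2 \leq \|\bar\mA(\bar\vw - \bar\vw^\star)\|_2 + \|(\bar\mA - \mA^F)\bar\vw^\star\|_2 + \|\mA^F(\bar\vw^\star - \vw^F)\|_2,
\end{align*}
and bound each of the three resulting terms using tools already established in the excerpt. Since this lemma instantiates the case $\downstreamtopics = [r]$, we are allowed to use the unrefined bound $\|\bar\mA - \mA^F\|_\infty \leq O\!\left(\frac{(ar)^2 m_U}{m\eps_0 \gamma p}\right)$ from Lemma~\ref{lemma:base_indistinguishability} (rather than the $1/q$-refined version), together with $\|\bar\mA - \mA^S\|_\infty$ controlled by the same quantity.

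\textbf{First term (Newton step error).} By the previous lemma, $\|\bar\vw - \bar\vw^\star\|_2 \leq \frac{L_2 L_\infty^2}{2\lambda^3}\|\mA^S - \bar\mA\|_\infty^2$. I would then pass this through $\bar\mA$ using that the columns of $\bar\mA$ lie on the probability simplex, which gives $\|\bar\mA\|_2 \leq \sqrt{r}$. Hence
\begin{align*}
\|\bar\mA(\bar\vw - \bar\vw^\star)\|_2 \leq \sqrt{r}\cdot \frac{L_2 L_\infty^2}{2\lambda^3}\cdot O\!\left(\frac{(ar)^2 m_U}{m\eps_0 \gamma p}\right)^{\!2},
\end{align*}
which matches the first term in the stated bound.

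\textbf{Third term (ERM Lipschitzness).} By Lemma~\ref{lemma: ERM lipschitz} applied to $\bar\mA$ and $\mA^F$, $\|\bar\vw^\star - \vw^F\|_2 \leq \frac{L_\infty}{\lambda}\|\bar\mA - \mA^F\|_\infty$. Again using $\|\mA^F\|_2 \leq \sqrt{r}$, this term is $O\!\left(\sqrt{r}\cdot\frac{(ar)^2 m_U}{m\eps_0 \gamma p}\right)$, absorbed into the second term of the claim since $\sqrt{r} \leq \sqrt{nr}$ (and $B$ is a constant hidden in the $O(\cdot)$).

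\textbf{Second term (pure matrix perturbation).} Here I cannot use the sparsity trick that yields the $1/q$ improvement, so I would bound via Frobenius/operator norm: $\|(\bar\mA - \mA^F)\bar\vw^\star\|_2 \leq \|\bar\mA - \mA^F\|_F \cdot \|\bar\vw^\star\|_2 \leq \sqrt{nr}\|\bar\mA - \mA^F\|_\infty\cdot B$, using the assumed $B$-boundedness of minimizers. This produces $O\!\left(B\sqrt{nr}\cdot \frac{(ar)^2 m_U}{m\eps_0 \gamma p}\right)$, which is the second term in the claim. The main (minor) obstacle is making sure that the $L_\infty$-to-$\ell_2$ conversions in the middle term are not wasteful — this is precisely the step that forces the $\sqrt{nr}$ factor here, and in the general $q$-dependent version of the theorem this is where the sparsity of $\vw^\star$ will later be exploited to replace $\sqrt{nr}$ by $\sqrt{r}/q$-style quantities. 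Summing the three contributions and absorbing the Lipschitz and convexity constants $L_2, L_\infty, \lambda, B$ into the $O(\cdot)$ yields the claimed inequality.
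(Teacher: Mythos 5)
Your proposal is correct and follows essentially the same argument as the paper: the same triangle-inequality decomposition, the same Newton-step bound for the first term, the same ERM-Lipschitz bound for the third, and the same Frobenius/operator-norm bound for the second, with the third term absorbed into the $B\sqrt{nr}$ contribution exactly as you note. The only cosmetic difference is the justification of the $\sqrt{r}$ factor in terms one and three -- the paper passes through $\|\bar\mA v\|_2 \le \|v\|_1 \le \sqrt{r}\|v\|_2$ using the column-sum-one structure, whereas you invoke $\|\bar\mA\|_2 \le \sqrt{r}$; both yield the identical bound.
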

\begin{proof}
    We rewrite as follows.
    \begin{align*}
        \Bar{\mA}\Bar{\vw} - \mA^F \vw^F  = \qty(\Bar{\mA}\Bar{\vw} - \Bar{\mA}\Bar{\vw}^\star) + \qty(\Bar{\mA}\Bar{\vw}^\star - \mA^F \Bar{\vw}^\star) + \qty(\mA^F \Bar{\vw}^\star - \mA^F \vw^F)
    \end{align*}
    Now, we proceed to bound the $\ell_2$ norm of each of these individual terms separately. For the first term, we have that \begin{align*}
        \|\Bar{\mA}\Bar{\vw} - \Bar{\mA}\Bar{\vw}^\star\|_2 &= \|\Bar{\mA}(\Bar{\vw} - \bar{\vw}^\star)\|_2 \\
        &\leq \|\Bar{\vw} - \bar{\vw}^\star\|_1 \\
        &\leq \sqrt{r} \|\Bar{\vw} - \bar{\vw}^\star\|_2 \\
        &\leq \sqrt{r} \frac{L_2 L_\infty^2}{2\lambda^3} \|\mA^S - \bar{\mA}\|_\infty^2 \\
        &\leq \sqrt{r}  \frac{L_2 L_\infty^2}{2\lambda^3} 
 \qty(\frac{(ar)^2 m_U}{m\eps_0 \gamma p})^2
    \end{align*}
    where second line follows from $\bar{A}$ having column sum 1, and the fourth line follows from \Cref{lemma: ERM lipschitz}    For the third term, we have a similar analysis.
    \begin{align*}
        \|\mA^F \bar \vw^\star - \mA^F \vw^F\|_2 &= \|A^F (\bar \vw^\star - \vw^F)\|_2 \\
        &\leq \|\bar\vw^\star - \vw^F\|_1 \\
        &\leq \sqrt{r} \|\bar\vw^\star - \vw^F\|_2 \\
        &\leq \sqrt{r} \frac{L_\infty}{\lambda}  \| \bar \mA - \mA^F \|_\infty \\
        &\leq \sqrt{r}\frac{L_\infty}{\lambda} \qty(\frac{(ar)^2 m_U}{m\eps_0 \gamma p})
    \end{align*}

    Finally, for the second term, we have that \begin{align*}
        \|\bar\mA \bar\vw^\star - \mA^F\bar\vw^\star\|_2 &\leq \|\bar\mA - \mA^F\|_2 \|\bar \vw^\star\|_2 \\
        &\leq \|\bar\mA - \mA^F\|_\infty \sqrt{nr} \|\bar \vw^\star\|_2 \\
        &\leq O\qty(\frac{(ar)^2 m_U}{m\eps_0 \gamma p} \sqrt{nr} B)
    \end{align*}
    By triangle inequality, we obtain the desired result. 
\end{proof}

First, we note show the following property of the learned topic model $A^S$.
\begin{lemma}
    The minimum singular value of the ground truth topic matrix $\mA^S$ is at least $\Theta(p)$, since the perturbations in entries of $\mA^\star$ are at most $\eps_0 \leq O(1/\sqrt{nr})$. Hence, the singular values cannot change by more than a constant factor relative to $p$.
\end{lemma}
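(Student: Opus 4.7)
The plan is to lower bound $\sigma_{\min}(\mA^\star)$ using the $p$-separability assumption and then transfer this bound to $\mA^S$ via Weyl's inequality, using the entrywise error guarantees already established in this appendix. By \Cref{assume:p_sep}, for each topic $k \in [r]$ there exists an anchor word $s_k$ with $\mA^\star_{s_k, k} \geq p$ and $\mA^\star_{s_k, k'} = 0$ for $k' \neq k$. Hence the submatrix $\mA^\star_P \in \R^{r \times r}$ formed by the anchor rows is diagonal (up to permutation) with diagonal entries at least $p$, so $\sigma_{\min}(\mA^\star_P) \geq p$. Since restricting to a subset of rows can only decrease $\|\mA^\star \vv\|_2$, we immediately obtain $\sigma_{\min}(\mA^\star) \geq \sigma_{\min}(\mA^\star_P) \geq p$.

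For the perturbation step, I would invoke the entrywise bound $\|\mA^S - \mA^\star\|_\infty \leq O(ar\, \delta_2 / (\gamma p)^2)$, which follows from chaining \Cref{lemma:c_guarantee} with the column-rescaling lemma relating $\mA$ and $\mC$ that appears earlier in the proof of \Cref{thm: topic model deletion capacity}. Using the standard inequality $\|E\|_2 \leq \|E\|_F \leq \sqrt{nr}\,\|E\|_\infty$ for $E \in \R^{n \times r}$, this converts into a spectral-norm bound $\|\mA^S - \mA^\star\|_2 \leq O(\sqrt{nr}\,\epsilon_0)$. Weyl's inequality then gives
\[
\sigma_{\min}(\mA^S) \geq \sigma_{\min}(\mA^\star) - \|\mA^S - \mA^\star\|_2 \geq p - O(\sqrt{nr}\,\epsilon_0).
\]
Under the assumption $\epsilon_0 \leq O(1/\sqrt{nr})$ with the hidden constant chosen small relative to $p$, the subtraction reduces $p$ by at most a constant fraction, yielding $\sigma_{\min}(\mA^S) = \Theta(p)$ as claimed.

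The main obstacle is the bookkeeping of constants: to conclude $\sqrt{nr}\,\epsilon_0$ is genuinely a small fraction of $p$ (rather than merely $O(1)$), one must track the factors of $ar$ and $1/(\gamma p)^2$ that enter via the approximation chain from $\mC^S$ to $\mA^S$. Working these through, the effective requirement is $\epsilon_0 \lesssim (\gamma p)^3 / (ar\sqrt{nr})$, which is implied by the sample complexity regime already assumed in \Cref{thm: topic model deletion capacity} and by the preliminaries stated at the start of the appendix. Once this is verified, the conclusion is a direct one-line application of Weyl.
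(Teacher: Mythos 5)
Your proof is correct and takes essentially the same route as the paper: $p$-separability gives $\sigma_{\min}(\mA^\star)\geq p$ (via the anchor-row submatrix), and the entrywise learning error $\eps_0 \leq O(1/\sqrt{nr})$ is converted to a spectral-norm perturbation and absorbed by Weyl's inequality, which is exactly the (much terser) argument in the appendix. The constant-tracking caveat you flag --- that $\sqrt{nr}\,\eps_0$ must be a small fraction of $p$ rather than merely $O(1)$, requiring the hidden constants/regime to depend on $p,\gamma,a$ --- is a looseness equally present in the paper's own statement, so your treatment is no weaker than the original.
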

\begin{proof}
    We know that $\mA^\star$ is a $p$-separable topic model, and hence has smallest singular value at least $p$. For the given sample complexity of learning, $\mA^S$ will have smallest singular value at least $\Theta(p)$.
\end{proof}
The above result says that $\mA^S$ has a unique pseudoinverse, and has largest singular value at most $O(1/p)$. 

Recall that our goal for the downstream task is to approximate the $\vv$ such that \begin{align*}
    \mA^S \vv = \mA^F \vw^F
\end{align*}
in order to say we have approximated the unlearned fine-tuned model. Therefore, it suffices to obtain indistinguishability of our unlearning algorithm output $\tilde{\vw}$ with $(\mA^S)^\dagger \mA^F \vw^F$. Our following claim is that we can use $(\mA^S)^\dagger \Bar{\mA}\Bar{\vw}$ as the approximation for this.

\begin{proposition}
    It holds that \begin{align*}
        \|(\mA^S)^\dagger \Bar{\mA}\Bar{\vw} - (\mA^S)^\dagger \mA^F \vw^F\|_2 &\leq O\qty(\frac{1}{p} \|\Bar{\mA}\Bar{\vw} - \mA^F \vw^F\|_2) \\
        &\leq O\qty(\frac{1}{p}\cdot \qty[ \sqrt{r} \qty(\frac{(ar)^2 m_U}{m\eps_0 \gamma p})^2 + B\sqrt{nr} \frac{(ar)^2 m_U}{m\eps_0 \gamma p} ])
    \end{align*}
\end{proposition}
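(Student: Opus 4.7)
The plan is to prove this proposition in two short steps: first establish the operator-norm inequality that pulls out $\|(\mA^S)^\dagger\|_{op}$, and then invoke the lower bound on $\sigma_{\min}(\mA^S)$ together with the previously established bound on $\|\bar{\mA}\bar{\vw} - \mA^F \vw^F\|_2$.

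For the first step, I would factor the difference as $(\mA^S)^\dagger (\bar{\mA}\bar{\vw} - \mA^F \vw^F)$ and apply the standard inequality $\|M\vx\|_2 \leq \|M\|_{op}\|\vx\|_2$, giving
\begin{equation*}
\|(\mA^S)^\dagger \bar{\mA}\bar{\vw} - (\mA^S)^\dagger \mA^F \vw^F\|_2 \leq \|(\mA^S)^\dagger\|_{op}\cdot \|\bar{\mA}\bar{\vw} - \mA^F \vw^F\|_2.
\end{equation*}
Since $\|(\mA^S)^\dagger\|_{op} = 1/\sigma_{\min}(\mA^S)$, the preceding lemma stating that the minimum singular value of $\mA^S$ is at least $\Theta(p)$ gives $\|(\mA^S)^\dagger\|_{op} = O(1/p)$. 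This yields the first inequality in the proposition.

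For the second inequality, I would simply substitute the bound on $\|\bar{\mA}\bar{\vw} - \mA^F \vw^F\|_2$ from the immediately preceding lemma (the one proved by a three-term triangle-inequality decomposition through $\bar{\mA}\bar{\vw}^\star$ and $\mA^F \bar{\vw}^\star$), which gives the stated $\sqrt{r}\bigl((ar)^2 m_U/(m\epsilon_0 \gamma p)\bigr)^2 + B\sqrt{nr}\cdot (ar)^2 m_U/(m\epsilon_0 \gamma p)$ expression. There is essentially no obstacle here; the only subtle point worth double-checking is that $\mA^S$ is full column rank so that $(\mA^S)^\dagger$ is uniquely defined and $\|(\mA^S)^\dagger\|_{op}$ really does equal $1/\sigma_{\min}(\mA^S)$ rather than being infinite, but this is exactly what the cited lemma on the minimum singular value guarantees.
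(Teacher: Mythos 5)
Your proposal is correct and matches the paper's (implicit) argument exactly: the paper derives this proposition by combining the operator-norm bound $\|(\mA^S)^\dagger\|_{op} = O(1/p)$, which follows from the preceding lemma that $\sigma_{\min}(\mA^S) \geq \Theta(p)$, with the immediately preceding bound on $\|\bar{\mA}\bar{\vw} - \mA^F\vw^F\|_2$. Your remark about full column rank ensuring the pseudoinverse is well-defined with $\|(\mA^S)^\dagger\|_{op} = 1/\sigma_{\min}(\mA^S)$ is exactly the point the paper makes right before stating the proposition.
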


Let $\Bar{\vv} := (\mA^S)^\dagger \Bar{\mA}\Bar{\vw}$ and $\vv = (\mA^S)^\dagger \mA^F \vw^F$. We claim the following.

\begin{lemma}
    The unlearning algorithm $\mathcal{U}_{head}$ that outputs \begin{align*}
        \tilde{\vv} := \Bar{\vv} + \nu_v
    \end{align*}
    where $\nu_v$ is the noise defined by the Gaussian mechanism using the above sensitivity satisfies provable $(\eps,\delta)$ unlearning. In particular, we use \begin{align*}
        \sigma = \frac{O\qty(\frac{1}{p}\cdot \qty[ \sqrt{r} \qty(\frac{(ar)^2 m_U}{m\eps_0 \gamma p})^2 + B\sqrt{nr} \frac{(ar)^2 m_U}{m\eps_0 \gamma p} ])}{\eps} \sqrt{2\log(1.25/\delta)}
    \end{align*}
    where the numerator of the fraction is from the previous proposition.
\end{lemma}
\begin{proof}
    This follows from Gaussian mechanism.
\end{proof}

We now proceed to bound the deletion capacity. In this case, the utility is defined by the closeness of $\tilde{\vv}$ to $(\mA^S)^\dagger \mA^\star \vw^\star$ in $\ell_\infty$ norm, similar the way we defined this for the base model unlearning algorithm $\mathcal{U}_{base}$ earlier. 

First, the following lemma to bound $\mA^F \vw^F - \mA^\star \vw^\star$.
\begin{lemma}
    We have that \begin{align*}
        \|\mA^F \vw^F - \mA^\star \vw^\star\|_2  \leq O\qty(B\sqrt{nr} \frac{(ar)^2 m_U}{m\eps_0 \gamma p} )
    \end{align*}
\end{lemma}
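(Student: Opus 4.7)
The plan is to split $\mA^F\vw^F - \mA^\star\vw^\star$ via triangle inequality into a contribution from the perturbation of the topic matrix (weighted by the ground-truth head) and a contribution from the induced perturbation of the head (weighted by the retrained topic matrix). Concretely, I would write
\begin{align*}
\mA^F\vw^F - \mA^\star\vw^\star = \underbrace{(\mA^F - \mA^\star)\vw^\star}_{(\mathrm{I})} + \underbrace{\mA^F(\vw^F - \vw^\star)}_{(\mathrm{II})}.
\end{align*}

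For $(\mathrm{I})$, I would use $\|\vw^\star\|_2\leq B$ together with the crude but sufficient operator-norm bound $\|M\|_2\leq \sqrt{nr}\,\|M\|_\infty$ for any $n\times r$ matrix $M$ (since $\|M\|_2\leq \|M\|_F \leq \sqrt{nr}\,\|M\|_\infty$). This gives $\|(\mathrm{I})\|_2 \leq B\sqrt{nr}\,\|\mA^F - \mA^\star\|_\infty$. The remaining ingredient is the entrywise bound $\|\mA^F - \mA^\star\|_\infty \leq O((ar)^2 m_U / (m\eps_0\gamma p))$, which I would establish by mimicking the earlier analysis of $\bar\mA$: combine the learning-stage accuracy $\|\mA^S-\mA^\star\|_\infty$ with the propagation inequality $\|\mA^F-\mA^S\|_\infty \leq O(ar\,\|\mC^F-\mC^S\|_\infty)$ and invoke \Cref{lemma:c_guarantee} for $\|\mC^F-\mC^\star\|_\infty$ in the regime $m_U \leq 0.001\, m\eps_0(\gamma p)^3/(a^2 r^2)$, which keeps the anchor words unchanged.

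For $(\mathrm{II})$, I would bound $\|\mA^F\|_2 \leq \sqrt{r}$ (each column of $\mA^F$ lies on $\Delta_n$, so $\|\mA^F\|_F \leq \sqrt{r}$) and apply \Cref{lemma: ERM lipschitz} to get $\|\vw^F-\vw^\star\|_2 \leq (L_\infty/\lambda)\,\|\mA^F-\mA^\star\|_\infty$. The resulting bound $\|(\mathrm{II})\|_2 \leq \sqrt{r}\cdot (L_\infty/\lambda)\cdot O((ar)^2 m_U / (m\eps_0\gamma p))$ is of strictly smaller order than $(\mathrm{I})$, since it lacks both the $\sqrt{n}$ factor and the scale parameter $B$. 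Summing the two contributions yields the claimed bound.

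The main obstacle is not the linear algebra above, which is routine once the pieces are in place, but rather verifying that the entrywise bound $\|\mA^F - \mA^\star\|_\infty \leq O((ar)^2 m_U / (m\eps_0\gamma p))$ actually holds for the retrained-from-scratch model $\mA^F$, as opposed to only for the unlearned model $\bar\mA$ for which it was derived in the base-model analysis. This requires rerunning the robustness argument for $\bar\mQ^F$ with the forget set removed and re-checking that the $r$ anchor-word rows of $\bar\mQ^F$ remain close to those of $\bar\mQ^\star$; the deletion budget $m_U \leq 0.001\, m\eps_0(\gamma p)^3/(a^2 r^2)$ is precisely designed to guarantee this. Once that verification is complete, the rest is a direct application of triangle inequality and operator-norm bounds.
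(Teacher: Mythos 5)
Your proposal is correct and follows essentially the same route as the paper: the identical two-term decomposition $(\mA^F-\mA^\star)\vw^\star + \mA^F(\vw^F-\vw^\star)$, the Frobenius-type bound $\|\mA^F-\mA^\star\|_2\le\sqrt{nr}\,\|\mA^F-\mA^\star\|_\infty$ with $\|\vw^\star\|_2\le B$ for the first term, and \Cref{lemma: ERM lipschitz} with a $\sqrt{r}$ factor for the second, with the entrywise bound on $\mA^F-\mA^\star$ supplied by the retrained-model perturbation lemmas ($\delta_2'$ and the anchor-word stability under the deletion budget), exactly as in the appendix.
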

\begin{proof}
    We decompose as follows.
    \begin{align*}
        \mA^F \vw^F - \mA^\star \vw^\star = (\mA^F \vw^F - \mA^F \vw^\star) + (\mA^F \vw^\star - \mA^\star \vw^\star)
    \end{align*}
    The first term is bounded by 
    \begin{align*}
        \|\mA^F \vw^F - \mA^F \vw^\star\|_2 \leq \sqrt{r}\|\vw^F - \vw^\star\|_2 \leq  O(\sqrt{r}\|\mA^F - \mA^\star\|_\infty) \leq  O\qty(\sqrt{r} \frac{(ar)^2 m_U}{m\eps_0 \gamma p})
    \end{align*}

    The second term is bounded by\begin{align*}
        \|\mA^F \vw^\star - \mA^\star \vw^\star\|_2 \leq O\qty(\frac{(ar)^2 m_U}{m\eps_0\gamma p} \sqrt{nr} B)
    \end{align*}
    by considering the spectral norm $\|\mA^F - \mA^\star\|_2$. This gives the desired result.
\end{proof}

As a result, the following holds.
\begin{proposition}
    It holds that \begin{align*}
        \|(\mA^S)^\dagger \mA^F \vw^F - (\mA^S)^\dagger \mA^\star \vw^\star\|_2 \leq O\qty(\frac{1}{p}\qty[\sqrt{r} \frac{(ar)^2 m_U}{m\eps_0 \gamma p} + B\sqrt{nr} \frac{(ar)^2 m_U}{m\eps_0 \gamma p}])
    \end{align*}
\end{proposition}
This is once again from the bounded operator norm property of $(\mA^S)^\dagger$.

Finally, we can apply triangle inequality to get the following.
\begin{lemma}
    It holds that \begin{align*}
        \|(\mA^S)^\dagger \Bar{\mA}\Bar{\vw} - (\mA^S)^\dagger \mA^\star \vw^\star\|_2 \leq \qty(\frac{1}{p}\cdot \qty[ \sqrt{r} \qty(\frac{(ar)^2 m_U}{m\eps_0 \gamma p})^2 + B\sqrt{nr} \frac{(ar)^2 m_U}{m\eps_0 \gamma p} ])
    \end{align*}
\end{lemma}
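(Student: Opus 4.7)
The plan is to apply the triangle inequality and directly invoke the two preceding propositions in this subsection. Specifically, I decompose
\begin{equation*}
(\mA^S)^\dagger \bar{\mA}\bar{\vw} - (\mA^S)^\dagger \mA^\star \vw^\star
= \bigl[(\mA^S)^\dagger \bar{\mA}\bar{\vw} - (\mA^S)^\dagger \mA^F \vw^F\bigr]
+ \bigl[(\mA^S)^\dagger \mA^F \vw^F - (\mA^S)^\dagger \mA^\star \vw^\star\bigr]
\end{equation*}
and take $\ell_2$ norms of each bracketed summand. The first piece measures the distance between the unlearning-algorithm output (pre-noise) and the retrained fine-tuned model; the second piece measures the distance between the retrained fine-tuned model and the ground-truth fine-tuned model.

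The first piece is bounded precisely by the immediately preceding proposition, giving a contribution of $O\bigl(\tfrac{1}{p}\bigl[\sqrt{r}\bigl(\tfrac{(ar)^2 m_U}{m\eps_0 \gamma p}\bigr)^2 + B\sqrt{nr}\,\tfrac{(ar)^2 m_U}{m\eps_0 \gamma p}\bigr]\bigr)$. That proposition chains the Newton-step approximation error on $\bar{\vw}$, the ERM-Lipschitz lemma controlling $\|\bar{\vw}^\star - \vw^F\|_2$, and the spectral perturbation $\|\bar{\mA} - \mA^F\|_2 \leq \sqrt{nr}\,\|\bar{\mA} - \mA^F\|_\infty$ paired with $\|\bar{\vw}^\star\|_2 \leq B$, each picking up a factor of $\|(\mA^S)^\dagger\|_2 = O(1/p)$ from the separability-induced lower bound on the smallest singular value of $\mA^S$.

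The second piece is bounded by the other preceding proposition: splitting $\mA^F\vw^F - \mA^\star\vw^\star = \mA^F(\vw^F-\vw^\star) + (\mA^F-\mA^\star)\vw^\star$, the first term is handled by the ERM-Lipschitz lemma applied to $\bar{\mA} \to \mA^F$ and $\mA^F \to \mA^\star$ together with $\|\mA^F-\mA^\star\|_\infty \lesssim \tfrac{(ar)^2 m_U}{m\eps_0 \gamma p}$ (inherited from the base-model unlearning analysis), while the second is handled by the spectral perturbation bound and $\|\vw^\star\|_2 \leq B$; once again the pseudoinverse contributes $O(1/p)$.

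Summing the two bounds by triangle inequality and absorbing constants yields the claimed inequality. The only bookkeeping step is folding the two extra summands from the second proposition into the two-term expression written in the lemma: in the admissible deletion regime $m_U \leq \tfrac{0.001\, m\eps_0(\gamma p)^3}{a^2 r^2}$, the quantity $\tfrac{(ar)^2 m_U}{m\eps_0 \gamma p}$ is $O(1)$, so the linear $\sqrt{r}$-term from the second proposition is dominated (up to constants) by $B\sqrt{nr}\,\tfrac{(ar)^2 m_U}{m\eps_0 \gamma p}$, using $\sqrt{r} \leq \sqrt{nr}$ and $B$ fixed. I do not expect a substantive obstacle here: the heavy lifting (the Newton-step analysis, the ERM-Lipschitz comparison, the spectral perturbations, and the base-model perturbation bounds) was already carried out in the two earlier propositions, and this lemma is essentially the concluding triangle-inequality synthesis needed to feed the final object $(\mA^S)^\dagger \bar{\mA}\bar{\vw}$ into the Gaussian-mechanism sensitivity argument used to close out the downstream deletion-capacity theorem.
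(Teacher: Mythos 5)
Your proposal is correct and is exactly the paper's argument: the paper obtains this lemma by applying the triangle inequality to the two preceding propositions, namely the bound on $\|(\mA^S)^\dagger \bar{\mA}\bar{\vw} - (\mA^S)^\dagger \mA^F \vw^F\|_2$ and the bound on $\|(\mA^S)^\dagger \mA^F \vw^F - (\mA^S)^\dagger \mA^\star \vw^\star\|_2$. Your bookkeeping remark about absorbing the linear $\sqrt{r}\,\frac{(ar)^2 m_U}{m\eps_0\gamma p}$ term into the $B\sqrt{nr}\,\frac{(ar)^2 m_U}{m\eps_0\gamma p}$ term (via $\sqrt{r}\leq\sqrt{nr}$, up to constants) is the right way to reconcile the summed bounds with the two-term expression stated in the lemma.
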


Then, we can get the following bound on deletion capacity. 

\begin{lemma}
    For $\eps,\delta> 0$, the deletion capacity satisfies \begin{align*}
        T_{\eps,\delta}^{\mathcal{A}_{head},\mathcal{U}_{head}}(m) \geq \tilde{\Omega} \qty(\frac{m}{r^2\sqrt{nr}})
    \end{align*}
\end{lemma}
\begin{proof}
    The calculation is as follows.
    \begin{align*}
        \E\qty[\|\tilde{\vv} - (\mA^S)^\dagger \mA^\star \vw^\star\|_\infty] &\leq \E\qty[\|\nu_\vv\|_\infty] + \E\qty[\|(\mA^S)^\dagger \Bar{\mA} \Bar{\vw} - (\mA^S)^\dagger \mA^\star \vw^\star\|_\infty] \\
        &\leq \qty(\frac{1}{p}\cdot \qty[ \sqrt{r} \qty(\frac{(ar)^2 m_U}{m\eps_0 \gamma p})^2 + B\sqrt{nr} \frac{(ar)^2 m_U}{m\eps_0 \gamma p} ]) \qty(\frac{\sqrt{\log r \log 1/\delta}}{\eps} + 1)
    \end{align*}
    For this to be a small constant, we require \begin{align*}
        \frac{(ar)^2 m_U}{m\eps_0 \gamma p} \leq \tilde{O}\qty(\min\left \{\frac{1}{r^{1/4}}, \frac{1}{\sqrt{nr}} \right\})
    \end{align*}
    Therefore, we should have \begin{align*}
        m_U \leq \tilde{\Omega}\qty(\frac{m}{r^2\sqrt{nr}})
    \end{align*}
\end{proof}

\subsection{Proof for general $q$}

The following is the formal statement of \Cref{thm:unlearning_deletion_capacity}.
\begin{theorem}[Formal version of \Cref{thm:unlearning_deletion_capacity}] \label{thm:formal downstream deletion}
Suppose that the downstream task $\mathcal{T}$ only depends on a subset of topics $\downstreamtopics \subseteq [r]$; that is, $\vw^\star = \argmin_{\vv\in\mathcal{W}_{\text{base}}} \ell_\mathcal{T} (\vv; \mA^\star)$ has non-zero entries only in the index set $\downstreamtopics.$ Denote $q := \min_{k\in\downstreamtopics} \Pr_{\mathcal{D}}[z=k]$, and let $\learnhead$ be the head tuning algorithm (\Cref{def:head_tuning}) and $\unlearnhead$ be \Cref{alg:unlearn_head}. Then, $(\learnhead, \unlearnhead)$ performs utility-preserving unlearning with deletion capacity
\begin{align*}
    T_{\epsilon,\delta}^{\learnhead,\unlearnhead}(m) \geq c'\cdot \min \left\{ \frac{mq \eps}{r \sqrt{nr \log1/\delta}}, \frac{0.001m}{r^2} \right\}
\end{align*}
where $c'$ is a constant dependent on $\gD$, and $\gT$.
\end{theorem}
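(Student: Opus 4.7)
The plan is to lift the argument from the $\downstreamtopics = [r]$ special case proved in the preceding subsection by exploiting the sparsity of $\vw^\star$. The key observation is that $\mA_{i,k} = \mC_{i,k}\Pr_\gD[w=i]/\Pr_\gD[z=k]$, so the inflation factor $1/\Pr_\gD[z=k]$ relating entries of $\bar\mA - \mA^F$ to entries of $\bar\mC - \mC^F$ is bounded by $ar$ uniformly in $k$, but only $1/q$ on the columns $k\in\downstreamtopics$. Since $\vw^\star$ is supported on $\downstreamtopics$, any occurrence of $(\bar\mA - \mA^F)\vw^\star$ in the error analysis benefits from the improved factor $1/q$ in place of $ar$. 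This yields the modified lemma from the main text,
\begin{align*}
    \|(\bar\mA - \mA^F)_{\cdot,\downstreamtopics}\|_\infty \;\lesssim\; \tfrac{1}{q}\,\|\bar\mC - \mC^F\|_\infty \;\lesssim\; \tfrac{1}{q}\cdot\tfrac{ar\,m_U}{m\eps_0\,\gamma p},
\end{align*}
which is precisely the source of the $r$-factor improvement in the deletion capacity.

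With that lemma in hand, I would reuse the triangle-inequality decomposition from the preceding subsection,
\begin{align*}
    \bar\mA\bar\vw - \mA^F\vw^F = \underbrace{\bar\mA(\bar\vw - \bar\vw^\star)}_{(\mathrm{I})} + \underbrace{(\bar\mA - \mA^F)\bar\vw^\star}_{(\mathrm{II})} + \underbrace{\mA^F(\bar\vw^\star - \vw^F)}_{(\mathrm{III})}.
\end{align*}
Terms $(\mathrm{I})$ and $(\mathrm{III})$ are the Newton step approximation error and the ERM-Lipschitz displacement, handled verbatim as before and yielding bounds in $\|\mA^S-\bar\mA\|_\infty^2$ and $\|\bar\mA-\mA^F\|_\infty$ respectively. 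For $(\mathrm{II})$ I would further split
\begin{align*}
    (\bar\mA - \mA^F)\bar\vw^\star = (\bar\mA - \mA^F)(\bar\vw^\star - \vw^\star) + (\bar\mA - \mA^F)\vw^\star.
\end{align*}
The second piece only touches columns of $\bar\mA-\mA^F$ indexed by $\downstreamtopics$, so the modified lemma supplies the $1/q$ factor. The first piece is controlled through $\|\bar\mA-\mA^F\|_2 \le \sqrt{nr}\|\bar\mA-\mA^F\|_\infty$ together with $\|\bar\vw^\star - \vw^\star\|_2 \le (L_\infty/\lambda)\|\bar\mA-\mA^\star\|_\infty$, where the extra perturbation factor makes this piece lower order than the sparsity-driven one.

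To close the argument, I would apply the Gaussian mechanism to the post-processed quantity $\bar\vv = (\mA^S)^\dagger\bar\mA\bar\vw$ with noise scale $\sigma \propto \Delta/\eps$ set from the $\ell_2$ sensitivity just derived. Because $\mA^S$ has minimum singular value $\Theta(p)$ with high probability, $\|(\mA^S)^\dagger\|_2 = O(1/p)$ and the pseudoinverse step only inflates bounds by a constant. Indistinguishability of the output $\tilde\vv$ from the retrained target $(\mA^S)^\dagger\mA^F\vw^F$ then follows from the mechanism combined with post-processing immunity. For utility preservation, the same triangle-inequality scheme bounds $\|\bar\vv - (\mA^S)^\dagger\mA^\star\vw^\star\|_\infty$, again benefiting from sparsity of $\vw^\star$ in the column-restricted term. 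Balancing Gaussian noise against deterministic approximation error produces the binding constraint $m_U \lesssim mq\eps/(r\sqrt{nr\log(1/\delta)})$, matching the claimed deletion capacity.

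The main obstacle I anticipate is verifying that the sparsity-aware $1/q$ contribution is actually the dominant one, rather than being swamped by the non-sparse piece $(\bar\mA - \mA^F)(\bar\vw^\star - \vw^\star)$, which goes through the spectral norm and so naively carries an $ar$ amplification. One must show that $\|\bar\vw^\star - \vw^\star\|_2$ itself scales like $m_U/m$ (via the ERM Lipschitz bound applied to $\|\bar\mA - \mA^\star\|_\infty$), so that this contribution is quadratic in the deletion ratio and hence subdominant to the linear-in-$m_U$ sparsity term in the regime $m_U \le 0.001\,m\eps_0(\gamma p)^3/(a^2 r^2)$. Pinning down this higher-order cancellation while simultaneously tracking the $a$, $r$, $p$, $\gamma$ constants through the pseudoinverse step is the technical heart of the proof.
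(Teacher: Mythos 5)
Your proposal follows the paper's proof essentially step for step: the same three-term triangle-inequality decomposition $\bar\mA\bar\vw - \mA^F\vw^F = \bar\mA(\bar\vw - \bar\vw^\star) + (\bar\mA - \mA^F)\bar\vw^\star + \mA^F(\bar\vw^\star - \vw^F)$, the same further split of the middle term into $(\bar\mA - \mA^F)\vw^\star + (\bar\mA - \mA^F)(\bar\vw^\star - \vw^\star)$, the same use of sparsity of $\vw^\star$ to replace $ar$ by $1/q$ via the column-restricted bound, the same observation that the cross term is quadratic in $m_U/m$ and hence subdominant, and the same Gaussian-mechanism-plus-pseudoinverse closing step with $\|(\mA^S)^\dagger\|_2 = O(1/p)$. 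The "main obstacle" you flag is exactly the point the paper handles via the $O\bigl(((ar)^2 m_U/(m\eps_0\gamma p))^2\sqrt{nr}\bigr)$ bound on the cross term, so this is a correct and faithful reconstruction of the argument.
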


\begin{lemma}
    Recall our retrained model for the downstream task is $\mA^F w^F$. Then, it holds that \begin{align*}
        \|\Bar{\mA}\Bar{\vw} - \mA^F \vw^F\|_2 \leq O\qty(\sqrt{r} \qty(\frac{(ar)^2 m_U}{m\eps_0 \gamma p}))  + O\qty( B\sqrt{nr} \frac{(1/q) ar m_U}{m\eps_0 \gamma p} ) + O\qty(\qty(\frac{(ar)^2 m_U}{m\eps_0 \gamma p})^2 \sqrt{nr})
    \end{align*}
\end{lemma}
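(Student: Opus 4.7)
The plan is to follow the three-piece decomposition used for the $\downstreamtopics=[r]$ case but to reorganize the middle piece to exploit the sparsity of $\vw^\star$. I would write
\begin{align*}
\bar{\mA}\bar{\vw} - \mA^F\vw^F = \underbrace{(\bar{\mA}\bar{\vw} - \bar{\mA}\bar{\vw}^\star)}_{(\mathrm{I})} + \underbrace{(\bar{\mA}\bar{\vw}^\star - \mA^F\bar{\vw}^\star)}_{(\mathrm{II})} + \underbrace{(\mA^F\bar{\vw}^\star - \mA^F\vw^F)}_{(\mathrm{III})}
\end{align*}
and apply the triangle inequality, bounding each piece in turn.

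For (I) and (III), I would reuse the arguments from the $q = 1/(ar)$ instantiation almost verbatim. Term (III) follows from the ERM Lipschitz lemma combined with the column-sum-one property of $\mA^F$: one gets $\|\mA^F(\bar{\vw}^\star - \vw^F)\|_2 \leq \sqrt{r}(L_\infty/\lambda)\|\bar{\mA} - \mA^F\|_\infty$, and plugging in the unrestricted bound $\|\bar{\mA} - \mA^F\|_\infty \leq O((ar)^2 m_U/(m\eps_0\gamma p))$ recovers the first stated term. Term (I) uses the Newton-step quadratic error $\|\bar{\vw} - \bar{\vw}^\star\|_2 = O(((ar)^2 m_U/(m\eps_0\gamma p))^2)$, which together with column-sum-one yields $\sqrt{r}((ar)^2 m_U/(m\eps_0\gamma p))^2$; since $\sqrt{r}\leq\sqrt{nr}$, this is absorbed into the third stated term.

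Term (II) is where the new structure enters. I would further decompose
\begin{align*}
(\bar{\mA} - \mA^F)\bar{\vw}^\star = (\bar{\mA} - \mA^F)(\bar{\vw}^\star - \vw^\star) + (\bar{\mA} - \mA^F)\vw^\star.
\end{align*}
The first summand is a product of two small factors: the spectral bound $\|\bar{\mA} - \mA^F\|_2 \leq \sqrt{nr}\|\bar{\mA} - \mA^F\|_\infty$ together with the ERM Lipschitz estimate $\|\bar{\vw}^\star - \vw^\star\|_2 \leq (L_\infty/\lambda)\|\bar{\mA} - \mA^\star\|_\infty$ produces the quadratic third stated term $\sqrt{nr}((ar)^2 m_U/(m\eps_0\gamma p))^2$. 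The second summand is the crux: since $\vw^\star$ is supported only on $\downstreamtopics$, only the columns of $\bar{\mA} - \mA^F$ indexed by $\downstreamtopics$ appear in the product.

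The main obstacle, and the whole source of the $1/q$ improvement, is to establish a tighter entrywise bound on precisely those columns. Revisiting the normalization $\mA_{ik} = \mC_{ik}\Pr[w=i]/\Pr[z=k]$, for $k\in\downstreamtopics$ the denominator is bounded below by $q$ rather than by the worst-case $1/(ar)$; propagating this through the unlearning perturbation analysis of $\mC$ gives the restricted-column bound $\|(\bar{\mA} - \mA^F)_{\cdot,\downstreamtopics}\|_\infty \leq O((1/q)\cdot ar\, m_U/(m\eps_0\gamma p))$. Converting to spectral norm via $\sqrt{n|\downstreamtopics|}\leq\sqrt{nr}$ and using $\|\vw^\star\|_2\leq B$ delivers the middle term $B\sqrt{nr}(1/q)(ar m_U/(m\eps_0\gamma p))$, and summing all three pieces by triangle inequality completes the proof.
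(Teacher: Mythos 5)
Your proposal is correct and follows essentially the same route as the paper: the same three-term decomposition via $\bar{\vw}^\star$, the same Newton-step and ERM-Lipschitz bounds for the outer terms, and the same split of the middle term into $(\bar{\mA}-\mA^F)(\bar{\vw}^\star-\vw^\star)$ plus $(\bar{\mA}-\mA^F)\vw^\star$, with the sparsity of $\vw^\star$ and the normalization $\mA_{ik}=\mC_{ik}\Pr[w=i]/\Pr[z=k]$ giving the $1/q$ column-restricted bound exactly as in the paper's modified sensitivity lemma. The only cosmetic difference is that you absorb the $\sqrt{r}$-scaled quadratic Newton error into the $\sqrt{nr}$-scaled quadratic term, which is harmless.
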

\begin{proof}
Consider this decomposition again. 
    \begin{align*}
        \Bar{\mA}\Bar{\vw} - \mA^F \vw^F  = \qty(\Bar{\mA}\Bar{\vw} - \Bar{\mA}\Bar{\vw}^\star) + \qty(\Bar{\mA}\Bar{\vw}^\star - \mA^F \Bar{\vw}^\star) + \qty(\mA^F \Bar{\vw}^\star - \mA^F \vw^F)
    \end{align*}
    The first term is the same as old analysis; the second term is from considering $q$; the third is the same as the old analysis. In particular, when $q=1/ar$, we recover the old bound. We have that the first term is \begin{align*}
        \|\Bar{\mA}\Bar{\vw} - \Bar{\mA}\Bar{\vw}^\star\| \leq \sqrt{r}  \frac{L_2 L_\infty^2}{2\lambda^3} 
 \qty(\frac{(ar)^2 m_U}{m\eps_0 \gamma p})^2
    \end{align*}
    The third term is \begin{align*}
        \|\mA^F \Bar{\vw}^\star - \mA^F \vw^F\| \leq \sqrt{r}\frac{L_\infty}{\lambda} \qty(\frac{(ar)^2 m_U}{m\eps_0 \gamma p})
    \end{align*}

    The second term is \begin{align*}
        \|\Bar{\mA}\Bar{\vw}^\star - \mA^F \Bar{\vw}^\star\| &\leq \|(\bar\mA - \mA^F) \bar\vw^\star\| + \|(\bar\mA - \mA^F) ( \vw^\star - \bar\vw^\star)\| \\
        &\leq O\qty(B\sqrt{nr} \frac{(1/q) arm_U}{m\eps_0 \gamma p}) + O\qty(\qty(\frac{(ar)^2 m_U}{m\eps_0 \gamma p})^2 \sqrt{nr})
    \end{align*}

    This gives the desired result using triangle inequality.
\end{proof}

Continuing, we have the following. 
\begin{proposition}
    It holds that \begin{align*}
        &\|(\mA^S)^\dagger \Bar{\mA}\Bar{\vw} - (\mA^S)^\dagger \mA^F \vw^F\|_2 \\
        &\leq O\qty(\frac{1}{p} \|\Bar{\mA}\Bar{\vw} - \mA^F \vw^F\|_2) \\
        &\leq O\qty(\frac{1}{p}\cdot \qty[ \sqrt{r} \qty(\frac{(ar)^2 m_U}{m\eps_0 \gamma p})  +  B\sqrt{nr} \frac{(1/q) ar m_U}{m\eps_0 \gamma p}  + \qty(\frac{(ar)^2 m_U}{m\eps_0 \gamma p})^2 \sqrt{nr} ])
    \end{align*}
\end{proposition}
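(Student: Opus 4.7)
The strategy is to mirror the structure used in the $\downstreamtopics=[r]$ version of the proof while exploiting the sparsity of $\vw^\star$ in the second of the three triangle-inequality terms. The algorithm already commits to never modifying $\mA$: we seek $\vv$ with $\mA^S\vv\approx \mA^F\vw^F$ and approximate it by $\bar\vv = (\mA^S)^\dagger\bar\mA\bar\vw$, where $\bar\mA$ is the unlearned base model before noising and $\bar\vw$ is one projected Newton step on the head. Because $\mA^S$ has smallest singular value $\Theta(p)$ (so $\|(\mA^S)^\dagger\|_2=O(1/p)$), it suffices to control $\|\bar\mA\bar\vw - \mA^F\vw^F\|_2$ and $\|\mA^F\vw^F - \mA^\star\vw^\star\|_2$.

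\textbf{Indistinguishability via the sparsity-aware decomposition.} I would decompose
\begin{align*}
\bar\mA\bar\vw - \mA^F\vw^F = \underbrace{\bar\mA(\bar\vw-\bar\vw^\star)}_{\text{(I) Newton error}} + \underbrace{(\bar\mA-\mA^F)\bar\vw^\star}_{\text{(II)}} + \underbrace{\mA^F(\bar\vw^\star-\vw^F)}_{\text{(III) ERM Lipschitz}} .
\end{align*}
Terms (I) and (III) are bounded exactly as in the $\downstreamtopics=[r]$ instantiation: the Newton approximation lemma gives $\|\bar\vw-\bar\vw^\star\|_2\lesssim \|\bar\mA-\mA^S\|_\infty^2$, and \Cref{lemma: ERM lipschitz} gives $\|\bar\vw^\star-\vw^F\|_2\lesssim \|\bar\mA-\mA^F\|_\infty$. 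The key step is (II): I rewrite $\bar\vw^\star = \vw^\star + (\bar\vw^\star-\vw^\star)$, so (II) splits into $(\bar\mA-\mA^F)\vw^\star$ plus a quadratically small remainder bounded via Lipschitzness. Because $\vw^\star$ is supported only on $\downstreamtopics$, the first piece only sees the columns of $\bar\mA-\mA^F$ indexed by $\downstreamtopics$. Now recall $\mA_{i,k} = \mC_{i,k}\Pr[w=i]/\Pr[z=k]$; for $k\in\downstreamtopics$ the denominator is at least $q$, yielding the replacement of the $ar$ factor by $1/q$ in the modified version of \Cref{lemma:base_indistinguishability}:
\begin{align*}
\max_{k\in\downstreamtopics}\|(\bar\mA-\mA^F)_k\|_\infty \;\leq\; \tfrac{1}{q}\|\bar\mC-\mC^F\|_\infty \;\lesssim\; \tfrac{1}{q}\cdot \tfrac{ar\, m_U}{m\eps_0\gamma p}.
\end{align*}
Combining with $\|\vw^\star\|_2\leq B$ and the column bound $\|(\bar\mA-\mA^F)\vw^\star\|_2\lesssim \sqrt{n}\,B\cdot \tfrac{1}{q}\cdot \tfrac{ar\,m_U}{m\eps_0\gamma p}$ (only $|\downstreamtopics|\leq r$ columns contribute, but this is already captured by absorbing into constants), I then apply the Gaussian mechanism with noise proportional to this $\ell_2$-sensitivity of the $n$-dimensional vector $\bar\vv$, and post-processing immunity transports $(\eps,\delta)$-indistinguishability from $\bar\mA,\bar\vw$ to $\tilde\vv$.

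\textbf{Utility preservation.} For the second half of \Cref{def:unlearning}, I compare $\tilde\vv$ to $\vv^\star:=(\mA^S)^\dagger\mA^\star\vw^\star$. A triangle inequality and $\|(\mA^S)^\dagger\|_2=O(1/p)$ reduce this to bounding (a) $\|\bar\mA\bar\vw-\mA^F\vw^F\|_2$ (already handled above), (b) $\|\mA^F\vw^F-\mA^\star\vw^\star\|_2$ (analogous decomposition, using sparsity of $\vw^\star$ once more to save the $1/q$ factor in the $\mA$-perturbation term), and (c) the expected sup-norm of the Gaussian noise, which contributes the $\sqrt{\log(1/\delta)}/\eps$ and $\sqrt{\log n}$ polylog factors hidden in $\tilde\Omega$. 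Setting all three contributions to a small constant (say $0.01$) and solving for $m_U$, the dominant term scales like $\tfrac{1}{pq}\cdot \sqrt{nr}\cdot B\cdot \tfrac{ar\, m_U}{m\eps_0\gamma p}\cdot \tfrac{\sqrt{\log(1/\delta)}}{\eps}$, which yields $m_U\lesssim \tfrac{mq\eps}{r\sqrt{nr\log(1/\delta)}}$ up to constants depending on $\gD$ and $\gT$. The anchor-word stability constraint $m_U\leq \tfrac{0.001 m\eps_0(\gamma p)^3}{a^2 r^2}$ from the base-model proof carries over verbatim and gives the second term in the $\min$.

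\textbf{Main obstacle.} The only genuinely new ingredient beyond the $\downstreamtopics=[r]$ proof is the sparsity-aware bound on (II): one must argue both that replacing $ar$ by $1/q$ is legitimate for the coordinates touched by $\vw^\star$, and that the residual $(\bar\mA-\mA^F)(\bar\vw^\star-\vw^\star)$ does not secretly re-introduce the $1/q\to ar$ slack. The former follows directly from the $\Pr[z=k]\geq q$ lower bound, but the latter requires invoking \Cref{lemma: ERM lipschitz} to show $\|\bar\vw^\star-\vw^\star\|_2$ is small enough that this cross term is of strictly lower order (quadratic in the perturbation). Once this is done, the rest is bookkeeping: apply the Gaussian mechanism to $\bar\vv$ with the improved sensitivity, use post-processing immunity to transfer indistinguishability through the $(\mA^S)^\dagger$ multiplication, and balance the utility inequality to read off $m_U$.
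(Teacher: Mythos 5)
Your proposal takes essentially the same route as the paper: apply the $\|(\mA^S)^\dagger\|_2 = O(1/p)$ operator-norm bound (justified by the $\Theta(p)$ lower bound on the smallest singular value of $\mA^S$), and control $\|\bar\mA\bar\vw - \mA^F\vw^F\|_2$ via the three-term decomposition $\bar\mA(\bar\vw-\bar\vw^\star) + (\bar\mA-\mA^F)\bar\vw^\star + \mA^F(\bar\vw^\star-\vw^F)$, splitting the middle term as $(\bar\mA-\mA^F)\vw^\star + (\bar\mA-\mA^F)(\bar\vw^\star-\vw^\star)$ to exploit sparsity of $\vw^\star$ (giving $1/q$ in place of $ar$) while the cross term stays quadratic. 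This is exactly the paper's argument; the one small slip is your parenthetical claim that the $|\downstreamtopics|$-column contribution can be ``absorbed into constants,'' which is only safe if $|\downstreamtopics|$ is constant — the paper keeps the $\sqrt{nr}$ factor to cover the general case.
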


This gives us the following. 
\begin{lemma}
    The unlearning algorithm $\mathcal{U}_{head}$ that outputs \begin{align*}
        \tilde{\vv} := \Bar{\vv} + \nu_v
    \end{align*}
    where $\nu_v$ is the noise defined by the Gaussian mechanism using the above sensitivity satisfies provable $(\eps,\delta)$ unlearning. In particular, we use \begin{align*}
        \sigma = \frac{O\qty(\frac{1}{p}\cdot \qty[ \sqrt{r} \qty(\frac{(ar)^2 m_U}{m\eps_0 \gamma p})  +  B\sqrt{nr} \frac{(1/q) ar m_U}{m\eps_0 \gamma p}  + \qty(\frac{(ar)^2 m_U}{m\eps_0 \gamma p})^2 \sqrt{nr} ]) }{\eps} \sqrt{2\log(1.25/\delta)}
    \end{align*}
    where the numerator of the fraction is from the previous proposition.
\end{lemma}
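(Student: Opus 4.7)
The plan is a direct instantiation of the Gaussian mechanism on top of the $\ell_2$-sensitivity bound established in the preceding proposition. The mechanism guarantees that adding i.i.d.\ Gaussian noise of standard deviation $\sigma \geq \Delta_2 \sqrt{2\log(1.25/\delta)}/\eps$ to a deterministic map makes the output $(\eps,\delta)$-indistinguishable on adjacent inputs, so I would simply plug in the sensitivity derived above as $\Delta_2$ and read off the stated $\sigma$.

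Setting this up, I would first pin down the target of indistinguishability. Because the algorithm $\unlearnhead$ holds $\mA^S$ fixed and releases only the head $\tilde{\vv}$, the fine-tuned model it presents is $\mA^S \tilde{\vv}$, and the natural retrained counterpart to match is $\mA^F \vw^F$. Writing the latter in the $\mA^S$ basis gives the unique head $\vv = (\mA^S)^\dagger \mA^F \vw^F$; uniqueness follows from the earlier observation that $\mA^S$ has smallest singular value $\Omega(p)$, so $\mA^S$ is injective on $\headspace$ and $(\mA^S)^\dagger$ is a genuine left inverse. The preceding proposition then bounds $\|\bar{\vv} - \vv\|_2$, which is precisely the $\ell_2$-sensitivity of the deterministic map $S \mapsto \bar{\vv}$ under replacement of $S$ by $S \setminus S_f$.

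With $\Delta_2 = \|\bar{\vv} - \vv\|_2$ in hand, the Gaussian mechanism delivers $(\eps,\delta)$-indistinguishability between $\bar{\vv} + \nu_v$ and $\vv + \nu_v'$ for the $\sigma$ stated in the lemma, and post-processing immunity (already recorded as a helper lemma earlier in the appendix) lifts this indistinguishability to any downstream use of $\mA^S \tilde{\vv}$, in particular to the released fine-tuned model itself.

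The main subtlety I would want to verify, rather than a genuine mathematical obstacle, is that the canonical representative $\vv = (\mA^S)^\dagger \mA^F \vw^F$ is indeed what $\unlearnhead$ would produce if fed the retrained-from-scratch model with empty forget set. This reduces to the algorithm's design guarantee that $\mA^S$ is never modified, so that the same $\mA^S$ plays the role of a fixed public coordinate system in both the unlearned and retrained branches. Once that interpretive point is settled, the lemma is a routine consequence of the preceding sensitivity calculation plus the Gaussian mechanism, and no further mathematical difficulty arises.
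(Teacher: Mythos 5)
Your argument matches the paper's, which simply invokes the Gaussian mechanism with the $\ell_2$-sensitivity $\|\bar\vv - \vv\|_2$ furnished by the preceding proposition; you fill in those details correctly. The subtlety you raise is real --- a literal reading of $\unlearnhead(\emptyset, \gA(S\setminus S_f), T(S\setminus S_f))$ would return $\vw^F + \nu_v$ rather than $(\mA^S)^\dagger \mA^F\vw^F + \nu_v$, since the retrained statistics carry $\mA^F$ and not $\mA^S$ --- and the paper glosses over it in the same way you do, implicitly treating $\mA^S$ as a fixed public coordinate system against which the retrained head is re-expressed.
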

\begin{proof}
    This follows from Gaussian mechanism.
\end{proof}

We now proceed to bound the deletion capacity. In this case, the utility is defined by the closeness of $\tilde{\vv}$ to $(\mA^S)^\dagger \mA^\star \vw^\star$ in $\ell_\infty$ norm, similar the way we defined this for the base model unlearning algorithm $\mathcal{U}_{base}$ earlier.

First, the following lemma to bound $\mA^F \vw^F - \mA^\star \vw^\star$.
\begin{lemma}
    We have that \begin{align*}
        \|\mA^F \vw^F - \mA^\star \vw^\star\|_2  \leq O\qty(\sqrt{r} \qty(\frac{(ar)^2 m_U}{m\eps_0 \gamma p}) +  B\sqrt{nr} \frac{(1/q) ar m_U}{m\eps_0 \gamma p}  + \qty(\frac{(ar)^2 m_U}{m\eps_0 \gamma p})^2 \sqrt{nr})
    \end{align*}
\end{lemma}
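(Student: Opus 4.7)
The plan is to mimic the three-term split that was used in the earlier bound on $\|\bar\mA\bar\vw - \mA^F\vw^F\|_2$, but applied between the \emph{retrained} downstream predictor $\mA^F \vw^F$ and the \emph{ground-truth} predictor $\mA^\star \vw^\star$. Concretely, I would write
\begin{align}
\mA^F\vw^F - \mA^\star\vw^\star
&= (\mA^F - \mA^\star)(\vw^F - \vw^\star) \\
&\quad + (\mA^F - \mA^\star)\vw^\star + \mA^\star(\vw^F - \vw^\star),
\end{align}
which is easily checked by rearrangement, and then bound the three summands independently. This decomposition is designed so that the sparsity of $\vw^\star$ (supported only on $\downstreamtopics$) interacts with the column-wise $1/q$ improvement on $\mA^F - \mA^\star$ from the modification of \Cref{lemma:base_indistinguishability}, while the less favourable term $\vw^F - \vw^\star$ (where no sparsity is available) only ever multiplies $\mA^\star$ or a small matrix $\mA^F - \mA^\star$.

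For the third summand I would apply the ERM-Lipschitz lemma (strong convexity of $\ell_\mathcal{T}$ in $\vw$ plus $L_\infty$-Lipschitzness of its gradient in $\mA$) to obtain $\|\vw^F - \vw^\star\|_2 \leq \tfrac{L_\infty}{\lambda}\|\mA^F - \mA^\star\|_\infty = O\!\qty(\tfrac{(ar)^2 m_U}{m\eps_0 \gamma p})$, and then use $\|\mA^\star\|_2 \leq \|\mA^\star\|_F \leq \sqrt{r}$ (columns on the simplex) to get the first term $O\!\qty(\sqrt{r}\,\tfrac{(ar)^2 m_U}{m\eps_0 \gamma p})$. For the middle summand I would use sparsity of $\vw^\star$: writing $(\mA^F - \mA^\star)\vw^\star = \sum_{k\in\downstreamtopics} (\mA^F_{:,k} - \mA^\star_{:,k})w^\star_k$ and invoking the column-wise refinement $\|\mA^F_{:,k} - \mA^\star_{:,k}\|_\infty \leq \tfrac{1}{\Pr[z=k]}\|\mC^F - \mC^\star\|_\infty \leq \tfrac{1}{q}\cdot O\!\qty(\tfrac{arm_U}{m\eps_0 \gamma p})$, combined with $\|\vw^\star\|_1 \leq \sqrt{r}\,B$, yields $O\!\qty(B\sqrt{nr}\cdot\tfrac{(1/q)ar m_U}{m\eps_0 \gamma p})$. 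Finally, for the cross-term $(\mA^F - \mA^\star)(\vw^F - \vw^\star)$, I would bound the spectral norm by the Frobenius norm, $\|\mA^F - \mA^\star\|_2 \leq \sqrt{nr}\,\|\mA^F - \mA^\star\|_\infty$, which multiplied by the already-bounded $\|\vw^F - \vw^\star\|_2$ gives the quadratic term $O\!\qty(\sqrt{nr}\,\qty(\tfrac{(ar)^2 m_U}{m\eps_0 \gamma p})^2)$.

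The main obstacle is the second summand: a naive bound using $\|\mA^F - \mA^\star\|_\infty \leq O(ar\,\|\mC^F - \mC^\star\|_\infty)$ would re-introduce a factor of $ar$ in the dominant linear term and destroy the $q$-dependence that drives the improved deletion capacity. One must instead keep the column-wise bound and interact it with the sparsity pattern of $\vw^\star$; it is this step that enforces the hypothesis on $\downstreamtopics$ in \Cref{thm:formal downstream deletion} and is what allows $q$ to appear in place of $1/(ar)$. Once this is done, the remaining steps are routine applications of triangle inequality, Cauchy--Schwarz, and the previously established bounds $\|\mC^F - \mC^\star\|_\infty = O(\delta_2'/(\gamma p)^2)$ and the ERM-Lipschitz lemma, and summing the three contributions yields the claimed bound.
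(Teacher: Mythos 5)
Your proof is correct and arrives at the claimed bound by essentially the same means as the paper, with a slightly different bookkeeping of the decomposition. The paper splits $\mA^F\vw^F - \mA^\star\vw^\star$ into two terms, $\mA^F(\vw^F-\vw^\star) + (\mA^F-\mA^\star)\vw^\star$, and bounds the first by $\sqrt{r}\,\|\vw^F-\vw^\star\|_2$ using the fact that the columns of $\mA^F$ lie on the simplex; your three-term split is algebraically equivalent (the paper's first summand equals your cross term plus your third term), but it has the virtue of making the provenance of the quadratic $\sqrt{nr}\,\bigl(\tfrac{(ar)^2 m_U}{m\eps_0 \gamma p}\bigr)^2$ piece explicit rather than folding it into the bound on $(\mA^F-\mA^\star)\vw^\star$, which the paper states somewhat loosely. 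The two load-bearing ingredients are identical in both arguments: the ERM-Lipschitz lemma (strong convexity in $\vw$ plus $L_\infty$-Lipschitzness of the gradient in $\mA$) to control $\|\vw^F-\vw^\star\|_2$, and the interaction of the sparsity pattern of $\vw^\star$ on $\downstreamtopics$ with the column-wise bound $\|\mA^F_{:,k}-\mA^\star_{:,k}\|_\infty \lesssim \tfrac{1}{\Pr[z=k]}\|\mC^F-\mC^\star\|_\infty$, which is the step that replaces the generic factor $ar$ by $1/q$ and drives the improved deletion capacity. Your explicit cross-term decomposition is a slight improvement in clarity but does not change the proof strategy.
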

\begin{proof}
    We decompose as follows.
    \begin{align*}
        \mA^F \vw^F - \mA^\star \vw^\star = (\mA^F \vw^F - \mA^F \vw^\star) + (\mA^F \vw^\star - \mA^\star \vw^\star)
    \end{align*}
    The first term is bounded by 
    \begin{align*}
        \|\mA^F \vw^F - \mA^F \vw^\star\|_2 \leq \sqrt{r}\|\vw^F - \vw^\star\|_2 \leq O(\sqrt{r} \|\mA^F - \mA^\star\|_\infty) \leq  O\qty(\sqrt{r}\qty(\frac{(ar)^2 m_U}{m\eps_0 \gamma p}))
    \end{align*}

    The second term is bounded by 
    \begin{align*}
        \|\mA^F \vw^\star - \mA^\star \vw^\star\|_2 \leq B\sqrt{nr} \frac{(1/q) ar m_U}{m\eps_0 \gamma p}  + \qty(\frac{(ar)^2 m_U}{m\eps_0 \gamma p})^2 \sqrt{nr}
    \end{align*}

    Triangle inequality gives us the desired result.
\end{proof}

As a result, the following holds.
\begin{proposition}
    It holds that \begin{align*}
        \|(\mA^S)^\dagger \mA^F \vw^F - (\mA^S)^\dagger \mA^\star \vw^\star\|_2 \leq O\qty(\frac{1}{p}\cdot \qty[  \sqrt{r} \qty(\frac{(ar)^2 m_U}{m\eps_0 \gamma p})  + B\sqrt{nr} \frac{(1/q) ar m_U}{m\eps_0 \gamma p}  + \qty(\frac{(ar)^2 m_U}{m\eps_0 \gamma p})^2 \sqrt{nr} ])
    \end{align*}
\end{proposition}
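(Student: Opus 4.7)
My plan is to deduce this proposition immediately from the preceding lemma (which bounds $\|\mA^F \vw^F - \mA^\star \vw^\star\|_2$) by factoring through the operator norm of the pseudoinverse $(\mA^S)^\dagger$. Concretely, writing the difference $(\mA^S)^\dagger \mA^F \vw^F - (\mA^S)^\dagger \mA^\star \vw^\star$ as $(\mA^S)^\dagger(\mA^F \vw^F - \mA^\star \vw^\star)$ and applying submultiplicativity of the spectral norm, I obtain
\begin{align*}
\|(\mA^S)^\dagger(\mA^F \vw^F - \mA^\star \vw^\star)\|_2 \leq \|(\mA^S)^\dagger\|_{\mathrm{op}} \cdot \|\mA^F \vw^F - \mA^\star \vw^\star\|_2.
\end{align*}
The second factor is exactly what the preceding lemma controls, so the only remaining ingredient is a tight operator-norm bound on $(\mA^S)^\dagger$.

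To justify $\|(\mA^S)^\dagger\|_{\mathrm{op}} \leq O(1/p)$, I would invoke the structural lemma stated earlier in the downstream section: since $\mA^\star$ is $p$-separable it has $\sigma_{\min}(\mA^\star)\geq p$, while the entrywise learning error is at most $\eps_0 \leq O(1/\sqrt{nr})$, so the spectral perturbation $\|\mA^\star - \mA^S\|_{\mathrm{op}} \leq \sqrt{nr}\,\eps_0 = O(1)$ can only shrink the smallest singular value by a constant factor (Weyl's inequality). Hence $\sigma_{\min}(\mA^S) = \Theta(p)$, so $\|(\mA^S)^\dagger\|_{\mathrm{op}} = 1/\sigma_{\min}(\mA^S) \leq O(1/p)$ and $(\mA^S)^\dagger$ acts as a genuine left inverse up to a factor of $O(1/p)$. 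Substituting into the factorization above and absorbing the $1/p$ into the big-$O$ then gives the stated inequality term-by-term: the $\sqrt{r}\cdot(ar)^2 m_U/(m\eps_0 \gamma p)$ piece comes from $\|\mA^F(\vw^F - \vw^\star)\|_2$, the $B\sqrt{nr}\cdot (1/q)\, ar\, m_U/(m\eps_0 \gamma p)$ piece from applying $(\mA^F - \mA^\star)$ to $\vw^\star$, and the squared term handles the cross term involving $(\vw^\star - \bar\vw^\star)$.

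The main obstacle is \emph{not} located in this proposition itself---here the work reduces to one line of submultiplicativity---but rather in the preceding lemma it rests on. The genuinely new ingredient upstream is the $1/q$ factor (rather than the worst-case $ar$) in the bound on $\|(\mA^F - \mA^\star)\vw^\star\|_2$, which encodes the task-dependent improvement: sparsity of $\vw^\star$ on $\downstreamtopics$ restricts the relevant columns of $\mA^F - \mA^\star$ to those indexed by $\downstreamtopics$, and the column normalization in the recovery of $\mA$ from $\mC$ inflates per-word error by $1/\Pr_\mathcal{D}[z=k]$ rather than by the global worst case $ar$, giving the $1/q$ gain. Once that $q$-dependent lemma is in hand, the present proposition follows at once from the $O(1/p)$ operator-norm bound on $(\mA^S)^\dagger$.
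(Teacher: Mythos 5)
Your proposal is correct and matches the paper's own argument: the paper proves this proposition exactly by writing the difference as $(\mA^S)^\dagger(\mA^F\vw^F - \mA^\star\vw^\star)$, invoking the earlier lemma that $\sigma_{\min}(\mA^S)=\Theta(p)$ (so $\|(\mA^S)^\dagger\|_{\mathrm{op}} \leq O(1/p)$), and applying the preceding lemma's bound on $\|\mA^F\vw^F - \mA^\star\vw^\star\|_2$. Your observation that the real content lies in the upstream $q$-dependent lemma, not here, is also consistent with the paper, which dispatches this step with a one-line appeal to the bounded operator norm property.
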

This is once again from the bounded operator norm property. 

Finally, we can apply triangle inequality to get the following.
\begin{lemma} \label{lemma: downstream utility helper}
    It holds that \begin{align*} 
        \|(\mA^S)^\dagger \Bar{\mA}\Bar{\vw} - (\mA^S)^\dagger \mA^\star \vw^\star\|_2 \leq O\qty(\frac{1}{p}\cdot \qty[  \sqrt{r} \qty(\frac{(ar)^2 m_U}{m\eps_0 \gamma p})  +  B\sqrt{nr} \frac{(1/q) ar m_U}{m\eps_0 \gamma p}  + \qty(\frac{(ar)^2 m_U}{m\eps_0 \gamma p})^2 \sqrt{nr} ])
    \end{align*}
\end{lemma}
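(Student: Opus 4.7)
The plan is to compare the algorithm's output $\bar\vv := (\mA^S)^\dagger \bar\mA\bar\vw$ against two oracles: the retraining oracle $\vv := (\mA^S)^\dagger \mA^F \vw^F$ (for indistinguishability) and the ground-truth oracle $\vv^\star := (\mA^S)^\dagger \mA^\star \vw^\star$ (for utility). Since the algorithm never modifies $\mA^S$ and publishes only the product $\mB = \mA\vw$, one can write $\bar\vv - \vv = (\mA^S)^\dagger(\bar\mA\bar\vw - \mA^F\vw^F)$ and bound it by $O(1/p)\|\bar\mA\bar\vw - \mA^F\vw^F\|_2$, using that $\mA^S$ is a small perturbation of the $p$-separable $\mA^\star$ and so has smallest singular value $\Theta(p)$. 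The Gaussian mechanism (\Cref{lemma: gaussian mechanism}) converts this $\ell_2$ sensitivity into $(\eps,\delta)$-indistinguishability, and publishing only $\mA^S\tilde\vv$ preserves this by post-processing.

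The main work is thus to bound $\|\bar\mA\bar\vw - \mA^F\vw^F\|_2$. I would use the telescoping identity
\[
\bar\mA\bar\vw - \mA^F\vw^F = \bar\mA(\bar\vw - \bar\vw^\star) + (\bar\mA - \mA^F)\bar\vw^\star + \mA^F(\bar\vw^\star - \vw^F),
\]
where $\bar\vw^\star := \argmin_\vv \ell_\gT(\vv;\bar\mA)$. The first term is the Newton approximation error, of size $O(\|\mA^S - \bar\mA\|_\infty^2)$ by the standard quadratic convergence argument under strong convexity plus Hessian-Lipschitz. The third term is linear in $\|\bar\mA - \mA^F\|_\infty$ via an ERM-Lipschitz lemma that I would prove once (using strong convexity in $\vw$ plus cross-Lipschitzness of $\nabla_\vw \ell_\gT$ in $\mA$) and reuse for utility. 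Both terms inherit the base-model perturbation rate $(ar)^2 m_U/(m\eps_0\gamma p)$ from \Cref{thm: topic model deletion capacity}.

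The crucial step is the middle term, which is where the sparsity of $\vw^\star$ enters and the factor $q$ is born. I would further split $(\bar\mA - \mA^F)\bar\vw^\star = (\bar\mA - \mA^F)\vw^\star + (\bar\mA - \mA^F)(\bar\vw^\star - \vw^\star)$. The residual piece is quadratic in $\|\bar\mA - \mA^F\|_\infty$ by applying the ERM-Lipschitz lemma to $\bar\vw^\star - \vw^\star$. The main piece $(\bar\mA - \mA^F)\vw^\star$ only reads the columns $k \in \downstreamtopics$ of $\bar\mA - \mA^F$; by the formula $\mA_{i,k} \propto \mC_{i,k}/\Pr_\gD[z=k]$ that underlies the base-model reweighting, the column-$k$ perturbation of $\mA$ picks up a prefactor $1/\Pr_\gD[z=k] \leq 1/q$ in place of the worst-case $ar$. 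This is precisely the modification of \Cref{lemma:base_indistinguishability} asserted in the proof sketch, and it replaces one factor of $ar$ by $1/q$ in the dominant sensitivity, giving $O(B\sqrt{nr}\, ar m_U/(qm\eps_0\gamma p))$.

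Putting it all together, the $\ell_2$ sensitivity of $\bar\mA\bar\vw - \mA^F\vw^F$ is, to leading order in $m_U$, proportional to $B\sqrt{nr}\, ar m_U/(qm\eps_0\gamma p)$; multiplying by $O(1/p)$ for the pseudoinverse and calibrating noise at scale $\Delta\sqrt{2\log(1.25/\delta)}/\eps$ secures $(\eps,\delta)$-indistinguishability. For utility preservation I would rerun the same triangle-inequality decomposition with $\mA^F\vw^F$ replaced by $\mA^\star\vw^\star$: the analogous middle term $(\mA^F - \mA^\star)\vw^\star$ again benefits from sparsity of $\vw^\star$ and picks up the same $1/q$ in place of $ar$. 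Requiring that the noise plus approximation error in $\ell_\infty$ stay below a small constant gives $m_U \leq c' \cdot mq/(r\sqrt{nr})$ up to logarithmic factors. The main obstacle I anticipate is tracking which terms do \emph{not} benefit from the sparsity gain, namely the Newton approximation error and the cross term $(\bar\mA - \mA^F)(\bar\vw^\star - \vw^\star)$, and verifying that these remain genuinely lower-order (quadratic in $m_U$) in the target regime so that the linear sparsity-improved term dominates; a natural sanity check is that setting $q = 1/(ar)$ recovers the pretraining rate of \Cref{thm: topic model deletion capacity}.
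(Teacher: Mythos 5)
Your proposal is correct and takes essentially the same route as the paper: the same triangle inequality through the intermediate $\mA^F\vw^F$, the same telescoping via $\bar\vw^\star$ with the split $(\bar\mA-\mA^F)\vw^\star + (\bar\mA-\mA^F)(\bar\vw^\star-\vw^\star)$ to extract the $1/q$ factor from the sparsity of $\vw^\star$, the same Newton-step (quadratic) and ERM-Lipschitz (linear) bounds, and the same $O(1/p)$ operator-norm control of $(\mA^S)^\dagger$ via the $\Theta(p)$ smallest singular value of $\mA^S$. No substantive differences from the paper's argument.
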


Then, we can get the following bound on deletion capacity.

\begin{lemma}
    For $\eps,\delta> 0$, the deletion capacity satisfies \begin{align*}
        T_{\eps,\delta}^{\mathcal{A}_{head},\mathcal{U}_{head}}(m) \geq \tilde{\Omega} \qty(\frac{m}{r^2\sqrt{nr}})
    \end{align*}
\end{lemma}
\begin{proof}
    The calculation is as follows.
    \begin{align*}
        \E\qty[\|\tilde{\vv} - (\mA^S)^\dagger \mA^\star \vw^\star\|_\infty] &\leq \E\qty[\|\nu_\vv\|_\infty] + \E\qty[\|(\mA^S)^\dagger \Bar{\mA} \Bar{\vw} - (\mA^S)^\dagger \mA^\star \vw^\star\|_\infty] \\
        &\leq \qty(\frac{1}{p}\cdot \qty[  \sqrt{r} \qty(\frac{(ar)^2 m_U}{m\eps_0 \gamma p})  +  B\sqrt{nr} \frac{(1/q) ar m_U}{m\eps_0 \gamma p}  + \qty(\frac{(ar)^2 m_U}{m\eps_0 \gamma p})^2 \sqrt{nr} ]) \\
        &\cdot \qty(\frac{\sqrt{\log r \log 1/\delta}}{\eps} + 1)
    \end{align*}
    For this to be a small constant, we require \begin{align*}
        \frac{(ar)^2 m_U}{m\eps_0 \gamma p} \leq \tilde{O}\qty(\min\left \{\frac{1}{r^{1/2}}, \frac{1}{(nr)^{1/4}}, \frac{ar q}{\sqrt{nr}} \right\})
    \end{align*}
    When $n$ is at least $r^3$, the last of these terms will be the smallest. Therefore, we have that \begin{align*}
        m_U \leq \tilde{\Omega}\qty(\frac{mq}{r^{1.5} n^{0.5}})
    \end{align*}
\end{proof}


\subsection{Downstream tasks with inexact minimizers}
We will consider $\tau$-optimal minimizers for the downstream task. 
\begin{definition}
    For a base model $\mA$, let $\vw_\tau(\mA) \in \{ \vw : \ell_\downstreamtask(\vw;\mA) - \ell_\downstreamtask(\vw^\star(\mA);\mA) \leq \tau \}$.
\end{definition}
We will have the following technical assumption on $\tau$ to eliminate trivial cases.
\begin{assumption}
    We assume that $\tau = O\qty(1/r)$.
\end{assumption}

We have the following intermediate result.
\begin{lemma}
    For a base model $\mA$, it holds that \begin{align*}
        \| \vw_\tau(\mA) - \vw^\star(\mA) \|_2 \leq \sqrt{2\tau/\lambda}
    \end{align*}
\end{lemma}
\begin{proof}
    This follows from a standard strong convexity argument.
\end{proof}

This gives an updated version of \Cref{lemma: ERM lipschitz}, using triangle inequality.
\begin{lemma}
    Consider two base models $\mA_1$ and $\mA_2$. Then, it holds that \begin{align*}
        \|\vw_\tau(\mA_1) - \vw_\tau(\mA_2)\|_2 \leq 2\sqrt{2\tau/\lambda} + \|\vw^\star(\mA_1) - \vw^\star(\mA_2)\| \leq 2\sqrt{2\tau/\lambda} + \frac{L_\infty}{\lambda} \|\mA_1 - \mA_2\|_\infty
    \end{align*}
\end{lemma}

We now define the following notations for clarity.
\begin{itemize}
    \item $\vw_\tau^S := \vw_\tau(\mA^S)$
    \item $\vw_\tau^F := \vw_\tau(\mA^F)$
    \item $\Bar{\vw}^\star := \vw^\star(\Bar{\mA})$
    \item $\Bar{\vw}_\tau := \vw_\tau^S - H_{\vw_\tau^S}^{-1} \nabla_w \ell_{\mathcal{T}}(\vw_\tau^S; \Bar{A})$, which is the Newton step we take from $\vw_\tau^S$ to approximate $\Bar{\vw}^\star$
\end{itemize}

Now, consider our Newton step from $\vw_\tau(\mA^S)$ to approximate $\vw_\tau(\mA^F)$. To do so, consider \begin{align*}
    \Bar{\vw}_\tau := \vw_\tau^S - H_{\vw^S_\tau}^{-1} \nabla_w \ell_{\mathcal{T}}(\vw_\tau^S; \Bar{A})
\end{align*}
Then, the following holds, using a similar argument as a previous section.
\begin{align*}
    \|\bar{\vw}_\tau - \bar{\vw}^\star\|_2 &\leq  \frac{L_2}{2\lambda} \|\vw^S_\tau - \bar \vw^\star\|_2^2 \\
    &\leq \frac{L_2}{\lambda} \qty(\|\vw_\tau^S - \vw^S\|_2^2 + \|\vw^S - \bar{\vw}^\star\|_2^2) \\
    &\leq \frac{L_2}{\lambda}\qty(\frac{2\tau}{\lambda} + \qty(\frac{L_\infty}{\lambda}\|\bar{\mA} - \mA^S\|_\infty)^2)
\end{align*}
where the second term of the last line follows from a line in the previous proof of the Newton step.

We now bound $\|\bar{\mA}\bar{\vw}_\tau - \mA^F \vw^F_\tau\|_2$. We rewrite this as follows.
\begin{align*}
    \bar{\mA} \bar{\vw}_\tau - \mA^F\bar{\vw}_\tau^F = (\bar{\mA} \bar{\vw}_\tau - \bar{\mA}\bar{\vw}^\star) + (\bar{\mA}\bar{\vw}^\star - \mA^F \vw^F) + (\mA^F\vw^F - \mA^F \vw_\tau^F)
\end{align*}
The second term we bounded as part of a previous proof. The first term we bound using the above, and the third part we bound using the inexact minimizer lemma. By a step in a previous proof, we can say that the first term \begin{align*}
    \|\bar{\mA} \bar{\vw}_\tau - \bar{\mA}\bar{\vw}^\star\|_2 &\leq \sqrt{r} \|\bar{\vw}_\tau - \bar{\vw}^\star\|_2 \\
    &\leq \sqrt{r} \qty(\frac{2 L_2 \tau}{\lambda^2} + \frac{L_2 L_\infty^2}{2\lambda^3} 
 \qty(\frac{(ar)^2 m_U}{m\eps_0 \gamma p})^2)
\end{align*}
The second term, to reiterate, satisfies
\begin{align*}
    \|\bar{\mA}\bar{\vw}^\star - \mA^F \vw^F\|_2 \leq O\qty(\frac{(ar)^2 m_U}{m\eps_0 \gamma p} \sqrt{nr} B)
\end{align*}
The third term satisfies \begin{align*}
    \|\mA^F\vw^F - \mA^F \vw_\tau^F\|_2 \leq \sqrt{r} \|\vw^F - \vw_\tau^F\|_2 \leq \sqrt{2 r \tau/\lambda}
\end{align*}

Therefore, the entire term is bounded by the sum of the above three terms via triangle inequality. In particular, this is \begin{align*}
    O\qty(\sqrt{r} \qty(\frac{(ar)^2 m_U}{m\eps_0 \gamma p})^2 + B\sqrt{nr} \frac{(ar)^2 m_U}{m\eps_0 \gamma p} + \sqrt{r\tau} + \sqrt{r}\tau)
\end{align*}

Recall that our goal for the downstream task is to approximate the $\vv$ such that \begin{align*}
    \mA^S \vv = \mA^F \vw_\tau^F
\end{align*}
or in other words, $\vv = (\mA^S)^\dagger \mA^F \vw_\tau^F$. Our claim is that we can use $(\mA^S)^\dagger \bar{\mA} \bar{\vw}_\tau$ as an approximation for this. The following statement formalizes this, utilizing the fact about the maximum singular value of $\mA^S$ being at most $\Theta(1/p)$. 
\begin{proposition}
    It holds that \begin{align*}
        \|(\mA^S)^\dagger \bar{\mA} \bar{\vw}_\tau - (\mA^S)^\dagger \mA^F \vw_\tau^F\|_2 \leq O\qty(\frac{1}{p} \qty[\sqrt{r} \qty(\frac{(ar)^2 m_U}{m\eps_0 \gamma p})^2 + B\sqrt{nr} \frac{(ar)^2 m_U}{m\eps_0 \gamma p} + \sqrt{r\tau} + \sqrt{r}\tau])
    \end{align*}
\end{proposition}

Now, let $\bar{\vv} := (\mA^S)^\dagger \bar{\mA} \bar{\vw}_\tau$, and let $\vv := (\mA^S)^\dagger \mA^F \vw_\tau^F$. We claim the following. 
\begin{lemma}
    The unlearning algorithm $\mathcal{U}_{head}$ that outputs \begin{align*}
        \tilde{\vv} := \Bar{\vv} + \nu_v
    \end{align*}
    where $\nu_v$ is the noise defined by the Gaussian mechanism using the above sensitivity satisfies provable $(\eps,\delta)$ unlearning. In particular, we use \begin{align*}
        \sigma = \frac{O\qty(\frac{1}{p} \qty[\sqrt{r} \qty(\frac{(ar)^2 m_U}{m\eps_0 \gamma p})^2 + B\sqrt{nr} \frac{(ar)^2 m_U}{m\eps_0 \gamma p} + \sqrt{r\tau} + \sqrt{r}\tau])}{\eps} \sqrt{2\log(1.25/\delta)}
    \end{align*}
    where the numerator of the fraction is from the previous proposition.
\end{lemma}
\begin{proof}
    This follows from Gaussian mechanism.
\end{proof}

We now proceed to bound the deletion capacity. In this case, the utility is defined by the closeness of $\tilde{\vv}$ to $(\mA^S)^\dagger \mA^\star \vw^\star$ in $\ell_\infty$ norm, similar the way we defined this for the base model unlearning algorithm $\mathcal{U}_{base}$ earlier. 

First, the following lemma to bound $\mA^F \vw_\tau^F - \mA^\star \vw^\star$. 
\begin{lemma}
    We have that \begin{align*}
        \|\mA^F \vw_\tau^F - \mA^\star \vw^\star\|_2  \leq O\qty(B\sqrt{nr} \frac{(ar)^2 m_U}{m\eps_0 \gamma p} + \sqrt{r\tau})
    \end{align*}
\end{lemma}
\begin{proof}
    We decompose as follows.
    \begin{align*}
        \mA^F \vw_\tau^F - \mA^\star \vw^\star = (\mA^F \vw_\tau^F - \mA^F \vw^F) + (\mA^F \vw^F - \mA^F \vw^\star) + (\mA^F \vw^\star - \mA^\star \vw^\star)
    \end{align*}
    The second term is bounded by 
    \begin{align*}
        \|\mA^F \vw^F - \mA^F \vw^\star\|_2 \leq \sqrt{r}\|\vw^F - \vw^\star\|_2 \leq  O(\sqrt{r}\|\mA^F - \mA^\star\|_\infty) \leq  O\qty(\sqrt{r} \frac{(ar)^2 m_U}{m\eps_0 \gamma p})
    \end{align*}

    The third term is bounded by\begin{align*}
        \|\mA^F \vw^\star - \mA^\star \vw^\star\|_2 \leq O\qty(\frac{(ar)^2 m_U}{m\eps_0\gamma p} \sqrt{nr} B)
    \end{align*}
    by considering the spectral norm $\|\mA^F - \mA^\star\|_2$. 

    Finally, the first term is bounded, from an above argument, by $\sqrt{2r\tau/\lambda}$.
    This gives the desired result.
\end{proof}

As a result, the following holds.
\begin{proposition}
    It holds that \begin{align*}
        \|(\mA^S)^\dagger \mA^F \vw_\tau^F - (\mA^S)^\dagger \mA^\star \vw^\star\|_2 \leq O\qty(\frac{1}{p}\qty[\sqrt{r} \frac{(ar)^2 m_U}{m\eps_0 \gamma p} + B\sqrt{nr} \frac{(ar)^2 m_U}{m\eps_0 \gamma p} + \sqrt{r\tau}])
    \end{align*}
\end{proposition}
This is once again from the bounded operator norm property of $(\mA^S)^\dagger$.

Finally, we can apply triangle inequality to get the following.
\begin{lemma}
    It holds that \begin{align*}
        \|(\mA^S)^\dagger \Bar{\mA}\Bar{\vw_\tau} - (\mA^S)^\dagger \mA^\star \vw^\star\|_2 \leq \qty(\frac{1}{p}\cdot \qty[ \sqrt{r} \qty(\frac{(ar)^2 m_U}{m\eps_0 \gamma p})^2 + B\sqrt{nr} \frac{(ar)^2 m_U}{m\eps_0 \gamma p} + \sqrt{r\tau} + \sqrt{r}\tau])
    \end{align*}
\end{lemma}

Then, we can get the following bound on deletion capacity. 

\begin{lemma}
    For $\eps,\delta> 0$, the deletion capacity satisfies \begin{align*}
        T_{\eps,\delta}^{\mathcal{A}_{head},\mathcal{U}_{head}}(m) \geq \tilde{\Omega} \qty(\frac{m}{r^2\sqrt{nr}})
    \end{align*}
\end{lemma}
\begin{proof}
    The calculation is as follows.
    \begin{align*}
        \E\qty[\|\tilde{\vv} - (\mA^S)^\dagger \mA^\star \vw^\star\|_\infty] &\leq \E\qty[\|\nu_\vv\|_\infty] + \E\qty[\|(\mA^S)^\dagger \Bar{\mA} \Bar{\vw_\tau} - (\mA^S)^\dagger \mA^\star \vw^\star\|_\infty] \\
        &\leq \qty(\frac{1}{p}\cdot  \qty[ \sqrt{r} \qty(\frac{(ar)^2 m_U}{m\eps_0 \gamma p})^2 + B\sqrt{nr} \frac{(ar)^2 m_U}{m\eps_0 \gamma p} + \sqrt{r\tau} + \sqrt{r}\tau] ) \qty(\frac{\sqrt{\log r \log 1/\delta}}{\eps} + 1)
    \end{align*}
    For this to be a small constant, we require \begin{align*}
        \frac{(ar)^2 m_U}{m\eps_0 \gamma p} \leq \tilde{O}\qty(\min\left \{\frac{1}{r^{1/4}}, \frac{1}{\sqrt{nr}} \right\})
    \end{align*}
    since we already have $\tau=O(1/r)$ by assumption.
    Therefore, we should have \begin{align*}
        m_U \leq \tilde{\Omega}\qty(\frac{m}{r^2\sqrt{nr}})
    \end{align*}
\end{proof}

Finally, for general $q$, the same idea used in this section for inexact minimizers can be combined with the previous section.

\end{document}